\newtheorem{assumption}{Assumption}
\newtheorem{corollary}{Corollary}
\newtheorem{definition}{Definition}
\newtheorem{proposition}{Proposition}
\newtheorem{lemma}{Lemma}
\newtheorem{theorem}{Theorem}
\title{Multimodal Bandits: Regret Lower Bounds and Optimal Algorithms}
\author{
  William R\'eveillard \\
  Division of Decision and Control Systems \\
  KTH Royal Institute of Technology \\
  11428 Stockholm, Sweden \\
  \texttt{wilrev@kth.se}
  \And
  Richard Combes \\
   Laboratoire des signaux et syst\`emes \\
  Universit\'e Paris-Saclay, CNRS, CentraleSup\'elec  \\
  91190 Gif-sur-Yvette, France \\
  \texttt{richard.combes@centralesupelec.fr}
}
\newcommand{\indic}{\mathbbm{1}}
\newcommand{\NN}{\mathbb{N}}
\newcommand{\RR}{\mathbb{R}}
\newcommand{\cX}{\mathcal{X}}
\newcommand{\cC}{\mathcal{C}}
\newcommand{\diam}{{\operatorname{diam}}}
\newcommand{\sign}{{\operatorname{sgn}}}
\newlength{\levlength}\setlength{\levlength}{0.5cm}
\pgfplotsset{compat=1.17}
\definecolor{boxA}{RGB}{232,242,255}    
\definecolor{boxB}{RGB}{255,243,224}    
\definecolor{boxC}{RGB}{232,255,236}    
\definecolor{arrowfwd}{RGB}{20,110,190}
\definecolor{arrowback}{RGB}{217,83,79}
\definecolor{arrowiter}{RGB}{85,85,85}
\definecolor{containerbg}{RGB}{245,245,245}
\begin{document}
\maketitle
\begin{abstract}
We consider a stochastic multi-armed bandit problem with i.i.d. rewards where the expected reward function is multimodal with at most $m$ modes. We propose the first known computationally tractable algorithm for computing the solution to the Graves-Lai optimization problem, which in turn enables the implementation of asymptotically optimal algorithms for this bandit problem. 
\end{abstract}

\section{Introduction}

We consider a stochastic multi-armed bandit with $K \ge 1$ arms. At time $t \in [T]$, with $T \ge 1$, based on her previous observations, a learner selects an arm $k(t)$ in $[K]=\{1,\dots,K\}$, and subsequently observes a random reward $X_{k(t),t}$. The successive rewards $(X_{k,t})_{t \in \NN}$ obtained when sampling a given arm $k \in [K]$ are drawn i.i.d. from a family of distributions $\nu_k(\mu_k)$ parameterized by their expectation $\mu_k$. The vector of mean rewards\footnote{With a slight abuse of notation, we also refer to $\boldsymbol{\mu}$ as the reward function of the bandit problem.} $\boldsymbol{\mu} = (\mu_k)_{k \in [K]}$ is unknown to the learner. The learner's goal is to select arms in order to discover the optimal arm $k^\star(\boldsymbol{\mu}) = \arg\max_{k \in [K]} \mu_k$. More precisely, the learner aims at minimizing the regret
$$
	R(\boldsymbol{\mu},T) = T (\max_{k \in [K]} \mu_k) - \sum_{t \in [T]} \mu_{k(t)}
$$
which is the expected difference between the total reward obtained by an oracle who knows $\boldsymbol{\mu}$ in advance and always chooses the arm with the largest mean reward, and the total reward obtained by the learner. If we were to assume that $\boldsymbol{\mu}$ is arbitrary, then the problem at hand would reduce to the classical stochastic multi-armed bandit. Here we consider a \emph{structured} stochastic multi-armed bandit, where the structure is encoded by a graph $G$.  

	We consider a graph $G = (V,E)$ whose vertices are the arms $V = [K]$, and we say that arms $k$ and $\ell$ are neighbors if and only if $(k,\ell) \in E$. Arm $k$ is a mode of $\boldsymbol{\mu}$ with respect to $G$ if and only if it has a strictly greater reward than all of its neighbors: 
	$
		\mu_k > \max_{\ell:(k,\ell) \in E} \mu_{\ell} 
	$
	and we say that $\boldsymbol{\mu}$ is $m$-modal with respect to $G$ if it has $m$ modes with respect to $G$. 

In this work, we assume that $\boldsymbol{\mu}$ is at most $m$-modal with respect to a graph $G$ known to the learner. We emphasize that, while $G$ is known to the learner, $\boldsymbol{\mu}$ is not, so that finding the optimal arm requires sampling from suboptimal arms repeatedly. We note that for $m=1$, a $1$-modal vector is simply a unimodal vector, thus the problem reduces to unimodal bandits \citep{herkenrath1983}. If $G$ is a line graph, a mode is an arm whose reward is greater than that of its left and right neighbors. We will assume that $G$ is a tree.

\section{Related work and contribution}

In the absence of a multimodal structure, our problem reduces to the classical multi-armed bandit studied by~\cite{lai1985}, and several asymptotically optimal algorithms are known for this problem, such as KL-UCB, proposed by~\cite{garivier2011}, DMED, proposed by~\cite{honda2010} and Thompson Sampling, as analyzed by~\cite{kaufmann12a}. 

When adding a multimodal structure with $m=1$ mode, we obtain the unimodal bandit problem, originally studied by \cite{herkenrath1983} and revisited by~\cite{yu2011}. Several asymptotically optimal algorithms have been proposed for this problem including the KL-UCB style algorithm of  \cite{CombesUnimodalBandit2014}, the Thompson Sampling style algorithm of \cite{CombesB2020} and the DMED style algorithm of \cite{saber2021}. A common feature of these algorithms is that they are all based on \emph{local search}, where only arms that are neighbors of the optimal arm are selected a logarithmic amount of time. Local search is necessary for asymptotic optimality in unimodal bandits because the strategy that minimizes the Graves-Lai bound, as introduced by~\cite{graves1997}, is local. 

Multimodal bandits with $m \ge 1$ modes generalize unimodal bandits, and have been considered in \cite{saber2022structure} and \cite{saber2024bandits}. These works explored local search strategies, which, as we shall see, are not necessarily asymptotically optimal. The main motivation behind multimodal bandits is the fact that many objective functions encountered in applications are not convex nor unimodal, such as the empirical risk of deep neural networks. Methods for optimizing or sampling (which are closely related) multimodal functions  include bayesian methods studied by~\cite{contal2016statistical} and MCMC methods considered by~\cite{lee2019mcmc}. Multimodal functions have also been considered in an \emph{active learning} setting by~\cite{myers2022}. 
An interesting application of bandit problems with multimodal rewards is pricing (\cite{misra2019,Wang2021-ms}).

For structured bandits, the Graves-Lai bound is an information-theoretic regret lower bound, stated as an optimization problem. Its optimal solution identifies strategies for \emph{optimal exploration}, i.e., the rate at which suboptimal arms must be selected to ensure minimal regret. Several asymptotically optimal algorithms with regret matching the Graves-Lai bound have been proposed by \cite{graves1997}, \cite{combes2017}, \cite{degenne2020}, \cite{vanparys2023}. The main advantage of these algorithms is their \emph{universality} in the sense that they apply to all structured bandits.

While the above algorithms are indeed universally asymptotically optimal, they often pose a tremendous computational challenge, because they must solve the Graves-Lai optimization problem. In some simple structures, the Graves-Lai optimization problem admits a closed-form solution, for instance: classical bandits (\cite{lai1985}), unimodal bandits (\cite{CombesUnimodalBandit2014}), dueling bandits (\cite{komiyama2015regret}) to name a few. For some other structures such as combinatorial bandits (\cite{CombesA2021}), the Graves-Lai optimization problem can be solved with efficient iterative algorithms. For multimodal bandits, solving the Graves-Lai optimization problem is challenging, as we shall see, primarily due to the highly non-convex nature of its constraint set.

\textbf{Our contribution.} In this work, we provide the first known computationally tractable  algorithm to solve the Graves-Lai optimization problem for multimodal bandits. The algorithm is involved and uses a combination of discretization, dynamic programming and projected subgradient descent in order to navigate the intricate structure of the constraint set. The algorithm applies to a wide variety of reward distributions, and any tree graph. We further demonstrate that local search strategies are suboptimal, which means that solving the Graves-Lai problem is unavoidable for optimality. The code for the proposed algorithms, which are involved, is publicly available at \url{https://github.com/wilrev/MultimodalBandits}.

\section{Asymptotically optimal algorithms for multimodal bandits}\label{sec:asymptotically-optimal}
In this section, we state our problem assumptions, present the Graves-Lai lower bound specialized to the case of multimodal bandits, and recall how solving the Graves-Lai optimization problem enables one to design asymptotically optimal algorithms, i.e., with regret matching the Graves-Lai lower bound.
\paragraph{Notation.} To ease exposition, we use the following notation. All vectors are represented with bold symbols. We denote by $\boldsymbol{e}^{(k)} \in \RR^{K}$ the $k$-th canonical basis vector of $\RR^{K}$, and by $\Vert \boldsymbol{x} \Vert_{p}= (\sum_{k \in [K]} \vert x \vert^p)^{1/p}$ the $L_{p}$ norm. The closure of a set $S \subset \mathbb{R}^K$ is denoted by $\overline{S}.$
We denote $\mu^\star = \max_{k \in [K]} \mu_k$ and $\mu_{\star}= \min_{k \in [K]} \mu_{k}$ the maximum and minimum mean reward, respectively. We assume that the optimal arm $k^{\star}(\boldsymbol{\mu})=\arg\max_{k \in [K]} \mu_k$ is unique. We define the vector of gaps $\boldsymbol{\Delta} = (\mu^\star - \mu_k )_{k \in [K]}$ and the minimal gap $\Delta_{\min} = \min_{k \in [K]:\Delta_k > 0} \Delta_k$.

For a given tree $G=(V,E)$ with $V=[K]$, we denote by $\diam(G)$ its diameter and $\deg(G)$ its maximal degree. $\mathcal{M}(\boldsymbol{\mu})$ is the set of modes of $\boldsymbol{\mu}$ with respect to $G$, so that $\boldsymbol{\mu}$ is $m$-modal if $\vert \mathcal{M}(\boldsymbol{\mu}) \vert = m$, and $\mathcal{N}(\boldsymbol{\mu})$ is the set of modes and neighbors of modes of $\boldsymbol{\mu}$. We define $\mathcal{F}_{\le m}$ (resp. $\mathcal{F}_{m}$) the set of reward functions of $\mathbb{R}^{K}$ with at most $m$ modes (resp. exactly $m$ modes) with respect to $G$. We assume that $\boldsymbol{\mu} \in \mathcal{F}_{\le m}$ for some $m > 1$, and  that $m$ is \emph{known} to the learner.

Finally, we sometimes consider the tree $G$ to be directed. Then, for a given arm $k \in [K]$, we denote by $\mathcal{C}(k)$ the set of children of $k$, $\mathcal{D}(k)$ the set of descendants of $k$, $p(k)$ the parent of $k$ and $p^2(k)$ the grandparent of $k$ (i.e., the parent of $p(k)$). 

\paragraph{Assumptions on reward distributions.} The rewards from arm $k \in [K]$ form an i.i.d. sample from distribution $\nu_k(\mu_k)$. Let
$
	d_k(\mu_k,\lambda_k) = D(\nu_k(\mu_k) \parallel \nu_k(\lambda_k) )
$
denote the relative entropy between the rewards distributions of arm $k$ under parameters $\mu_k$ and $\lambda_k$. For $\boldsymbol{\mu},\boldsymbol{\lambda} \in \RR^K$ we use the vectorized notation $d(\boldsymbol{\mu},\boldsymbol{\lambda}) = (d_k(\mu_k,\lambda_k))_{k \in [K]}$. We make two assumptions regarding these relative entropies. 

\begin{assumption}\label{assump:unimodal}
    For all $\boldsymbol{\mu} \in \RR^K$ and $k \in [K]$, $\lambda_k \mapsto d_k(\mu_k,\lambda_k)$ is strictly decreasing for $\lambda_k < \mu_k$ and strictly increasing for $\lambda_k > \mu_k$.
\end{assumption}

\begin{assumption}\label{assump:lipschitz}
    For all $k \in [K]$, $\boldsymbol{\mu} \in \RR^K$ and $\boldsymbol{\lambda},\boldsymbol{\lambda}'$ in $[\mu_{\star},\mu^\star]^K$ we have
$
	\Vert d(\boldsymbol{\mu},\boldsymbol{\lambda}) - d(\boldsymbol{\mu},\boldsymbol{\lambda}') \Vert_{1}  \le  \mathfrak{A}(\boldsymbol{\mu})\Vert \boldsymbol{\lambda}- \boldsymbol{\lambda}' \Vert_{1}
$
where $\mathfrak{A}(\boldsymbol{\mu}) \ge 0$ can be understood as the Lipschitz constant of the relative entropy when its first argument is held constant.
\end{assumption} The first assumption boils down to the relative entropy $d_k$ being unimodal, and its unique mode being the minimizer $\lambda_k = \mu_k$. The second assumption is satisfied whenever the divergence is continuously differentiable over $[\mu_{\star},\mu^\star]^K$. For example, when $\nu_k(\mu_k)=\mathcal{N}(0,1)$, it holds with $\mathfrak{A}(\boldsymbol{\mu})=\mu^\star-\mu_\star$.
\paragraph{Regret lower bound.}

Proposition~\ref{prop:graves_lai} states that the asymptotic regret of any uniformly good algorithm (i.e., whose regret scales subpolynomially on all problem instances) must be lower bounded by the value of the Graves-Lai optimization problem multiplied by $\ln{T}$. We denote this optimization problem by \ref{eq:PGL}.

\begin{proposition}\label{prop:graves_lai}
Consider an algorithm such that $\lim_{T \to \infty} \frac{R(\boldsymbol{\mu},T)}{T^{\alpha}} = 0$ for all $ \alpha > 0$ and all $\boldsymbol{\mu} \in \mathcal{F}_{\le m}$. Then its asymptotic regret is lower bounded as
$
	\lim \inf_{T \to \infty} \frac{R(\boldsymbol{\mu},T)}{\ln{T}} \ge C(m,\boldsymbol{\mu}) \text{ for all } \boldsymbol{\mu} \in \mathcal{F}_{\le m}
$
where $C(m,\boldsymbol{\mu})$ is the value of:
\begin{align*}\label{eq:PGL}\tag{$P_{GL}$}
\text{minimize}_{\boldsymbol{\eta}} \; \boldsymbol{\eta}^\top \boldsymbol{\Delta} \text{ subject to } \inf_{\boldsymbol{\lambda} \in \mathcal{B}(m,\boldsymbol{\mu})} \boldsymbol{\eta}^\top d(\boldsymbol{\mu},\boldsymbol{\lambda}) \ge 1 \text{ and } \boldsymbol{\eta} \ge 0 \quad \quad\\
\text{with } \mathcal{B}(m,\boldsymbol{\mu}) = \{ \boldsymbol{\lambda} \in \mathcal{F}_{\le m}, \lambda_{k^\star(\boldsymbol{\mu})} = \mu^\star, k^{\star}(\boldsymbol{\lambda}) \ne k^{\star}(\boldsymbol{\mu})\}.
\end{align*}
\end{proposition}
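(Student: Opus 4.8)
The plan is to follow the now-classical change-of-measure argument that underlies all Graves–Lai type lower bounds, adapted to the constraint set $\mathcal{B}(m,\boldsymbol\mu)$ defined by the multimodal structure. First I would fix a uniformly good algorithm and a problem instance $\boldsymbol\mu\in\mathcal{F}_{\le m}$, and let $N_k(T)=\sum_{t\in[T]}\indic\{k(t)=k\}$ denote the number of pulls of arm $k$ up to time $T$. The regret decomposes exactly as $R(\boldsymbol\mu,T)=\sum_{k\in[K]}\Delta_k\,\EE_{\boldsymbol\mu}[N_k(T)]$, so it suffices to lower bound a suitable weighted sum of the expected pull counts. The key tool is the standard log-likelihood-ratio / entropy inequality (as in \cite{lai1985}, \cite{garivier2011}): for any alternative instance $\boldsymbol\lambda$ with $k^\star(\boldsymbol\lambda)\neq k^\star(\boldsymbol\mu)$, and using the fact that a uniformly good algorithm must place $o(T^\alpha)$ regret on \emph{both} $\boldsymbol\mu$ and $\boldsymbol\lambda$, one gets $\sum_{k\in[K]}\EE_{\boldsymbol\mu}[N_k(T)]\,d_k(\mu_k,\lambda_k)\ge (1-o(1))\ln T$. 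Here it is crucial that the chosen $\boldsymbol\lambda$ lies in $\mathcal{F}_{\le m}$, i.e.\ is itself a valid instance of the multimodal problem, which is exactly why the infimum in \ref{eq:PGL} is taken over $\mathcal{B}(m,\boldsymbol\mu)$ rather than over all of $\RR^K$; restricting to multimodal alternatives with $\lambda_{k^\star(\boldsymbol\mu)}=\mu^\star$ and a different optimal arm gives a valid — and the tightest available — family of perturbations.

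Next I would set $\boldsymbol\eta^{(T)} = \EE_{\boldsymbol\mu}[\boldsymbol N(T)]/\ln T$ and observe that the above yields, after taking the infimum over $\boldsymbol\lambda\in\mathcal{B}(m,\boldsymbol\mu)$, the asymptotic feasibility relation $\liminf_{T\to\infty}\inf_{\boldsymbol\lambda\in\mathcal{B}(m,\boldsymbol\mu)}(\boldsymbol\eta^{(T)})^\top d(\boldsymbol\mu,\boldsymbol\lambda)\ge 1$, together with $\boldsymbol\eta^{(T)}\ge 0$. Since $R(\boldsymbol\mu,T)/\ln T = (\boldsymbol\eta^{(T)})^\top\boldsymbol\Delta$, the liminf of the normalized regret is the liminf of the objective of \ref{eq:PGL} evaluated along a sequence of (asymptotically) feasible points, hence bounded below by the optimal value $C(m,\boldsymbol\mu)$. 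To make this rigorous I would extract a subsequence along which $R(\boldsymbol\mu,T)/\ln T$ converges to its liminf; if that liminf is $+\infty$ there is nothing to prove, and otherwise $(\boldsymbol\eta^{(T)})$ is bounded along the subsequence (because $\Delta_k>0$ for $k\neq k^\star$, so bounded objective controls all coordinates except possibly $\eta_{k^\star}$, which is irrelevant since $\Delta_{k^\star}=0$ and can be dropped), so a further subsequence converges to some $\boldsymbol\eta^\infty$; one then checks $\boldsymbol\eta^\infty$ is feasible for \ref{eq:PGL} — this is where Assumption~\ref{assump:lipschitz} is convenient, since uniform Lipschitzness of $d(\boldsymbol\mu,\cdot)$ on $[\mu_\star,\mu^\star]^K$ lets the infimum over $\boldsymbol\lambda$ pass to the limit — and concludes $\liminf R(\boldsymbol\mu,T)/\ln T \ge (\boldsymbol\eta^\infty)^\top\boldsymbol\Delta \ge C(m,\boldsymbol\mu)$.

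A technical point worth handling carefully: the transportation/entropy inequality in the form above requires the alternative $\boldsymbol\lambda$ to agree with $\boldsymbol\mu$ on coordinates we do not wish to pay for, or more precisely only contributes $d_k(\mu_k,\lambda_k)$ on each $k$; the constraint $\lambda_{k^\star(\boldsymbol\mu)}=\mu^\star$ in $\mathcal{B}(m,\boldsymbol\mu)$ ensures no cost is charged on the original optimal arm, which matches the standard construction. One must also restrict attention to $\boldsymbol\lambda$ in the bounded box $[\mu_\star,\mu^\star]^K$ so that Assumption~\ref{assump:lipschitz} applies and so that the divergences are finite; this is without loss of generality because moving any coordinate of $\boldsymbol\lambda$ outside $[\mu_\star,\mu^\star]$ only increases the relevant divergence (by Assumption~\ref{assump:unimodal}) while the relevant structural constraints can still be met, so the infimum over $\mathcal{B}(m,\boldsymbol\mu)$ is attained (or approached) inside the box.

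The main obstacle I anticipate is not the change-of-measure step itself — that is routine — but rather the verification that the constraint set $\mathcal{B}(m,\boldsymbol\mu)$ is rich enough to be used, i.e.\ that it is nonempty and that for \emph{every} alternative optimal arm one can find a multimodal perturbation; and, on the limiting side, the interchange of the infimum over the non-convex, non-closed set $\mathcal{B}(m,\boldsymbol\mu)$ with the limit in $T$. Because $\mathcal{B}(m,\boldsymbol\mu)$ is not closed (it forbids ties for the optimal arm), some care is needed: I would argue using $\overline{\mathcal{B}(m,\boldsymbol\mu)}$ and note that $\inf$ over a set equals $\inf$ over its closure for a continuous function, so replacing $\mathcal{B}$ by its closure does not change the value of \ref{eq:PGL}, which sidesteps the lack of closedness. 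Modulo these points, the proof is a direct specialization of \cite{graves1997} to the family $\mathcal{F}_{\le m}$, and I would likely cite that work for the generic machinery and only spell out the instantiation of the constraint set here.
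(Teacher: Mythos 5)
Your proposal is correct and ultimately takes the same route as the paper: the paper's proof is just a two-line citation of Theorem 1 in Combes et al. (2017) applied to the family $\mathcal{F}_{\le m}$, followed by the observation that Assumption~\ref{assump:unimodal} lets one rewrite the generic indistinguishability condition $d_{k^\star(\boldsymbol{\mu})}(\mu^\star,\lambda_{k^\star(\boldsymbol{\mu})})=0$ as $\lambda_{k^\star(\boldsymbol{\mu})}=\mu^\star$, yielding exactly $\mathcal{B}(m,\boldsymbol{\mu})$. Your longer sketch simply unpacks the internals of that cited theorem (change of measure, passage to the limit, closure of the constraint set), and your concluding remark that you would cite the generic machinery and only spell out the instantiation of the constraint set is precisely what the paper does.
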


\ref{eq:PGL} is a semi-infinite linear program. There are infinitely many constraints, and these constraints are described by $\mathcal{B}(m,\boldsymbol{\mu})$, which is the set of \emph{confusing parameters}. $\boldsymbol{\lambda} \in \mathcal{B}(m,\boldsymbol{\mu})$ is \emph{confusing} to the learner because it cannot be distinguished from $\boldsymbol{\mu}$ by selecting the optimal arm $k^\star(\boldsymbol{\mu})$, thereby forcing the learner to explore suboptimal arms. In particular, for a fixed $\boldsymbol{\eta} \ge 0$, we call \emph{most confusing parameter} the minimizer $\boldsymbol{\lambda}^{\star}$ of $\boldsymbol{\lambda} \mapsto \boldsymbol{\eta}^{\top}d(\boldsymbol{\mu},\boldsymbol{\lambda})$ over $\overline{\mathcal{B}}(m,\boldsymbol{\mu})$. The set $\mathcal{B}(m,\boldsymbol{\mu})$ has a complicated structure, which is why solving \ref{eq:PGL} is non-trivial. Proposition~\ref{prop:graves_lai} is a direct consequence of the general bound of Theorem 1 in~\cite{graves1997} or alternatively the simpler lower bound of Theorem 1 in ~\cite{combes2017}.

\paragraph{Asymptotically optimal algorithms.}
We recall that, if \ref{eq:PGL} can be solved, then there exists a wide variety of algorithms that are asymptotically optimal, such as those presented by 
\cite{graves1997}, \cite{combes2017}, \cite{degenne2020} and \cite{vanparys2023}. All these algorithms attempt to select arm $k \neq k^{\star}(\boldsymbol{\mu})$ a number of times $\eta_k^\star \ln T + o(\ln T)$ when $T \to \infty$, where $\boldsymbol{\eta}^\star$ is a solution to \ref{eq:PGL}. The only requirement is that one is able to solve \ref{eq:PGL} several times in order to decide which arm to explore. 

\begin{theorem}[Theorem 2 in \cite{combes2017}]
Consider Gaussian rewards with variance one and assume that for any $\boldsymbol{\mu} \in \mathcal{F}_{\le m},$ a solution to the Graves-Lai problem can be computed. Then, the \texttt{OSSB} algorithm with parameters $\varepsilon=\gamma=0$ is such that for all $\boldsymbol{\mu} \in \mathcal{F}_{\le m},$
$\displaystyle \lim \sup_{T \to \infty} \frac{R(\boldsymbol{\mu},T)}{\ln{T}} \le C(m,\boldsymbol{\mu}).$
    \end{theorem}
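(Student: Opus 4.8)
The plan is to deduce the statement directly from the general asymptotic optimality guarantee for \texttt{OSSB}, which is Theorem 2 in \cite{combes2017}: that result shows that for an arbitrary structured bandit, the regret of \texttt{OSSB} run with $\varepsilon=\gamma=0$ is at most $C\ln T + o(\ln T)$, where $C$ is the value of the associated Graves--Lai program, provided a short list of regularity conditions on the parameter family holds and provided the learner can solve that program. The work therefore reduces to (a) recalling these conditions and (b) checking each of them for the multimodal family $\mathcal{F}_{\le m}$ with unit-variance Gaussian rewards. Concretely, \texttt{OSSB} keeps the empirical mean vector $\hat{\boldsymbol{\mu}}(t)$, at each step computes a solution $\boldsymbol{\eta}^\star(\hat{\boldsymbol{\mu}}(t))$ of $(P_{GL})$ instantiated at $\hat{\boldsymbol{\mu}}(t)$, and pulls either the current empirical best arm or the suboptimal arm whose pull count most lags behind $\eta^\star_k(\hat{\boldsymbol{\mu}}(t))\ln t$; with $\varepsilon=\gamma=0$ the bound of \cite{combes2017} is exactly $C(m,\boldsymbol{\mu})\ln T + o(\ln T)$, which matches the lower bound of Proposition~\ref{prop:graves_lai} and hence gives the claim.

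First I would organize the hypotheses of Theorem 2 in \cite{combes2017} into four points: (i) regularity of the one-parameter reward distributions; (ii) uniqueness of the optimal arm at the true parameter; (iii) finiteness of the Graves--Lai value together with its local stability at $\boldsymbol{\mu}$; and (iv) computability of a solution of $(P_{GL})$. Point (i) holds for unit-variance Gaussians, for which $d_k(\mu_k,\lambda_k)=\tfrac12(\mu_k-\lambda_k)^2$ is smooth and satisfies Assumptions~\ref{assump:unimodal} and~\ref{assump:lipschitz} with $\mathfrak{A}(\boldsymbol{\mu})=\mu^\star-\mu_\star$, exactly as noted after Assumption~\ref{assump:lipschitz}. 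Point (ii) is part of our standing assumptions, and point (iv) is precisely the hypothesis of the theorem. For point (iii), finiteness $C(m,\boldsymbol{\mu})<\infty$ amounts to feasibility of $(P_{GL})$, which I would obtain by exhibiting one admissible $\boldsymbol{\eta}$ — for instance one that puts enough mass on a set of arms sufficient to separate $\boldsymbol{\mu}$ from every $\boldsymbol{\lambda}\in\mathcal{B}(m,\boldsymbol{\mu})$ — and then I would establish the local stability $\limsup_{\boldsymbol{\mu}'\to\boldsymbol{\mu},\,\boldsymbol{\mu}'\in\mathcal{F}_{\le m}} C(m,\boldsymbol{\mu}') \le C(m,\boldsymbol{\mu})$. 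Granting (i)--(iv), the regret analysis of \cite{combes2017} transfers unchanged: outside an event whose total contribution to the regret is $O(1)$ one has $\hat{\boldsymbol{\mu}}(t)\to\boldsymbol{\mu}$, so any accumulation point of $\boldsymbol{\eta}^\star(\hat{\boldsymbol{\mu}}(t))$ solves $(P_{GL})$ at $\boldsymbol{\mu}$, the exploration rule eventually keeps the number of pulls of each suboptimal arm $k$ below $\eta^\star_k(\boldsymbol{\mu})\ln T + o(\ln T)$, and multiplying by $\Delta_k$ and summing gives $\limsup_{T\to\infty} R(\boldsymbol{\mu},T)/\ln T \le \boldsymbol{\eta}^\star(\boldsymbol{\mu})^\top\boldsymbol{\Delta} = C(m,\boldsymbol{\mu})$.

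The step I expect to be the main obstacle is the local stability in (iii), and it is the place where multimodality genuinely bites. The family $\mathcal{F}_{\le m}$ is strongly non-convex and not open: the set of modes $\mathcal{M}(\boldsymbol{\mu}')$ can change abruptly when two neighboring means cross, $\hat{\boldsymbol{\mu}}(t)$ may even momentarily fail to lie in $\mathcal{F}_{\le m}$, and consequently the confusing set $\mathcal{B}(m,\boldsymbol{\mu}')$ and the value $C(m,\cdot)$ need not be continuous. The resolution I have in mind is to show that $\mathcal{B}(m,\boldsymbol{\mu}')$ converges, in the Painlev\'e--Kuratowski lower-limit sense, into $\overline{\mathcal{B}}(m,\boldsymbol{\mu})$ as $\boldsymbol{\mu}'\to\boldsymbol{\mu}$: the identity of $k^\star$ is locally constant by uniqueness, a strict mode of $\boldsymbol{\mu}$ remains a mode under small perturbations, and any confusing parameter for a nearby $\boldsymbol{\mu}'$ lies within $O(\Vert\boldsymbol{\mu}'-\boldsymbol{\mu}\Vert)$ of $\overline{\mathcal{B}}(m,\boldsymbol{\mu})$; together with continuity of $d$ this forces the required upper semicontinuity of $C(m,\cdot)$ at every $\boldsymbol{\mu}\in\mathcal{F}_{\le m}$. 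If one prefers to rule out $\hat{\boldsymbol{\mu}}(t)\notin\mathcal{F}_{\le m}$ altogether, one can feed \texttt{OSSB} the Euclidean projection of $\hat{\boldsymbol{\mu}}(t)$ onto the closure of $\mathcal{F}_{\le m}$, which is still consistent for $\boldsymbol{\mu}$ and keeps every computation inside the structure, at the cost of a slightly longer concentration argument.
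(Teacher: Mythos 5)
The paper does not prove this statement at all: it is imported verbatim as Theorem~2 of \cite{combes2017}, and the only ``proof'' in the paper is the citation itself. Your approach is therefore the same as the paper's at its core --- reduce the claim to the general \texttt{OSSB} guarantee --- but you go further by sketching the verification of that theorem's hypotheses for the multimodal family, which the paper silently omits. Your points (i), (ii) and (iv) are routine and correctly dispatched (Gaussian divergences satisfy Assumptions~\ref{assump:unimodal} and~\ref{assump:lipschitz}; uniqueness of $k^{\star}(\boldsymbol{\mu})$ is a standing assumption; computability is the hypothesis of the theorem, supplied by Theorem~\ref{thm:full_algorithm}). Feasibility of \ref{eq:PGL} is also easy and is in fact established in the paper's proof of Proposition~\ref{prop:eta_bound}, where $\eta_k = 1/d_k(\mu_k,\mu^\star)$ is shown to be feasible. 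The one place where your argument remains a plan rather than a proof is exactly the point you flag: the semicontinuity of $C(m,\cdot)$ (and, more precisely, the continuity properties of the solution correspondence $\boldsymbol{\mu}' \mapsto \boldsymbol{\eta}^\star(\boldsymbol{\mu}')$ that the hypotheses of Theorem~2 in \cite{combes2017} actually require). You assert a Painlev\'e--Kuratowski convergence of $\mathcal{B}(m,\boldsymbol{\mu}')$ into $\overline{\mathcal{B}}(m,\boldsymbol{\mu})$ but do not carry it out, and for $\mathcal{F}_{\le m}$ this is genuinely delicate because $\mathcal{M}(\boldsymbol{\mu}')$ and hence the constraint set can jump when neighboring means cross; a complete proof would need this lemma in full. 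You should also note that the original Theorem~2 of \cite{combes2017} is stated for $\varepsilon>0$ with a bound $C(\boldsymbol{\mu})F(\varepsilon,\gamma)\ln T$ that only approaches $C(\boldsymbol{\mu})\ln T$ in the limit; the $\varepsilon=\gamma=0$ version quoted here is taken on faith by both you and the paper.
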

For completeness, the pseudo-code of \texttt{OSSB} is provided in Appendix \ref{sec:OSSB}. We now focus on how to solve \ref{eq:PGL} for multimodal bandit problems.

\section{A computationally tractable algorithm to solve the Graves-Lai problem}

In this section, we propose an algorithm to solve the Graves-Lai optimization problem in a computationally tractable manner, which constitutes our main contribution. The algorithm is rather intricate, and we go through its derivation step by step. The complete approach is summarized in Figure \ref{fig:pgl_procedure}. For clarity, detailed proofs are provided in Appendix~\ref{sec:proofs4}.

\begin{figure*}[htbp] 
    \centering

    \resizebox{\textwidth}{!}{%
\tikzset{
    goalheader/.style={
        rectangle, draw=black!80, thick, fill=white,
        minimum height=1.2cm, text centered, text width=6cm,
        append after command={
            [very thick, black!70]
            (\tikzlastnode.south west) edge (\tikzlastnode.south east)
        }
    },
    processbox/.style={
        rectangle, rounded corners=3pt, draw=black!70, thick,
        minimum height=1.2cm, text centered, text width=6cm
    },
    containerbox/.style={
        rectangle, rounded corners=6pt, draw=gray, fill=containerbg, inner sep=0.5cm
    },
    nodecircle/.style={
        circle, draw=black, thick, minimum size=0.8cm, font=\bfseries
    },
    arrow/.style={-Stealth, thick, arrowiter},
    arrow_fwd/.style={-Stealth, thick, arrowfwd, bend left=15},
    arrow_back/.style={-Stealth, thick, arrowback, bend left=15},
    connector/.style={-Stealth, thick, dashed, gray}
}

\begin{tikzpicture}[node distance=1.5cm and 2.5cm]

\begin{scope}[local bounding box=panelA]
    \node[font=\bfseries\large] (titleA) at (0,0) {(A) Penalized subgradient descent};

    \node[processbox, fill=boxB, below=1cm of titleA] (eta_s) {Current sampling rates $\boldsymbol{\eta}(s)$};
    \node[processbox, fill=boxA, below=1.5cm of eta_s] (find_lambda) {Compute most confusing parameter $\boldsymbol{\lambda}(s)$};
    \node[processbox, fill=boxC, below=1.5cm of find_lambda] (update_eta) {Update rates via subgradient step: \\ $\boldsymbol{\eta}(s+1) = \Pi[\boldsymbol{\eta}(s) - \delta \cdot \nabla h_{\boldsymbol{\lambda}(s)}(\boldsymbol{\eta}(s)) ]$};

    \draw[arrow] (eta_s) -- (find_lambda);
    \draw[arrow] (find_lambda) -- (update_eta);
    \draw[arrow] (update_eta.west) .. controls +(180:2) and +(180:2) .. (eta_s.west) node[midway, left] {Iterate};
\end{scope}

\begin{scope}[local bounding box=panelB, xshift=8.5cm]
    \node[font=\bfseries\large] (titleB) at (0,0) {(B) Decomposition into subproblems};
    
    \node[goalheader] (goalB) at (0, -1.5) {Goal: compute $\boldsymbol{\lambda}(s) = \arg\min_{\boldsymbol{\lambda}\in \mathcal{B}(m,\boldsymbol{\mu})} \boldsymbol{\eta}(s)^\top d(\boldsymbol{\mu},\boldsymbol{\lambda})$};

    \node[processbox, fill=boxB, below=1cm of goalB] (decompose) {1. Decompose into subproblems over all $(k,k')$ pairs};
    \node[processbox, fill=boxC, below=1.5cm of decompose] (discretize) {2. For each subproblem, discretize the constraint set $\mathcal{B}_{k,k'}(m,\boldsymbol{\mu})$};
    \node[processbox, fill=boxA, below=1.5cm of discretize] (solve_subproblem) {3. Solve each discretized subproblem via DP};
    \node[processbox, fill=boxB, below=1.5cm of solve_subproblem] (combine) {4. Select $\boldsymbol{\lambda}(s)$ as the solution with lowest overall cost};
    
    \begin{pgfonlayer}{background}
        \node[containerbox, fit=(decompose) (combine)] (containerB) {};
    \end{pgfonlayer}

    \draw[arrow] (decompose) -- (discretize);
    \draw[arrow] (discretize) -- (solve_subproblem);
    \draw[arrow] (solve_subproblem) -- (combine);
\end{scope}

\begin{scope}[local bounding box=panelC, xshift=17cm]
    \node[font=\bfseries\large] (titleC) at (0,0) {(C) Dynamic programming solver};

    \node[goalheader] (goalC) at (0, -1.5) {Goal: compute $\boldsymbol{\lambda}^{(k,k')}=$ $\arg \min_{\boldsymbol{\lambda}\in \tilde{\mathcal{B}}_{k,k'}(m,\boldsymbol{\mu})} \boldsymbol{\eta}(s)^\top d(\boldsymbol{\mu},\boldsymbol{\lambda})$};

    \node[nodecircle] (root) at (-2, -4) {$k$};
    \node[nodecircle, below left=1cm and 0.5cm of root] (c1) {$\ell_1$};
    \node[nodecircle, below right=1cm and 0.5cm of root] (c2) {$\ell_2$};
    \node[nodecircle, below=1.5cm of c1] (gc1) {$\ell_0$};
    \node[above=0.1cm of root, font=\small, text=blue!80!black] {root};
    \node[below=0.1cm of gc1, font=\small, text=red!80!black] {leaf};

    \node[processbox, fill=boxB, text width=3.5cm] (fwd_box) at (1.5, -5) {
        \textbf{\textcolor{arrowfwd}{Forward pass}} \\
        Compute costs $f_{\ell}(\cdot)$ from leaves to root.
    };
    \node[processbox, fill=boxC, text width=3.5cm, below=1.2cm of fwd_box] (back_box) {
        \textbf{\textcolor{arrowback}{Backward pass}} \\
        Reconstruct $\boldsymbol{\lambda}^{(k,k')}$ from root to leaves.
    };
    
    \begin{pgfonlayer}{background}
        \node[containerbox, fit=(root) (gc1) (fwd_box) (back_box)] (containerC) {};
    \end{pgfonlayer}
    
    \draw[arrow] (fwd_box.south) -- (back_box.north);

    \draw[arrow_fwd] (gc1) to (c1);
    \draw[arrow_fwd] (c1) to (root);
    \draw[arrow_fwd] (c2) to (root);
    \draw[arrow_back] (root) to (c1);
    \draw[arrow_back] (root) to (c2);
    \draw[arrow_back] (c1) to (gc1);
\end{scope}

\draw[connector] (find_lambda) to[bend left=15] (containerB.west);
\draw[connector] (solve_subproblem) to[bend left=15] (containerC.west);

\end{tikzpicture}
    }
    \caption{Summary of the procedure to solve \ref{eq:PGL}.}
    \label{fig:pgl_procedure}
\end{figure*}
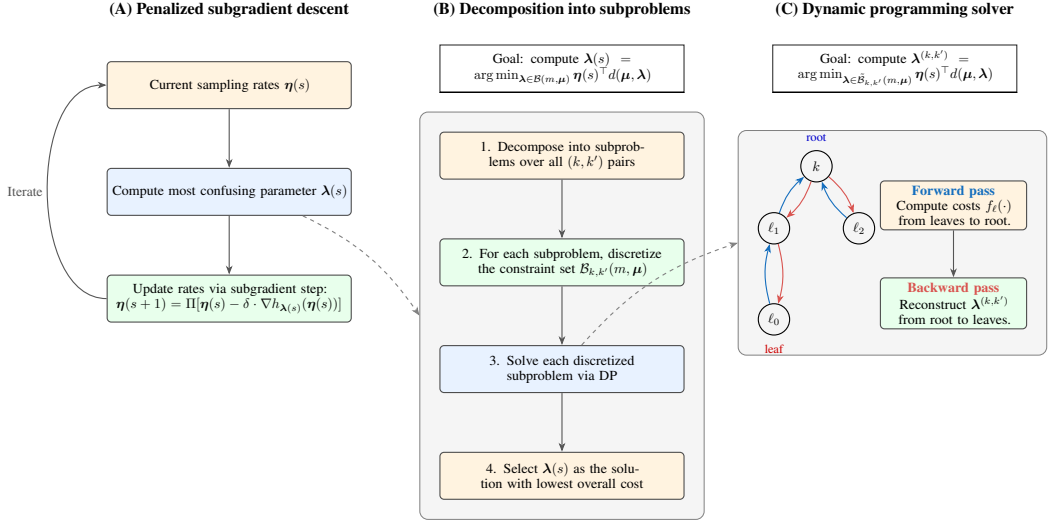

\subsection{Reducing the constraint of \ref{eq:PGL} to tractable subproblems}

\paragraph{On the difficulty of computing the constraint.}

In the unimodal case where $m = 1$, solving \ref{eq:PGL} can be done in closed form, however for $m > 1$ this is much less straightforward. The main difficulty is to compute the value of the constraint
$
\inf_{\boldsymbol{\lambda} \in \mathcal{B}(m,\boldsymbol{\mu})} \boldsymbol{\eta}^\top d(\boldsymbol{\mu},\boldsymbol{\lambda}).
$
While $\boldsymbol{\lambda} \mapsto \boldsymbol{\eta}^\top d(\boldsymbol{\mu},\boldsymbol{\lambda})$ is usually a convex function, minimizing it over $\mathcal{B}(m,\boldsymbol{\mu})$ is difficult, due to the particular structure of the set $\mathcal{B}(m,\boldsymbol{\mu})$. Indeed, $\mathcal{B}(m,\boldsymbol{\mu})$ is not convex, nor is it connected. In Proposition~\ref{prop:convex_structure} we show that, if one wanted to express $\mathcal{B}(m,\boldsymbol{\mu})$ as a union of $\mathcal{U}(K,m)$ convex sets (so that we could minimize $\boldsymbol{\eta}^\top d(\boldsymbol{\mu},\boldsymbol{\lambda})$ over each set using convex programming), then one would require $\mathcal{U}(K,m)$ to be exponentially large in $K$. For example, if $G$ is a line graph with $K=100$ nodes, and we consider multimodal functions with $m=5$ modes, then the number of convex components $\mathcal{U}(K,m)$ must be greater than $10^5$. 

\begin{proposition}\label{prop:convex_structure}
Assume that $\mathcal{B}(m,\boldsymbol{\mu})$ can be written as a union of $\mathcal{U}(K,m)$ convex sets. Then for any $m > 1$, we have:
$$
	\mathcal{U}(K,m) \ge \frac{((\deg(G)-1)m)!}{(\deg(G)m)!} (K- (\deg(G)+1)m)^m,
$$ hence $\mathcal{U}(K,m)$ grows exponentially with $m$.
\end{proposition}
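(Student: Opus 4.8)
The plan is to prove the bound directly, by producing inside $\mathcal{B}(m,\boldsymbol{\mu})$ a large set of points, no two of which can sit in a common convex subset of $\mathcal{B}(m,\boldsymbol{\mu})$. The underlying mechanism is elementary: suppose $\mathcal{B}(m,\boldsymbol{\mu})=\bigcup_{i=1}^{\mathcal{U}(K,m)} C_i$ with each $C_i$ convex, and suppose $S\subseteq\mathcal{B}(m,\boldsymbol{\mu})$ is a finite set such that, for every pair of distinct $\boldsymbol{\lambda},\boldsymbol{\lambda}'\in S$, the midpoint $\tfrac12(\boldsymbol{\lambda}+\boldsymbol{\lambda}')$ is \emph{not} in $\mathcal{B}(m,\boldsymbol{\mu})$; then $\mathcal{U}(K,m)\ge|S|$. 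Indeed each $C_i$ is convex and, being a subset of the union, is contained in $\mathcal{B}(m,\boldsymbol{\mu})$, so it cannot contain two points of $S$ (their midpoint would lie in $C_i\subseteq\mathcal{B}(m,\boldsymbol{\mu})$), and pigeonhole finishes. So it suffices to exhibit a large such $S$.

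Second, I would build $S$ from \emph{spike} functions supported on a fixed independent set. Fix any $\boldsymbol{\mu}\in\mathcal{F}_{\le m}$, write $k^\star=k^\star(\boldsymbol{\mu})$, $\mu^\star=\max_k\mu_k$, and fix an independent set $\mathcal{G}\subseteq[K]\setminus\{k^\star\}$ of $G$; since the forest $G$ with $k^\star$ removed is bipartite, one may take $|\mathcal{G}|\ge\lceil(K-1)/2\rceil$ (and $\mathcal{G}$ is then also independent in $G$, as the only extra edges of $G$ touch $k^\star$). Choose pairwise distinct reals $\beta_v>0$ for $v\in\mathcal{G}$. For each $m$-subset $V\subseteq\mathcal{G}$, let $\boldsymbol{\lambda}^{(V)}\in\RR^K$ be given by $\lambda^{(V)}_v=\mu^\star+\beta_v$ for $v\in V$ and $\lambda^{(V)}_w=\mu^\star$ otherwise. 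Because $\mathcal{G}$ is independent, the neighbours of any $v\in V$ all sit at the baseline $\mu^\star$, so $v$ is a mode; and no baseline vertex is strictly above all its neighbours, so $V$ is exactly the mode set of $\boldsymbol{\lambda}^{(V)}$. Hence $\boldsymbol{\lambda}^{(V)}\in\mathcal{F}_m\subseteq\mathcal{F}_{\le m}$, with $\lambda^{(V)}_{k^\star}=\mu^\star$ (as $k^\star\notin V$) and a unique maximiser, namely the vertex of $V$ carrying the largest $\beta_v$, which is $\ne k^\star$. Thus $\boldsymbol{\lambda}^{(V)}\in\mathcal{B}(m,\boldsymbol{\mu})$; I take $S=\{\boldsymbol{\lambda}^{(V)}:V\subseteq\mathcal{G},\,|V|=m\}$, of size $\binom{|\mathcal{G}|}{m}$.

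Third, I would check the midpoint condition and conclude. For $V\ne V'$, the midpoint $\boldsymbol{\zeta}=\tfrac12(\boldsymbol{\lambda}^{(V)}+\boldsymbol{\lambda}^{(V')})$ equals $\mu^\star$ off $V\cup V'$ and exceeds $\mu^\star$ on $V\cup V'$; since $V\cup V'\subseteq\mathcal{G}$ is independent, every vertex of $V\cup V'$ has all its neighbours at the baseline and is therefore a mode of $\boldsymbol{\zeta}$. So $\boldsymbol{\zeta}$ has at least $|V\cup V'|\ge m+1$ modes, hence lies outside $\mathcal{F}_{\le m}\supseteq\mathcal{B}(m,\boldsymbol{\mu})$. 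By the first step $\mathcal{U}(K,m)\ge\binom{|\mathcal{G}|}{m}\ge\binom{\lceil(K-1)/2\rceil}{m}$, and the stated closed form follows from this by a (lossy) elementary estimate: writing $d=\deg(G)\ge2$, one has $\binom{\lceil(K-1)/2\rceil}{m}\ge\frac{(K-1-2m)^m}{2^m\,m!}\ge\frac{(K-(d+1)m)^m}{\binom{dm}{m}\,m!}=\frac{((d-1)m)!}{(dm)!}(K-(d+1)m)^m$, using $\binom{dm}{m}\ge2^m$ and $K-1-2m\ge K-(d+1)m$ (and the bound is vacuous, hence trivially true, outside the non-degenerate regime $K>(d+1)m$). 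Exponential growth in $m$ is immediate since $(dm)!/((d-1)m)!\le(dm)^m$, so the right-hand side is at least $\bigl((K-(d+1)m)/(dm)\bigr)^m$.

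The step I expect to be delicate is the midpoint condition, and in particular the requirement that all index sets $V$ be drawn from one common independent set $\mathcal{G}$. If one only asked each $V$ to be independent on its own, a spike of $V$ could be adjacent to a spike of $V'$, and in the average the smaller of the two could drop below its neighbour, destroying a local maximum; then $\tfrac12(\boldsymbol{\lambda}^{(V)}+\boldsymbol{\lambda}^{(V')})$ might have only $\le m$ modes and fall back inside $\mathcal{B}(m,\boldsymbol{\mu})$, so the two functions could share a convex piece and the count would collapse. Tying the whole family to a single $\mathcal{G}$ is what makes the midpoint argument robust, and the tree structure — through $\deg(G)$ — is what one uses to certify that $\mathcal{G}$, and hence $S$, is large.
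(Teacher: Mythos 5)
Your proof is correct, but it takes a genuinely different route from the paper's. The paper also reduces the problem to independent sets, but in a ``bounded multiplicity'' fashion: it greedily counts at least $\tfrac{1}{m!}(K-(\deg(G)+1)m)^m$ independent $m$-sets $X$, associates to each the indicator vector $\boldsymbol{\mu}^X$, and shows via the convex combination $\tfrac34\boldsymbol{\mu}^X+\tfrac14\boldsymbol{\mu}^{X'}$ that two such vectors can share a convex piece only if every element of $X'$ neighbours $X$; each piece therefore holds at most $\binom{\deg(G)m}{m}$ of them, and dividing yields the stated constant $\tfrac{((\deg(G)-1)m)!}{(\deg(G)m)!}$. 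You instead enforce ``multiplicity one'': by drawing all spike supports from a single large independent set $\mathcal{G}$ (obtained from bipartiteness of the tree minus $k^\star(\boldsymbol{\mu})$), any two of your points have a midpoint with at least $m+1$ modes, so no convex piece can contain two of them, giving $\mathcal{U}(K,m)\ge\binom{\lceil(K-1)/2\rceil}{m}$ directly. Your intermediate bound is in fact stronger than the stated one in the non-degenerate regime (you then deliberately weaken it to recover the printed form), and your construction is more careful about actually landing in $\mathcal{B}(m,\boldsymbol{\mu})$: by spiking above $\mu^\star$ with distinct heights off $k^\star(\boldsymbol{\mu})$ you verify $\lambda_{k^\star(\boldsymbol{\mu})}=\mu^\star$ and $k^\star(\boldsymbol{\lambda})\ne k^\star(\boldsymbol{\mu})$ explicitly, whereas the paper's indicator vectors leave these membership conditions implicit. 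What the paper's approach buys in exchange is that it does not need a single common independent set, so it generalizes more readily to graphs of large chromatic number, where your factor $\lceil(K-1)/2\rceil$ would degrade; for trees this cost is only a factor of $2$ per coordinate and your argument is the cleaner of the two. One small caveat, shared equally by the paper: the final algebraic comparison (and indeed the statement itself) is only meaningful in the regime $K>(\deg(G)+1)m$, which you acknowledge.
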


\begin{wrapfigure}[12]{r}{0.42\textwidth}
  \centering
  \vspace{-12pt} 
  \includegraphics[width=0.42\textwidth]{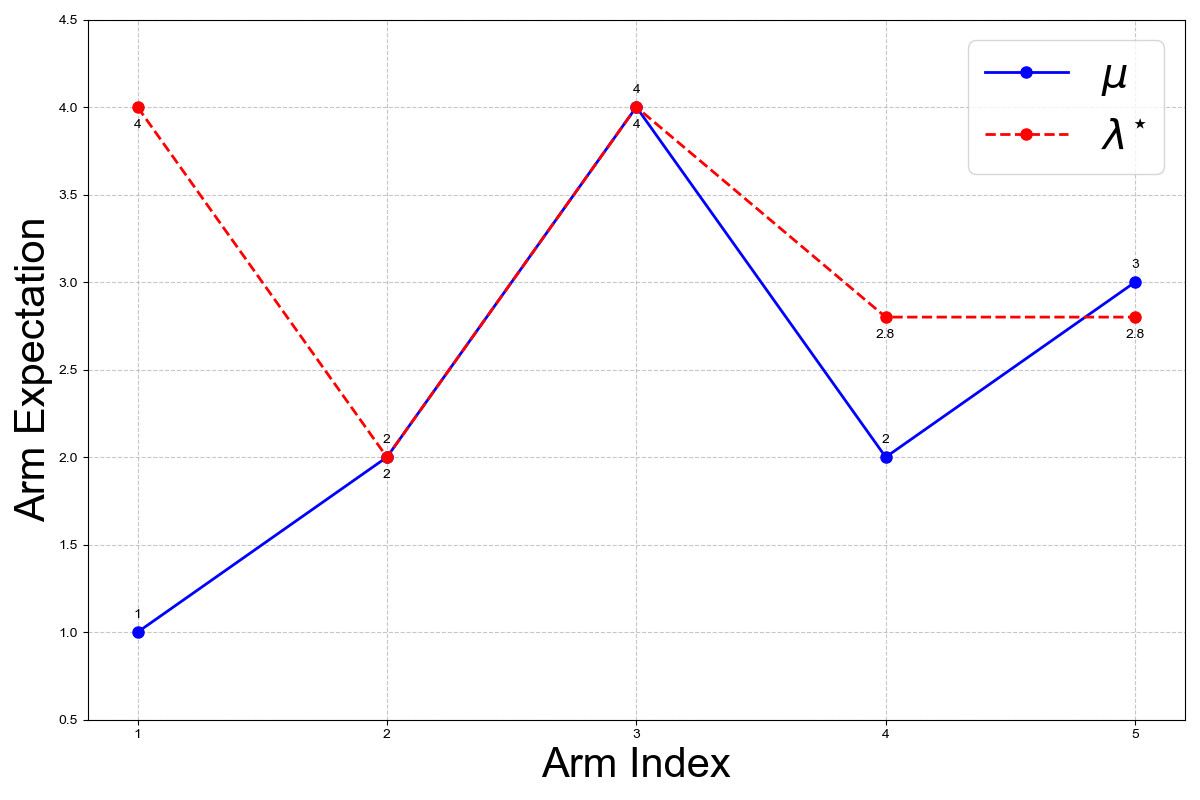}
  \caption{2-modal example.}
  \label{fig:bimodal-example}
  \vspace{-12pt}
\end{wrapfigure}

In contrast to the unimodal case ($m=1$), where the most confusing parameter $\boldsymbol{\lambda}^{\star}$ is obtained by perturbing a single neighbor of  the optimal arm $k^{\star}(\boldsymbol{\mu})$ (as shown by  \cite{CombesUnimodalBandit2014}), the multimodal setting ($m>1$) introduces more complex possibilities. While one might expect the most confusing parameter to be such that $\lambda_{k}=\mu^{\star}$ for a single $k \in \mathcal{N}(\boldsymbol{\mu}) \setminus k^{\star}(\boldsymbol{\mu})$, and equal to $\boldsymbol{\mu}$ elsewhere, this intuition fails in general. Depending on the value of $\boldsymbol{\eta},$ it may be more confusing to set $\lambda_{k}=\mu^{\star}$ for $k \notin \mathcal{N}(\boldsymbol{\mu})$, and to ensure that $\boldsymbol{\lambda} \in \mathcal{F}_{\le m}$, set $\lambda_{k'}=\lambda_{\ell}$ for some $k' \in \mathcal{M}(\boldsymbol{\mu}) \setminus k^{\star}(\boldsymbol{\mu})$ and $\ell$ such that $(k',\ell) \in E.$  Figure~\ref{fig:bimodal-example} provides a concrete illustration of this phenomenon on a line graph, with $\boldsymbol{\lambda}^{\star}=\arg\min_{\boldsymbol{\lambda}\in\overline{\mathcal{B}}(2,\boldsymbol{\mu})} \boldsymbol{\eta}^{\top} d(\boldsymbol{\mu},\boldsymbol{\lambda})$ for $\boldsymbol{\mu}=(1,2,4,2,3)$, $\boldsymbol{\eta}=(0.01,0.25,1,0.25,1)$, and Gaussian rewards with variance one.

\paragraph{Restricting the constraint and solution spaces.}

We first show some elementary properties of the solution, which will allow us to restrict both the spaces where $\boldsymbol{\eta}$ and $\boldsymbol{\lambda}$ lie.
First, we decompose the constraint set as  $\mathcal{B}(m,\boldsymbol{\mu})=\cup_{k \neq k^{\star}(\boldsymbol{\mu})}\mathcal{B}_{k}(m,\boldsymbol{\mu})$ with $\mathcal{B}_{k}(m,\boldsymbol{\mu})=\{\boldsymbol{\lambda} \in \mathcal{F}_{\le m}, \lambda_{k^\star(\boldsymbol{\mu})} = \mu^\star, k^{\star}(\boldsymbol{\lambda})=k\}.$ Clearly, minimizing $\boldsymbol{\eta}^{\top}d(\boldsymbol{\mu},\boldsymbol{\lambda})$ over $\boldsymbol{\lambda} \in \mathcal{B}(m,\boldsymbol{\mu})$ amounts to minimizing $\boldsymbol{\eta}^{\top}d(\boldsymbol{\mu},\boldsymbol{\lambda})$ over $\boldsymbol{\lambda} \in \mathcal{B}_{k}(m,\boldsymbol{\mu})$ for each $k \neq k^{\star}(\boldsymbol{\mu})$. Proposition \ref{prop:graves-lai-trivial-case} states that this minimization problem is straightforward  when $\boldsymbol{\mu}$ has strictly less than $m$ modes or when $k$ is in the neighborhood of the modes of $\boldsymbol{\mu}$.

\begin{proposition}\label{prop:graves-lai-trivial-case}
Let $\boldsymbol{\eta} \geq 0$ and $k \neq k^{\star}(\boldsymbol{\mu}).$ If $k \in \mathcal{N}(\boldsymbol{\mu})$ or $\vert \mathcal{M}(\boldsymbol{\mu}) \vert < m$, $$\inf_{\boldsymbol{\lambda} \in \mathcal{B}_{k}(m,\boldsymbol{\mu})} \boldsymbol{\eta}^{\top}d(\boldsymbol{\mu},\boldsymbol{\lambda})=\eta_kd_k(\mu_k,\mu^{\star}),$$ which is attained for $\boldsymbol{\lambda}=\boldsymbol{\mu}+(\mu^{\star}-\mu_k)\boldsymbol{e}^{(k)} \in \overline{\mathcal{B}_k}(m,\boldsymbol{\mu}).$ 
\end{proposition}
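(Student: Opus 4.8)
The plan is to show two inequalities. For the lower bound, I would argue that any $\boldsymbol{\lambda} \in \overline{\mathcal{B}_k}(m,\boldsymbol{\mu})$ must satisfy $\lambda_k \ge \mu^\star$: indeed, $\boldsymbol{\lambda}$ has $k$ as its unique optimal arm, and $\lambda_{k^\star(\boldsymbol{\mu})} = \mu^\star$, so $\lambda_k \ge \lambda_{k^\star(\boldsymbol{\mu})} = \mu^\star$. Since $\mu_k < \mu^\star$ (because $k \neq k^\star(\boldsymbol{\mu})$ and the optimal arm is unique), Assumption~\ref{assump:unimodal} gives that $\mu_k \mapsto d_k(\mu_k,\lambda_k)$ is increasing for $\lambda_k > \mu_k$, hence $d_k(\mu_k,\lambda_k) \ge d_k(\mu_k,\mu^\star)$. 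Using $\boldsymbol{\eta} \ge 0$ and discarding the nonnegative contributions of the other coordinates, $\boldsymbol{\eta}^\top d(\boldsymbol{\mu},\boldsymbol{\lambda}) \ge \eta_k d_k(\mu_k,\lambda_k) \ge \eta_k d_k(\mu_k,\mu^\star)$. This bound holds on the closure as well by continuity of $d$.

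For the upper bound, I would exhibit the candidate $\boldsymbol{\lambda} = \boldsymbol{\mu} + (\mu^\star - \mu_k)\boldsymbol{e}^{(k)}$ and check it lies in $\overline{\mathcal{B}_k}(m,\boldsymbol{\mu})$. This $\boldsymbol{\lambda}$ coincides with $\boldsymbol{\mu}$ everywhere except at coordinate $k$, where it equals $\mu^\star$; it has $k$ as an optimal arm (tied with $k^\star(\boldsymbol{\mu})$, which is why we land in the closure rather than the open set), and $\lambda_{k^\star(\boldsymbol{\mu})} = \mu^\star$ is satisfied. The cost is $\boldsymbol{\eta}^\top d(\boldsymbol{\mu},\boldsymbol{\lambda}) = \eta_k d_k(\mu_k,\mu^\star)$ since $d_k(\mu_k,\mu_k) = 0$ for every other coordinate. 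The crux of the argument, and the only place the two hypotheses enter, is verifying $\boldsymbol{\lambda} \in \mathcal{F}_{\le m}$, i.e. that raising $\mu_k$ to $\mu^\star$ does not create an $(m+1)$-st mode. I would split into the two cases: if $|\mathcal{M}(\boldsymbol{\mu})| < m$, then even in the worst case where $k$ becomes a new mode and no old mode is destroyed, the number of modes is at most $|\mathcal{M}(\boldsymbol{\mu})| + 1 \le m$, so $\boldsymbol{\lambda} \in \mathcal{F}_{\le m}$. If instead $k \in \mathcal{N}(\boldsymbol{\mu})$, then $k$ is a mode of $\boldsymbol{\mu}$ or a neighbor of a mode $k_0$; raising $\lambda_k$ to the global maximum makes $k$ a mode, but its neighbor $k_0$ (which satisfied $\mu_{k_0} > \mu_k$ before, hence now $\lambda_{k_0} = \mu_{k_0} \le \mu^\star = \lambda_k$) ceases to be a mode, so the mode count does not increase; one must check carefully that no \emph{other} vertex's mode status changes, which is immediate since only coordinate $k$ was modified and a vertex's mode status depends only on its own value and those of its neighbors.

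The main obstacle I anticipate is the bookkeeping in the case $k \in \mathcal{N}(\boldsymbol{\mu})$: one needs to handle the sub-cases $k \in \mathcal{M}(\boldsymbol{\mu})$ (then $k$ was already a mode and increasing its value keeps it a mode while possibly only helping) versus $k$ a neighbor of some mode $k_0 \in \mathcal{M}(\boldsymbol{\mu})$, and in the latter sub-case argue that $k_0$ is \emph{destroyed} as a mode by the perturbation while the only newly created mode is $k$ itself. Because $G$ is a tree, distinct modes and their neighborhoods interact in a controlled way, but one should still confirm that $k$ cannot simultaneously be adjacent to two distinct modes in a way that would destroy two modes while creating one (which would only help the count) — in any event the inequality goes the right way. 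Since the modification touches a single coordinate, the argument reduces to comparing $\lambda_k = \mu^\star$ against the values of $k$'s neighbors and against the old value $\mu_k$, which is elementary once the cases are laid out.
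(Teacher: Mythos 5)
Your plan matches the paper's proof: the same lower bound via $\lambda_k \ge \mu^{\star}$ together with Assumption~\ref{assump:unimodal}, and the same single-coordinate perturbation with the same case split ($\vert \mathcal{M}(\boldsymbol{\mu}) \vert < m$ versus $k \in \mathcal{N}(\boldsymbol{\mu})$, the latter subdivided into $k$ being a mode or a neighbor of a mode) to verify the mode count. The only cosmetic difference is that the paper works with $\boldsymbol{\lambda}^{\varepsilon} = \boldsymbol{\mu} + (\mu^{\star}+\varepsilon-\mu_k)\boldsymbol{e}^{(k)} \in \mathcal{B}_{k}(m,\boldsymbol{\mu})$ and lets $\varepsilon \to 0$, which makes explicit why the infimum over the set $\mathcal{B}_{k}(m,\boldsymbol{\mu})$ itself (not just its closure) is at most $\eta_k d_k(\mu_k,\mu^{\star})$; your mode-count analysis carries over verbatim to $\boldsymbol{\lambda}^{\varepsilon}$, so this is a presentational rather than a substantive gap.
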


We now focus on the case $\vert \mathcal{M}(\boldsymbol{\mu}) \vert =m$ and $k \notin \mathcal{N}(\boldsymbol{\mu})$. Proposition~\ref{prop:lambda_bound} shows that the constraint $\boldsymbol{\lambda} \in \mathcal{B}_{k}(m,\boldsymbol{\mu})$ is equivalent to a constraint on a compact set whose elements have entries comprised between the minimum and maximum of $\boldsymbol{\mu}$, and that have the same value $\mu^{\star}$ at $k$ and $k^{\star}(\boldsymbol{\mu}).$ 
\begin{proposition}\label{prop:lambda_bound} Let $\boldsymbol{\eta} \geq 0$, $k \notin \mathcal{N}(\boldsymbol{\mu})$ and $\mathcal{B}'_{k}(m,\boldsymbol{\mu}) = \{ \boldsymbol{\lambda} \in  [\mu_{\star},\mu^\star]^{K} \cap \mathcal{F}_{\le m}, \lambda_k=\lambda_{k^\star(\boldsymbol{\mu})}=\mu^{\star}\}$. Then  $$\inf_{\boldsymbol{\lambda} \in \mathcal{B}_{k}(m,\boldsymbol{\mu})} \boldsymbol{\eta}^{\top}d(\boldsymbol{\mu},\boldsymbol{\lambda})=\min_{\boldsymbol{\lambda} \in \mathcal{B}'_{k}(m,\boldsymbol{\mu})} \boldsymbol{\eta}^{\top}d(\boldsymbol{\mu},\boldsymbol{\lambda}).$$
\end{proposition}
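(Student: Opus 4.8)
The plan is to establish the two inequalities separately. The direction $\inf_{\boldsymbol{\lambda} \in \mathcal{B}_{k}(m,\boldsymbol{\mu})} \boldsymbol{\eta}^{\top}d(\boldsymbol{\mu},\boldsymbol{\lambda}) \le \min_{\boldsymbol{\lambda} \in \mathcal{B}'_{k}(m,\boldsymbol{\mu})} \boldsymbol{\eta}^{\top}d(\boldsymbol{\mu},\boldsymbol{\lambda})$ is the more delicate one: I would argue that, although $\mathcal{B}'_{k}(m,\boldsymbol{\mu})$ is not contained in $\mathcal{B}_{k}(m,\boldsymbol{\mu})$ (a point of $\mathcal{B}'_k$ with $\lambda_k = \lambda_{k^\star(\boldsymbol{\mu})} = \mu^\star$ does not have a unique maximizer, so it is not in $\mathcal{B}_k$), it does lie in the closure $\overline{\mathcal{B}_k}(m,\boldsymbol{\mu})$. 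Given a minimizer $\boldsymbol{\lambda}$ of the right-hand side, I would produce a sequence $\boldsymbol{\lambda}^{(n)} \in \mathcal{B}_k(m,\boldsymbol{\mu})$ with $\boldsymbol{\lambda}^{(n)} \to \boldsymbol{\lambda}$ by nudging the coordinate $\lambda_k$ slightly above $\mu^\star$ — using $k \notin \mathcal{N}(\boldsymbol{\mu})$ to guarantee that a sufficiently small perturbation of entry $k$ keeps the vector in $\mathcal{F}_{\le m}$, makes $k$ the unique argmax, and keeps $\lambda_{k^\star(\boldsymbol{\mu})} = \mu^\star$ unchanged — and then invoke continuity of $\boldsymbol{\lambda} \mapsto \boldsymbol{\eta}^{\top}d(\boldsymbol{\mu},\boldsymbol{\lambda})$ (which follows from Assumption~\ref{assump:lipschitz} on the relevant compact box) to pass the cost to the limit. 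The existence of the minimizer on the right-hand side itself requires a word: $\mathcal{B}'_k(m,\boldsymbol{\mu})$ is bounded but $\mathcal{F}_{\le m}$ is not closed, so I would instead minimize over its closure $\overline{\mathcal{B}'_k(m,\boldsymbol{\mu})}$ (compact, hence the continuous objective attains its infimum) and note that the infimum over $\mathcal{B}'_k$ and over its closure coincide.

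For the reverse inequality $\inf_{\boldsymbol{\lambda} \in \mathcal{B}_{k}(m,\boldsymbol{\mu})} \boldsymbol{\eta}^{\top}d(\boldsymbol{\mu},\boldsymbol{\lambda}) \ge \min_{\boldsymbol{\lambda} \in \mathcal{B}'_{k}(m,\boldsymbol{\mu})} \boldsymbol{\eta}^{\top}d(\boldsymbol{\mu},\boldsymbol{\lambda})$, I would take an arbitrary $\boldsymbol{\lambda} \in \mathcal{B}_k(m,\boldsymbol{\mu})$ and construct a feasible point $\tilde{\boldsymbol{\lambda}} \in \mathcal{B}'_k(m,\boldsymbol{\mu})$ whose cost is no larger. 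Two modifications are needed. First, for the entry at $k$: since $k^\star(\boldsymbol{\lambda}) = k$ we have $\lambda_k > \mu^\star$ (strict, as the argmax is unique and $\lambda_{k^\star(\boldsymbol{\mu})} = \mu^\star$); decreasing $\lambda_k$ down to $\mu^\star$ can only decrease $d_k(\mu_k, \cdot)$ by Assumption~\ref{assump:unimodal} (because $\mu_k \le \mu^\star \le \lambda_k$, so we move toward the minimizer $\mu_k$ or at least do not move away from it — here one must check $\mu_k \le \mu^\star$, which holds since $\mu^\star$ is the max of $\boldsymbol{\mu}$), and it does not create new modes. Second, to force membership in $[\mu_\star,\mu^\star]^K$: clip every entry of $\boldsymbol{\lambda}$ to the interval $[\mu_\star,\mu^\star]$. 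The key points to verify are that clipping does not increase any $d_k(\mu_k,\lambda_k)$ — again by Assumption~\ref{assump:unimodal}, since $\mu_k \in [\mu_\star,\mu^\star]$ so projecting $\lambda_k$ onto $[\mu_\star,\mu^\star]$ moves it no further from $\mu_k$ — and that clipping does not increase the number of modes of the vector with respect to the tree $G$. For the latter I would argue combinatorially on the tree: truncating coordinates to a common ceiling $\mu^\star$ (and floor $\mu_\star$) cannot create a new strict local maximum, since any vertex that becomes a mode after clipping would have to have been one before, or would violate the strictness; one may need the observation that the clipped coordinate $k^\star(\boldsymbol{\mu})$ and $k$ both sit at the common value $\mu^\star$, consistent with $\mathcal{B}'_k$.

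The main obstacle I anticipate is the claim that clipping entries to $[\mu_\star,\mu^\star]$ does not increase the number of modes on the tree $G$ — this is intuitively clear but needs a careful case analysis (a vertex at a clipped value has all neighbors at values $\le$ its own, so it cannot be a \emph{strict} local max unless it already was; and flattening cannot turn a non-mode into a mode). A secondary subtlety is handling strictness of inequalities throughout (unique argmax, strict modes) when passing to closures and when clipping produces ties; I would route around this by working with $\overline{\mathcal{B}_k}$ and $\overline{\mathcal{B}'_k}$ where convenient and recording that infima over these closures equal infima over the original sets, which is exactly the kind of limiting argument already used implicitly in Proposition~\ref{prop:graves-lai-trivial-case}.
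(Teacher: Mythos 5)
Your overall two-inequality strategy is sound, and your argument for the direction $\inf_{\boldsymbol{\lambda} \in \mathcal{B}_{k}(m,\boldsymbol{\mu})} \boldsymbol{\eta}^{\top}d(\boldsymbol{\mu},\boldsymbol{\lambda}) \ge \min_{\boldsymbol{\lambda} \in \mathcal{B}'_{k}(m,\boldsymbol{\mu})} \boldsymbol{\eta}^{\top}d(\boldsymbol{\mu},\boldsymbol{\lambda})$ --- clip any $\boldsymbol{\lambda} \in \mathcal{B}_{k}(m,\boldsymbol{\mu})$ to $[\mu_{\star},\mu^\star]^{K}$ and check that clipping neither increases any $d_j(\mu_j,\cdot)$ (Assumption~\ref{assump:unimodal}) nor creates a new strict local maximum --- is correct and takes a genuinely different, arguably more direct route than the paper's. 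The paper instead shows that any point of $\overline{\mathcal{B}_k}(m,\boldsymbol{\mu})$ with an entry outside $[\mu_{\star},\mu^\star]$ can be \emph{strictly} improved by shifting an entire connected component of equal-valued coordinates by less than both the excess over $\mu^{\star}$ and the minimal gap between unequal neighbors; your clipping map reaches the same conclusion in one step. (A small remark: $\mathcal{F}_{\le m}$ is in fact closed --- any mode of a limit is eventually a mode along the sequence --- so $\mathcal{B}'_{k}(m,\boldsymbol{\mu})$ is already compact and your detour through its closure is unnecessary, though harmless.)

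The genuine gap is in the other direction, in your claim that a point $\boldsymbol{\lambda} \in \mathcal{B}'_{k}(m,\boldsymbol{\mu})$ can be approximated from within $\mathcal{B}_{k}(m,\boldsymbol{\mu})$ by nudging only the coordinate $\lambda_k$ slightly above $\mu^\star$, with $k \notin \mathcal{N}(\boldsymbol{\mu})$ invoked to keep the perturbed vector in $\mathcal{F}_{\le m}$. That hypothesis concerns the modes of $\boldsymbol{\mu}$, not those of $\boldsymbol{\lambda}$, and it does not do the job: if $\boldsymbol{\lambda}$ already has exactly $m$ modes and $k$ is not one of them --- which happens precisely when some neighbor $\ell$ of $k$ also satisfies $\lambda_{\ell}=\mu^{\star}$ --- then raising $\lambda_k$ alone turns $k$ into an $(m+1)$-th mode while destroying none of the existing ones (that neighbor $\ell$ was already not a mode because of $k$), so the perturbed vector leaves $\mathcal{F}_{\le m}$. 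The paper's Lemma~\ref{lem:closure-of-B(m,mu,k)} repairs exactly this case by raising $\lambda_k$ and $\lambda_{\ell}$ by the same amount $1/t$, which keeps $k$ non-modal while making it the unique argmax; the hypothesis $k \notin \mathcal{N}(\boldsymbol{\mu})$ is used there only to guarantee $\ell \ne k^{\star}(\boldsymbol{\mu})$, so that the constraint $\lambda_{k^\star(\boldsymbol{\mu})}=\mu^{\star}$ survives the perturbation. Your closing suggestion of routing around ties by working with closures does not resolve this, because the point at issue is precisely whether $\mathcal{B}'_{k}(m,\boldsymbol{\mu})$ is contained in the closure of $\mathcal{B}_{k}(m,\boldsymbol{\mu})$.
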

Proposition~\ref{prop:eta_bound} shows that, in order to compute a solution to \ref{eq:PGL}, we can restrict our attention to a compact region, and that the entries of $\boldsymbol{\eta}$ cannot be larger than $\mathfrak{B}(\boldsymbol{\mu})$. In turn, $\mathfrak{B}(\boldsymbol{\mu})$ may be interpreted as the regret predicted by the Lai-Robbins bound in absence of a multimodal structure, divided by the minimal gap.
\begin{proposition}\label{prop:eta_bound}
 There is a solution $\boldsymbol{\eta}^\star$ of \ref{eq:PGL} such that $\boldsymbol{\eta}^\star \in [0,\mathfrak{B}(\boldsymbol{\mu})]^{K}$ where $\mathfrak{B}(\boldsymbol{\mu}) =  {1 \over \Delta_{\min}} \sum_{k:\Delta_k > 0} {\Delta_k \over d_k(\mu_k,\mu^\star)}.$
\end{proposition}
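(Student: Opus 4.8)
The plan is to exhibit one explicit feasible point of \ref{eq:PGL} whose objective value equals $\Delta_{\min}\mathfrak{B}(\boldsymbol{\mu})$, and then to argue that truncating every coordinate of $\boldsymbol{\eta}$ to the box $[0,\mathfrak{B}(\boldsymbol{\mu})]^{K}$ can be done without losing feasibility or optimality. The comparison point I would use is $\bar{\boldsymbol{\eta}}$ defined by $\bar\eta_k = 1/d_k(\mu_k,\mu^\star)$ for $k \ne k^\star(\boldsymbol{\mu})$ and $\bar\eta_{k^\star(\boldsymbol{\mu})}=0$, i.e.\ the allocation one would use for the classical unstructured bandit.

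To check that $\bar{\boldsymbol{\eta}}$ is feasible, I would fix any $\boldsymbol{\lambda}\in\mathcal{B}(m,\boldsymbol{\mu})$ and set $j = k^\star(\boldsymbol{\lambda})\ne k^\star(\boldsymbol{\mu})$. Since $\boldsymbol{\lambda}$ attains its maximum at $j$ and $\lambda_{k^\star(\boldsymbol{\mu})}=\mu^\star$, we get $\lambda_j \ge \mu^\star > \mu_j$, the last inequality because $k^\star(\boldsymbol{\mu})$ is the unique optimal arm. Assumption~\ref{assump:unimodal} then yields $d_j(\mu_j,\lambda_j)\ge d_j(\mu_j,\mu^\star)$, so dropping all other (nonnegative) terms gives $\bar{\boldsymbol{\eta}}^\top d(\boldsymbol{\mu},\boldsymbol{\lambda}) \ge \bar\eta_j\, d_j(\mu_j,\lambda_j) \ge 1$; since this holds for every $\boldsymbol{\lambda}$, the constraint is met. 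Its objective value is $\bar{\boldsymbol{\eta}}^\top\boldsymbol{\Delta} = \sum_{k:\Delta_k>0}\Delta_k/d_k(\mu_k,\mu^\star) = \Delta_{\min}\mathfrak{B}(\boldsymbol{\mu})$, and moreover $\bar\eta_k \le \mathfrak{B}(\boldsymbol{\mu})$ for all $k$ because $\Delta_k \ge \Delta_{\min}$, so $\bar{\boldsymbol{\eta}}$ already lies in the box.

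Next I would consider $\mathcal{E} = \{\boldsymbol{\eta}\in[0,\mathfrak{B}(\boldsymbol{\mu})]^K : \inf_{\boldsymbol{\lambda}\in\mathcal{B}(m,\boldsymbol{\mu})}\boldsymbol{\eta}^\top d(\boldsymbol{\mu},\boldsymbol{\lambda}) \ge 1\}$. The map $\boldsymbol{\eta}\mapsto \inf_{\boldsymbol{\lambda}}\boldsymbol{\eta}^\top d(\boldsymbol{\mu},\boldsymbol{\lambda})$ is an infimum of linear functions, hence upper semicontinuous, so $\mathcal{E}$ is closed; being contained in the box it is compact, and it is nonempty since $\bar{\boldsymbol{\eta}}\in\mathcal{E}$. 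The linear objective therefore attains a minimum on $\mathcal{E}$, say at $\tilde{\boldsymbol{\eta}}$, with $\tilde{\boldsymbol{\eta}}^\top\boldsymbol{\Delta} \le \bar{\boldsymbol{\eta}}^\top\boldsymbol{\Delta} = \Delta_{\min}\mathfrak{B}(\boldsymbol{\mu})$.

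The crux is to upgrade $\tilde{\boldsymbol{\eta}}$ to a global minimizer of \ref{eq:PGL}. Given any feasible $\boldsymbol{\eta}\ge 0$, I would first note that on $\mathcal{B}(m,\boldsymbol{\mu})$ the $k^\star(\boldsymbol{\mu})$-th coordinate of $d(\boldsymbol{\mu},\boldsymbol{\lambda})$ vanishes (since $\lambda_{k^\star(\boldsymbol{\mu})}=\mu^\star=\mu_{k^\star(\boldsymbol{\mu})}$) and that $\Delta_{k^\star(\boldsymbol{\mu})}=0$, so replacing $\eta_{k^\star(\boldsymbol{\mu})}$ by $0$ changes neither feasibility nor objective; call the result $\boldsymbol{\eta}'$. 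If $\boldsymbol{\eta}'^\top\boldsymbol{\Delta}\ge\Delta_{\min}\mathfrak{B}(\boldsymbol{\mu})$ there is nothing to prove; otherwise, for every $k$ with $\Delta_k>0$ we have $\eta'_k\Delta_k \le \boldsymbol{\eta}'^\top\boldsymbol{\Delta} < \Delta_{\min}\mathfrak{B}(\boldsymbol{\mu})$ and $\Delta_k\ge\Delta_{\min}$, hence $\eta'_k<\mathfrak{B}(\boldsymbol{\mu})$, which places $\boldsymbol{\eta}'\in\mathcal{E}$ and gives $\boldsymbol{\eta}^\top\boldsymbol{\Delta} = \boldsymbol{\eta}'^\top\boldsymbol{\Delta} \ge \tilde{\boldsymbol{\eta}}^\top\boldsymbol{\Delta}$. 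Either way $\tilde{\boldsymbol{\eta}}$ is no worse than $\boldsymbol{\eta}$, so $\tilde{\boldsymbol{\eta}}$ solves \ref{eq:PGL} and lies in $[0,\mathfrak{B}(\boldsymbol{\mu})]^K$. The only mildly delicate points are the semicontinuity argument used to guarantee existence of $\tilde{\boldsymbol{\eta}}$ and the observation that the $k^\star(\boldsymbol{\mu})$-coordinate of $\boldsymbol{\eta}$ is entirely unconstrained; both are routine once isolated, so I do not expect a genuine obstacle here — the real content of the proposition is the clean choice of the comparison allocation $\bar{\boldsymbol{\eta}}$.
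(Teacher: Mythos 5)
Your proposal is correct and follows essentially the same route as the paper: both compare against the Lai--Robbins allocation $\bar\eta_k = 1/d_k(\mu_k,\mu^\star)$, verify its feasibility by lower-bounding $\boldsymbol{\eta}^\top d(\boldsymbol{\mu},\boldsymbol{\lambda})$ using the coordinate where $\boldsymbol{\lambda}$ attains its maximum, and then use $\eta_k \Delta_{\min} \le \boldsymbol{\eta}^\top\boldsymbol{\Delta} \le \Delta_{\min}\mathfrak{B}(\boldsymbol{\mu})$ to confine the optimum to the box. The only difference is that you additionally justify the existence of a minimizer via compactness of the truncated feasible set and upper semicontinuity of the constraint, a point the paper's proof takes for granted; this is a welcome tightening rather than a different argument.
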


\paragraph{Location of modes in subproblems.}

We have shown in the previous section that the computation of the constraint of \ref{eq:PGL} can be reduced to solving the following subproblem for all $\boldsymbol{\eta} \ge 0$ and for each $k \in [K] \setminus \mathcal{N}(\boldsymbol{\mu})$: \begin{align*}\label{eq:PGLk}\tag{$P_{GL}(k)$}
\text{minimize}_{\boldsymbol{\lambda}} \; \boldsymbol{\eta}^\top d(\boldsymbol{\mu},\boldsymbol{\lambda}) \text{ subject to } \boldsymbol{\lambda} \in \mathcal{B}'_{k}(m,\boldsymbol{\mu}).  \quad 
\end{align*}

We now present the most important structural result about the solution to \ref{eq:PGLk}, which pertains to the location of its modes. Proposition~\ref{prop:modes_location} states that the modes of the solution to \ref{eq:PGLk} all lie in the set of modes of $\boldsymbol{\mu}$, apart from $k$, which must of course be a mode since it is a maximizer. This is in fact the reason why one is able to compute the solution to \ref{eq:PGLk}. While this will be made clearer by exhibiting an efficient algorithm to compute the solution, it is understood searching over $\boldsymbol{\lambda}$ is much easier when the location of its modes is known.

\begin{proposition}\label{prop:modes_location}
	Consider $\boldsymbol{\lambda}^\star$ the solution to \ref{eq:PGLk}. Then $\mathcal{M}(\boldsymbol{\lambda}^\star) \subset \mathcal{M}(\boldsymbol{\mu}) \cup \{k\}$.
\end{proposition}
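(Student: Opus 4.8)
The plan is to argue by contradiction: suppose $\boldsymbol{\lambda}^\star$ is a minimizer of $\boldsymbol{\eta}^\top d(\boldsymbol{\mu},\boldsymbol{\lambda})$ over $\mathcal{B}'_k(m,\boldsymbol{\mu})$ possessing a mode $j \in \mathcal{M}(\boldsymbol{\lambda}^\star)$ with $j \notin \mathcal{M}(\boldsymbol{\mu}) \cup \{k\}$, and construct a feasible $\boldsymbol{\lambda}'$ with strictly smaller cost, contradicting optimality. The key structural fact I would exploit is that $\boldsymbol{\lambda}^\star$ has at most $m$ modes, that $k$ is necessarily one of them (since $\lambda^\star_k = \mu^\star = \max_i \lambda^\star_i$ is the maximal entry and $k \notin \mathcal{N}(\boldsymbol{\mu})$ forces $k$ to be a strict local max), and that $k^\star(\boldsymbol{\mu})$ is \emph{not} a mode of $\boldsymbol{\lambda}^\star$ (it is tied with $k$ but not strictly larger than its neighbors in general — more precisely it cannot be the unique argmax). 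So there is a ``budget'' of at most $m$ modes, one of which is spent on $k$.

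First I would set up notation for a directed rooting of the tree $G$ at $k$, using the parent/child/descendant notation $\mathcal C(\cdot), \mathcal D(\cdot), p(\cdot)$ introduced in the excerpt, so that ``moving toward $k$'' is well-defined. Given the offending mode $j$, consider the path from $j$ to $k$ in the tree and let $i$ be the neighbor of $j$ on this path (i.e. $p(j)$ in the rooting at $k$). The idea is a local flattening move: decrease $\lambda^\star_j$ down toward $\max(\lambda^\star_i, \max_{\ell \in \mathcal C(j)} \lambda^\star_\ell)$, i.e. push $\lambda^\star_j$ to the largest of its neighbors' values, thereby destroying the mode at $j$. Because $\lambda^\star_j > \mu_j$ would be needed for this to be a problem — I need to check the sign: since $j$ is a strict local max of $\boldsymbol{\lambda}^\star$ and $j$ is not a mode of $\boldsymbol{\mu}$, one shows $\lambda^\star_j > \mu_j$ cannot be forced, so decreasing $\lambda^\star_j$ either moves it toward $\mu_j$ (decreasing the cost term $\eta_j d_j(\mu_j,\lambda^\star_j)$ by Assumption~\ref{assump:unimodal}) or leaves $\eta_j = 0$ (in which case the cost is unchanged). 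The delicate point is to verify that after this decrease the modified vector $\boldsymbol{\lambda}'$ still lies in $\mathcal F_{\le m}$: lowering $\lambda_j$ can only remove the mode at $j$ and can only remove (never create) modes at other vertices, except possibly it could turn a neighbor $\ell$ of $j$ into a new mode if $\ell$ was previously dominated only by $j$; one must choose the target value (the max over neighbors) precisely so that no new mode is created and so that $j$'s mode disappears or $j$ merges into an adjacent plateau without raising the mode count.

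The main obstacle I anticipate is exactly this mode-accounting: proving that the flattening move does not increase the number of modes, and simultaneously that it strictly decreases the objective (not merely weakly), so that the contradiction is genuine. Handling the weak-decrease case (when $\eta_j = 0$, or when $\lambda^\star_j$ is already pinned by a neighbor) requires a separate argument — perhaps showing that if $\eta_j = 0$ one can remove $j$'s ``moded'' status for free and reduce to a problem where the offending mode carries positive weight, or invoking uniqueness/a tie-breaking among optimal solutions to select one with the fewest modes outside $\mathcal M(\boldsymbol{\mu}) \cup \{k\}$ and then deriving a contradiction on that count. I would also need to be careful that the move does not disturb the pinned constraints $\lambda_k = \lambda_{k^\star(\boldsymbol{\mu})} = \mu^\star$ — since $j \neq k$ and, as argued, $j \neq k^\star(\boldsymbol{\mu})$ (the latter not being a mode of $\boldsymbol{\lambda}^\star$), the vertices whose values we touch avoid both pinned coordinates, so feasibility of those equality constraints is preserved, and staying in $[\mu_\star,\mu^\star]^K$ is immediate since we only decrease an entry that was $\le \mu^\star$ toward a value $\ge \mu_\star$.
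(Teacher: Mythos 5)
There is a genuine gap, and it sits exactly where you flagged uncertainty: the sign of $\lambda^\star_j - \mu_j$. Your flattening move (pushing the offending mode's value $\lambda^\star_j$ \emph{down} to the largest neighboring value) only decreases the cost term $\eta_j d_j(\mu_j,\lambda_j)$ if $\lambda^\star_j > \mu_j$, by Assumption~\ref{assump:unimodal}. You never establish this, and it is in fact the wrong regime: a two-sided perturbation $\boldsymbol{\lambda}^\star + \delta'\boldsymbol{e}^{(j)}$ with $\vert\delta'\vert$ smaller than the gap to $j$'s neighbors preserves the mode set, hence feasibility, so optimality forces $\lambda^\star_j = \mu_j$ (when $\eta_j>0$). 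Starting from $\lambda^\star_j=\mu_j$, your flattening move drags $\lambda_j$ strictly below $\mu_j$ and therefore strictly \emph{increases} the cost at coordinate $j$ while leaving every other coordinate untouched --- no contradiction materializes. The fix is to perturb in the opposite place and direction: since $j$ is a strict local max of $\boldsymbol{\lambda}^\star$, each neighbor $\ell$ can be raised by any $\delta'\in[0,\lambda^\star_j-\lambda^\star_\ell)$ while only shrinking the mode set, so optimality forces $\lambda^\star_\ell \ge \mu_\ell$; combined with $\mu_\ell \le \lambda^\star_\ell < \lambda^\star_j = \mu_j$ for every neighbor $\ell$, this shows directly that $j\in\mathcal{M}(\boldsymbol{\mu})$. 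That is the paper's (direct, non-contradiction) argument, and it makes all of your mode-accounting and ``no new mode is created'' bookkeeping unnecessary.

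Two of your structural premises are also false, though not fatal to the statement. The constraint set $\mathcal{B}'_k(m,\boldsymbol{\mu})$ does not force $k$ to be a mode of $\boldsymbol{\lambda}$ (a neighbor of $k$ may also equal $\mu^\star$), and $k^\star(\boldsymbol{\mu})$ \emph{can} be a mode of $\boldsymbol{\lambda}^\star$: being a mode is a local condition on neighbors, not global uniqueness of the argmax, and since $k\notin\mathcal{N}(\boldsymbol{\mu})$ the nodes $k$ and $k^\star(\boldsymbol{\mu})$ are not adjacent, so both may be modes with the common value $\mu^\star$. The paper's own worked example $\boldsymbol{\lambda}^\star=(4,2,4,2.8,2.8)$ with $k=1$, $k^\star(\boldsymbol{\mu})=3$ has both as modes. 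Your ``budget of $m$ modes, one spent on $k$'' narrative is therefore not a reliable foundation; fortunately the local first-order argument above needs none of it.
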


If $\vert \mathcal{M}(\boldsymbol{\mu}) \vert=m$, we have $\vert \mathcal{M}(\boldsymbol{\mu}) \cup \{k\} \vert = m+1 $, which implies that there must exist a mode $k'$ of $\boldsymbol{\mu}$ that is not a mode of $\boldsymbol{\lambda}^\star$, and all of the modes of $\boldsymbol{\lambda}^\star$ apart from $k$ are modes of $\boldsymbol{\mu}$. Additionally, we can assume that $k' \ne  k^{\star}(\boldsymbol{\mu}).$ Indeed, if $k' =  k^{\star}(\boldsymbol{\mu})$, the constraint $\lambda_{k^{\star}(\boldsymbol{\mu})}=\mu^{\star}$ would yield $\lambda_{\ell} \ge \mu^{\star}$ for some neighbor $\ell$ of $k^{\star}(\boldsymbol{\mu})$. This cannot improve upon the solution given by Proposition \ref{prop:graves-lai-trivial-case}. This means that we can restrict our attention to the sets $
\mathcal{B}_{k,k'}(m,\boldsymbol{\mu})$  for $k' \in \mathcal{M}(\boldsymbol{\mu}) \setminus \{k^{\star}(\boldsymbol{\mu})\}$ with 
\begin{equation*}
	\mathcal{B}_{k,k'}(m,\boldsymbol{\mu}) = \{ \boldsymbol{\lambda} \in [\mu_{\star},\mu^\star]^{K}, \mathcal{M}(\boldsymbol{\lambda}) \subset \mathcal{M}(\boldsymbol{\mu}) \cup \{k\} \setminus \{k'\}, \lambda_{k^\star(\boldsymbol{\mu})} = \lambda_k = \mu^\star\}
\end{equation*}
which is the set of vectors whose modes lie in $\mathcal{M}(\boldsymbol{\mu}) \cup \{k\} \setminus \{k'\}$ and attain their maximum at $k^{\star}(\boldsymbol{\mu})$ and $k$, and that have the same value as $\boldsymbol{\mu}$ at $k^\star(\boldsymbol{\mu})$. We must solve the subproblems, for $k \in [K] \setminus \mathcal{N}(\boldsymbol{\mu})$, $k' \in \mathcal{M}(\boldsymbol{\mu}) \setminus \{k^{\star}(\boldsymbol{\mu})\}$:
\begin{align*}\label{eq:PGLkk'}\tag{$P_{GL}(k,k')$}
\text{minimize}_{\boldsymbol{\lambda}} \; \boldsymbol{\eta}^\top d(\boldsymbol{\mu},\boldsymbol{\lambda}) \text{ subject to } \boldsymbol{\lambda} \in \mathcal{B}_{k,k'}(m,\boldsymbol{\mu}).  \quad    
\end{align*}

\paragraph{Discretizing the subproblems.}

The last step before we can solve \ref{eq:PGLkk'}  is to discretize the space in which $\boldsymbol{\lambda}$ lies, which will allow us to design a discrete search procedure over $\boldsymbol{\lambda}$. Proposition~\ref{prop:discretization} states that discretizing each entry of $\boldsymbol{\lambda} \in \mathcal{B}_{k,k'}(m,\boldsymbol{\mu})$ with a grid of $n$ points $D(n,\boldsymbol{\mu})$,  incurs a small approximation error that can be controlled, and vanishes at a rate inversely proportional to $n$. It is noted that this result is non trivial in the sense that there exists sets of large volume whose intersection with some grid can be empty, and holds because of the particular structure of $\mathcal{B}_{k,k'}(m,\boldsymbol{\mu})$.  In essence, we can round $\boldsymbol{\lambda}$ to ensure both a small rounding error while leaving the set of modes of $\boldsymbol{\lambda}$ unchanged.

\begin{proposition}\label{prop:discretization}
Consider $D(n,\boldsymbol{\mu})$ the following uniform discretization of $[\mu_{\star},\mu^{\star}]$ with $n$ discretization points:
	$$
		D(n,\boldsymbol{\mu}) =  \{\mu_{\star} + (i/n)(\mu^\star-\mu_{\star}), i \in [n]\}.
	$$
Then there exists $\tilde{\boldsymbol{\lambda}} \in \mathcal{B}_{k,k'}(m,\boldsymbol{\mu}) \cap D(n,\boldsymbol{\mu})^K$ such that for any $\boldsymbol{\eta} \in [0,\mathfrak{B}(\boldsymbol{\mu})]^K$: 
$$
	\boldsymbol{\eta}^\top d(\boldsymbol{\mu},\tilde{\boldsymbol{\lambda}}) - {\mathfrak{C}(\boldsymbol{\mu}) \over n} \le \min_{\boldsymbol{\lambda} \in \mathcal{B}_{k,k'}(m,\boldsymbol{\mu})} \boldsymbol{\eta}^\top d(\boldsymbol{\mu},\boldsymbol{\lambda}) \le \boldsymbol{\eta}^\top d(\boldsymbol{\mu},\tilde{\boldsymbol{\lambda}})
$$
with 
$
\mathfrak{C}(\boldsymbol{\mu}) = \diam(G) (\mu^{\star}-\mu_{\star}) \mathfrak{A}(\boldsymbol{\mu}) \mathfrak{B}(\boldsymbol{\mu})  K.
$
\end{proposition}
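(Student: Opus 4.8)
The plan is to prove the statement in the strong form in which it is written: to exhibit a \emph{single} grid point $\tilde{\boldsymbol{\lambda}}$ that is $\mathfrak{C}(\boldsymbol{\mu})/n$-optimal \emph{simultaneously} for every $\boldsymbol{\eta} \in [0,\mathfrak{B}(\boldsymbol{\mu})]^K$. The right-hand inequality requires no work: as soon as we produce $\tilde{\boldsymbol{\lambda}} \in \mathcal{B}_{k,k'}(m,\boldsymbol{\mu})$, it is feasible for the continuous problem, so $\min_{\boldsymbol{\lambda}} \boldsymbol{\eta}^\top d(\boldsymbol{\mu},\boldsymbol{\lambda}) \le \boldsymbol{\eta}^\top d(\boldsymbol{\mu},\tilde{\boldsymbol{\lambda}})$ holds for all $\boldsymbol{\eta}$. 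All the content is in the left-hand inequality, i.e. in showing that one fixed $\tilde{\boldsymbol{\lambda}}$ never overshoots the minimum by more than $\mathfrak{C}(\boldsymbol{\mu})/n$, uniformly in $\boldsymbol{\eta}$.

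First I would pin down the \emph{values} that a continuous minimizer can take, independently of $\boldsymbol{\eta}$. By Proposition~\ref{prop:modes_location} every minimizer has its modes contained in $\mathcal{M}(\boldsymbol{\mu}) \cup \{k\} \setminus \{k'\}$, so the only constraints coupling coordinates are the tree-edge inequalities encoding this mode pattern, together with the pins $\lambda_{k^\star(\boldsymbol{\mu})} = \lambda_k = \mu^\star$. Writing the first-order conditions coordinate by coordinate and using Assumption~\ref{assump:unimodal} (each $d_\ell$ is strictly unimodal with minimizer $\mu_\ell$), I expect each coordinate of a minimizer to either sit at its free optimum $\lambda_\ell = \mu_\ell$ or be tied to a neighbour on a plateau whose common level equals some entry of $\boldsymbol{\mu}$ (the level at which a mode is just destroyed or a valley just filled). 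Thus the coordinates of any minimizer live in the \emph{finite, $\boldsymbol{\eta}$-independent} set $\mathcal{V} = \{\mu_\ell : \ell \in [K]\} \cup \{\mu^\star\}$; only the combinatorial assignment of coordinates to these levels can a priori depend on $\boldsymbol{\eta}$.

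Next I would define $\tilde{\boldsymbol{\lambda}}$ by an $\boldsymbol{\eta}$-independent rounding of these canonical levels onto $D(n,\boldsymbol{\mu})$, and control the resulting error uniformly. Snapping each level $v \in \mathcal{V}$ to its nearest grid point displaces it by at most $(\mu^\star-\mu_\star)/n$; equal levels map to the same grid point, so plateau equalities are preserved exactly, while the strict mode inequalities are preserved by propagating the rounding monotonically along tree paths (a coordinate may need to be shifted by up to $\diam(G)$ grid steps to keep a chain of strict inequalities strict, which is exactly the source of the $\diam(G)$ factor). This keeps $\tilde{\boldsymbol{\lambda}} \in \mathcal{B}_{k,k'}(m,\boldsymbol{\mu}) \cap D(n,\boldsymbol{\mu})^K$ while guaranteeing $\Vert \tilde{\boldsymbol{\lambda}} - \boldsymbol{\lambda}\Vert_1 \le K \diam(G)(\mu^\star-\mu_\star)/n$ against the minimizer $\boldsymbol{\lambda}$ whose levels it rounds. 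Combining this with Assumption~\ref{assump:lipschitz} and the bound $\Vert \boldsymbol{\eta}\Vert_\infty \le \mathfrak{B}(\boldsymbol{\mu})$ from Proposition~\ref{prop:eta_bound} yields $\boldsymbol{\eta}^\top d(\boldsymbol{\mu},\tilde{\boldsymbol{\lambda}}) - \boldsymbol{\eta}^\top d(\boldsymbol{\mu},\boldsymbol{\lambda}) \le \mathfrak{B}(\boldsymbol{\mu})\,\mathfrak{A}(\boldsymbol{\mu})\, \Vert\tilde{\boldsymbol{\lambda}}-\boldsymbol{\lambda}\Vert_1 \le \mathfrak{C}(\boldsymbol{\mu})/n$, which is precisely the claimed tolerance.

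The main obstacle is the \emph{uniformity over $\boldsymbol{\eta}$}, and it is concentrated entirely in the previous step. While the set of candidate levels $\mathcal{V}$ is $\boldsymbol{\eta}$-independent, the \emph{pattern} selected by the minimizer near $k'$ (which neighbour to raise, or whether instead to lower $k'$ itself, and along which branch the filling plateau runs) can genuinely change with $\boldsymbol{\eta}$, since these alternatives have costs weighted by different entries of $\boldsymbol{\eta}$. A single fixed $\tilde{\boldsymbol{\lambda}}$ commits to one pattern, so the crux is a structural lemma showing that one pattern can be chosen that stays near-optimal for \emph{all} $\boldsymbol{\eta}$ — for instance that the competing patterns produce the same snapped vector, or differ only on coordinates whose divergence contribution is already absorbed into the $\mathfrak{C}(\boldsymbol{\mu})/n$ budget. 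Establishing this $\boldsymbol{\eta}$-independence of the optimal \emph{pattern} (and not merely of the level set $\mathcal{V}$) is the delicate step, and is exactly what separates the uniform statement asserted here from the easier per-$\boldsymbol{\eta}$ rounding bound; I expect essentially all of the difficulty of the proposition to reside there.
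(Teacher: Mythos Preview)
The paper does \emph{not} prove the uniform-in-$\boldsymbol{\eta}$ statement you are reading off the quantifier order; it proves the per-$\boldsymbol{\eta}$ version, and that is all that is used downstream (in Proposition~\ref{prop:approximate_subgradient_descent}, $\boldsymbol{\lambda}(s)$ is recomputed at each $\boldsymbol{\eta}(s)$). The paper's argument is: fix $\boldsymbol{\eta}$, take a minimizer $\boldsymbol{\lambda}$ over $\mathcal{B}_{k,k'}(m,\boldsymbol{\mu})$, and round it onto $D(n,\boldsymbol{\mu})^K$ by a recursive procedure on the tree $G^k$ rooted at $k$: set $\tilde\lambda_k$ to the nearest grid point to $\lambda_k$, then for each $\ell$ choose $\tilde\lambda_\ell$ as a grid point near $\lambda_\ell$ subject to $\sign(\tilde\lambda_\ell - \tilde\lambda_{p(\ell)}) = \sign(\lambda_\ell - \lambda_{p(\ell)})$. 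This preserves all edge-sign relations and hence the mode set exactly, while the sign-respecting rounding error accumulates by at most one grid step per tree level, giving $\|\tilde{\boldsymbol{\lambda}} - \boldsymbol{\lambda}\|_\infty \le \diam(G)(\mu^\star - \mu_\star)/n$. The cost bound then follows from Assumption~\ref{assump:lipschitz} and $\|\boldsymbol{\eta}\|_\infty \le \mathfrak{B}(\boldsymbol{\mu})$. Your idea of monotone propagation along tree paths to preserve strict inequalities, costing a $\diam(G)$ factor, is exactly the mechanism the paper uses.

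Your attempt to prove the literal uniform statement, however, has a concrete gap: the claim that every coordinate of a minimizer lies in the finite, $\boldsymbol{\eta}$-independent set $\mathcal{V} = \{\mu_\ell : \ell\in[K]\}\cup\{\mu^\star\}$ is false. Take Gaussian rewards, so that $d_\ell(\mu_\ell,\lambda_\ell) = \tfrac12(\mu_\ell - \lambda_\ell)^2$; the subproblem is then a weighted isotonic regression on the tree, and the level of an active plateau is the $\boldsymbol{\eta}$-weighted \emph{average} of the $\mu_\ell$'s in that block, which is neither in $\mathcal{V}$ nor $\boldsymbol{\eta}$-independent. So both pillars of your plan --- the finite level set and the hope for an $\boldsymbol{\eta}$-independent pattern --- collapse already in the leading example, and the ``main obstacle'' you flag is genuine and unresolved. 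The paper simply sidesteps it by letting $\tilde{\boldsymbol{\lambda}}$ depend on $\boldsymbol{\eta}$; you should read the proposition with the quantifiers in the order $\forall\boldsymbol{\eta}\,\exists\tilde{\boldsymbol{\lambda}}$, which is both what the paper proves and what its application needs.
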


\subsection{Computing the constraint sets via dynamic programming}\label{subsec:dp_computing}

We now explain how to efficiently solve the discretized version of \ref{eq:PGLkk'}, namely
\begin{align*}\label{eq:PGLkk'_discrete}\tag{$\tilde{P}_{GL}(k,k')$}
\text{minimize}_{\boldsymbol{\lambda}} \; \boldsymbol{\eta}^\top d(\boldsymbol{\mu},\boldsymbol{\lambda}) \text{ subject to } \boldsymbol{\lambda} \in \tilde{B}_{k,k'}(m,\boldsymbol{\mu})  \quad   
\end{align*} 
for $\tilde{B}_{k,k'}(m,\boldsymbol{\mu})=\mathcal{B}_{k,k'}(m,\boldsymbol{\mu}) \cap D(n,\boldsymbol{\mu})^K.$ 
We use a dynamic programming procedure which necessitates viewing $G$ as a directed tree $G^k$, obtained by performing depth-first search on the undirected tree $G$ starting at node $k$ (which is consequently the root of $G^k$).
We recall the notations $\mathcal{C}(\ell),\mathcal{D}(\ell),p(\ell)$ and $p^2(\ell)$ to denote the children, descendants, parent and grandparent of a node $\ell$ in $G^k.$ The high-level idea of the procedure is to compute the value of \ref{eq:PGLkk'_discrete} recursively with a formula that relates the minimal obtainable value of $\sum_{j \in \mathcal{D}(\ell) \cup \{ \ell \}} \eta_jd_j(\mu_j,\lambda_j)$ to that of $\sum_{j \in \mathcal{D}(\ell)} \eta_jd_j(\mu_j,\lambda_j)$ for any node $\ell$. Note that when $\ell=k$, the former is equal to the value of \ref{eq:PGLkk'_discrete}, and when $\ell$ is a leaf of $G^k$, the latter is equal to $0$. This recursion formula heavily relies on the fact that all modes of the solution to \ref{eq:PGLkk'_discrete} are in $\mathcal{M}(\boldsymbol{\mu}) \cup \{k\} \setminus \{k'\}$. 

We now introduce some important quantities for our dynamic programming approach. 
For a node $\ell \neq k$ in $G^k$ , we define $f_{\ell}(z,u)$ as the minimal obtainable value of  $$\sum_{j \in \mathcal{D}(\ell) \cup \{ \ell \}} \eta_jd_j(\mu_j,\lambda_j)$$ subject to the constraints $\boldsymbol{\lambda} \in \tilde{B}_{k,k'}(m,\boldsymbol{\mu})$, $\lambda_{\ell}=z$ and $\lambda_{\ell} > \lambda_{p(\ell)}$ if $u=1$ (resp. $\lambda_{\ell} \le \lambda_{p(\ell)}$ if $u=-1$). To simplify the notations further, we introduce the following auxiliary functions\footnote{The minima are taken with the implicit constraint $w \in D(n,\boldsymbol{\mu})$.}:
$$
f_\ell^\star(z, +1) = \min_{w > z} f_\ell(w, +1), \quad
f_\ell^\star(z, -1) = \min_{w \le z} f_\ell(w, -1), \quad
f_\ell^\diamond(z) = \min_{u \in \{-1, +1\}} f_\ell^\star(z, u),
$$
which represent the minimal obtainable value of $\sum_{j \in \mathcal{D}(\ell) \cup \{ \ell \}} \eta_jd_j(\mu_j,\lambda_j)$ for $\boldsymbol{\lambda} \in \tilde{B}_{k,k'}(m,\boldsymbol{\mu})$ when $\lambda_{\ell} > \lambda_{p(\ell)}=z,$ $\lambda_{\ell} \le \lambda_{p(\ell)}=z$ and $\lambda_{p(\ell)}=z$, respectively. Finally, to ensure that the constraint $\lambda_{k^\star(\boldsymbol{\mu})}=\mu^{\star}$ is satisfied during the dynamic programming procedure, we set \footnote{Recall that the value of $\eta_{k^{\star}(\boldsymbol{\mu})}$ has no impact on the solution to \ref{eq:PGLkk'_discrete}.} $\eta_{k^{\star}(\boldsymbol{\mu})}=+\infty$, and we use the convention that $\eta_{k^{\star}(\boldsymbol{\mu})}d_{k^{\star}(\boldsymbol{\mu})}(\mu^{\star},z)$ equals $0$ if $z=\mu^{\star}$ and $+\infty$ otherwise.
\begin{proposition}\label{prop:dynamic_programming}
The functions $f_\ell(z,u)$ for $\ell \in [K]$, $z \in D(n,\boldsymbol{\mu})$ and $u \in \{-1,+1\}$ obey the following recursion:

\noindent If $\ell \in  \mathcal{M}(\boldsymbol{\mu}) \cup \{k\} \setminus \{k'\}$:
$
	f_\ell(z,u) = \eta_\ell d_{\ell}(\mu_\ell,z) + \sum_{j \in \cC(\ell)} f_j^{\diamond}(z),
$
and if $\ell \not\in \mathcal{M}(\boldsymbol{\mu}) \cup \{k\}  \setminus \{k'\}$:
\begin{align*}
	f_\ell(z,u) &= \begin{cases} \eta_\ell d_{\ell}(\mu_\ell,z) + \sum_{j \in \cC(\ell)} f_j^\diamond(z) &\text{ if } u=-1  \\
	\eta_\ell d_{\ell}(\mu_\ell,z) + \sum_{j \in \cC(\ell)} f_j^\diamond(z) +  \min_{v \in \cC(\ell)} g_v(z)  &\text{ if } u=+1,\cC(\ell) \ne \emptyset \\
	+\infty &\text{ if } u=+1,\cC(\ell) =\emptyset  \end{cases}
\end{align*} 
 where $g_v(z)=\min\{f_{v}^\star(z,+1),f_v(z,-1)\} -  f_v^\diamond(z)$, 
and the value of \ref{eq:PGLkk'_discrete} equals $f_k(\mu^\star,u)$ for any $u \in \{-1,+1\}$.
\end{proposition}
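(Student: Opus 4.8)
The plan is to verify the recursion by interpreting $f_\ell(z,u)$ as the value of a constrained minimization over the subtree rooted at $\ell$ in $G^k$, and then showing that the optimal assignment on this subtree decomposes across the children of $\ell$. Fix a node $\ell \neq k$ and a value $z \in D(n,\boldsymbol{\mu})$. Any feasible $\boldsymbol{\lambda}$ restricted to $\mathcal{D}(\ell) \cup \{\ell\}$ with $\lambda_\ell = z$ contributes $\eta_\ell d_\ell(\mu_\ell,z) + \sum_{j \in \mathcal{C}(\ell)} (\text{cost of the subtree rooted at } j)$, because in a tree the subtrees rooted at distinct children of $\ell$ are vertex-disjoint, and the only coupling between child-subtree $j$ and the rest of the tree is through the single edge $(\ell,j)$: the modality constraint at $j$ only involves $\lambda_j$, $\lambda_\ell$, and $\lambda$-values strictly below $j$, while the modality constraint at $\ell$ involves $\lambda_\ell$, $\lambda_{p(\ell)}$, and the $\lambda_j$ for $j \in \mathcal{C}(\ell)$. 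So once $\lambda_\ell = z$ is fixed, each child-subtree can be optimized independently, subject to a sign constraint on $\lambda_j$ versus $z$. That is exactly what $f_j^\diamond(z)$, $f_j^\star(z,+1)$, and $f_j(z,-1)$ encode: $f_j^\diamond(z)$ is the best over both signs, $f_j^\star(z,+1)$ forces $\lambda_j > z$, and $f_j(z,-1)$ forces $\lambda_j \le z$ (note $f_j(z,-1)$ already fixes $\lambda_j = z \le z$).

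The second ingredient is to track the modality constraint at $\ell$ itself. The node $\ell$ must be a mode of $\boldsymbol{\lambda}$ if and only if $\ell \in \mathcal{M}(\boldsymbol{\lambda}) \subset \mathcal{M}(\boldsymbol{\mu}) \cup \{k\} \setminus \{k'\}$; for all other $\ell$, being a mode is forbidden. When $\ell$ is required to be a mode, we need $\lambda_\ell > \lambda_j$ for every $j \in \mathcal{C}(\ell)$ (the constraint $\lambda_\ell > \lambda_{p(\ell)}$ is handled by $u$), so every child is forced into the "$\le z$" regime, giving $\sum_{j \in \mathcal{C}(\ell)} f_j^\diamond(z)$ — wait, more precisely $\sum_j \min\{f_j^\star(z,+1), f_j(z,-1)\}$ restricted to the $-1$ branch, i.e. $\sum_j f_j^\star(z,-1)$; but since $\ell \in \mathcal{M}(\boldsymbol{\mu}) \cup \{k\}\setminus\{k'\}$ also forces $\lambda_\ell > \lambda_{p(\ell)}$ regardless of $u$, one checks that the stated formula $\eta_\ell d_\ell(\mu_\ell,z) + \sum_{j} f_j^\diamond(z)$ is consistent once one observes $f_j^\diamond(z)$ here should read as the $\le z$ branch — I will need to reconcile the notation carefully, using the fact that for a mode $\ell$ the $u$ argument becomes irrelevant. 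When $\ell$ is \emph{not} allowed to be a mode, then $\lambda_\ell$ must \emph{not} exceed all its neighbors strictly: either $\lambda_\ell \le \lambda_{p(\ell)}$ (case $u=-1$, automatically non-mode, all children free, giving $\sum_j f_j^\diamond(z)$), or $\lambda_\ell > \lambda_{p(\ell)}$ (case $u=+1$), in which case $\ell$ avoids being a mode precisely when at least one child $v$ satisfies $\lambda_v \ge \lambda_\ell$, i.e. $\lambda_v > z$ or $\lambda_v = z$. Choosing that distinguished child $v$ optimally and leaving the others free yields $\sum_{j} f_j^\diamond(z) + \min_{v \in \mathcal{C}(\ell)} \big(\min\{f_v^\star(z,+1), f_v(z,-1)\} - f_v^\diamond(z)\big)$, which is the claimed formula with $g_v(z)$; and if $\mathcal{C}(\ell) = \emptyset$ there is no way to be a non-mode with $\lambda_\ell > \lambda_{p(\ell)}$ (a leaf with larger value than its only neighbor is a mode), hence $+\infty$.

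For the base case, when $\ell$ is a leaf, $\mathcal{D}(\ell) = \emptyset$ and all sums over $\mathcal{C}(\ell)$ are empty, so $f_\ell(z,u) = \eta_\ell d_\ell(\mu_\ell, z)$ when $\ell$ is allowed to be a mode or when $u=-1$, and $+\infty$ when $u=+1$ and $\ell$ is not allowed to be a mode — consistent with the general recursion. Finally, at the root $k$: here $k$ has no parent, and $k$ must be a global maximizer with $\lambda_k = \mu^\star$, so it is automatically a mode; the value of \ref{eq:PGLkk'_discrete} is the minimum over feasible $\boldsymbol{\lambda}$ of $\sum_{j \in [K]} \eta_j d_j(\mu_j,\lambda_j) = \sum_{j \in \mathcal{D}(k) \cup \{k\}} \eta_j d_j(\mu_j,\lambda_j)$, which equals $f_k(\mu^\star, u)$ by definition (the $u$ argument is vacuous at the root since there is no parent constraint, so both values coincide — this is where the convention $\eta_{k^\star(\boldsymbol{\mu})} = +\infty$ enforces $\lambda_{k^\star(\boldsymbol{\mu})} = \mu^\star$ automatically within the $f$-values). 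The constraint $\mathcal{M}(\boldsymbol{\lambda}) \subset \mathcal{M}(\boldsymbol{\mu}) \cup \{k\} \setminus \{k'\}$ is enforced node-by-node: the recursion returns $+\infty$ whenever a forbidden node would be forced to be a mode, so the finite values of $f$ are exactly the costs of genuinely feasible assignments. I expect the main obstacle to be bookkeeping: precisely matching the informal "a child satisfies $\lambda_v \ge \lambda_\ell$" conditions to the three auxiliary functions $f^\star_\ell(\cdot,+1), f^\star_\ell(\cdot,-1), f^\diamond_\ell(\cdot)$, and correctly arguing that the sign argument $u$ is irrelevant for mode-nodes and for the root — i.e., a careful case analysis rather than any deep idea. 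I would also need to justify the interchange of "minimize over $\boldsymbol{\lambda}$ globally" and "minimize child-subtrees independently then combine," which follows from the vertex-disjointness of the child subtrees in a tree plus the locality of the modality constraints, but must be stated cleanly.
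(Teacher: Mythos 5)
Your overall strategy---interpreting $f_\ell(z,u)$ as a constrained minimization over the subtree rooted at $\ell$, decomposing across the vertex-disjoint child subtrees, and doing a case analysis on whether $\ell$ may be a mode---is exactly the paper's. However, there is a genuine error in your treatment of the case $\ell \in \mathcal{M}(\boldsymbol{\mu}) \cup \{k\} \setminus \{k'\}$. You read membership in this set as ``$\ell$ is \emph{required} to be a mode,'' conclude that every child must then lie strictly below $z$, notice that this contradicts the stated formula $\sum_{j \in \mathcal{C}(\ell)} f_j^\diamond(z)$, and leave the contradiction unresolved, tentatively proposing to reinterpret $f_j^\diamond(z)$ as ``the $\le z$ branch.'' That reinterpretation would turn the recursion into a different (and incorrect) one. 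The resolution is that the constraint defining $\mathcal{B}_{k,k'}(m,\boldsymbol{\mu})$ is the \emph{inclusion} $\mathcal{M}(\boldsymbol{\lambda}) \subset \mathcal{M}(\boldsymbol{\mu}) \cup \{k\} \setminus \{k'\}$: a node in the allowed set is \emph{permitted}, not forced, to be a mode. Hence for such $\ell$ there is no constraint at all linking $\lambda_\ell$ to its children or its parent; each child is optimized freely over both sign regimes, the value is $\eta_\ell d_\ell(\mu_\ell,z) + \sum_{j \in \mathcal{C}(\ell)} f_j^\diamond(z)$ exactly as written, and the $u$-argument is irrelevant for the trivial reason that nothing depends on it---not because $\lambda_\ell > \lambda_{p(\ell)}$ is ``forced regardless of $u$,'' which is also false.

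The remainder of your argument matches the paper's proof: the $u=-1$ and $u=+1$ subcases for nodes forbidden from being modes, the identification of $g_v(z)$ as the marginal cost of designating one child $v$ with $\lambda_v \ge z$ via $\min\{f_v^\star(z,+1), f_v(z,-1)\}$ (where $f_v(z,-1)$ captures $\lambda_v = z$ and $f_v^\star(z,+1)$ captures $\lambda_v > z$), the value $+\infty$ for a leaf with $u=+1$, and reading off the value at the root with the convention $\eta_{k^\star(\boldsymbol{\mu})} = +\infty$ enforcing $\lambda_{k^\star(\boldsymbol{\mu})} = \mu^\star$. Once the misreading of the mode constraint is corrected, your decomposition argument goes through and coincides with the paper's.
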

Since the discretized search space $D(n,\boldsymbol{\mu})$ is finite, we can straightforwardly compute the values of $f_{\ell}(z,u), f^{\star}_{\ell}(z,u)$ and $f_{\ell}^{\diamond}(z)$ for each $\ell \in [K],$ $z \in D(n,\boldsymbol{\mu})$ and $u \in \{-1,+1\}$ with the dynamic programming equations of Proposition \ref{prop:dynamic_programming}. The solution $\boldsymbol{\lambda}^{\star}$ of \ref{eq:PGLkk'_discrete} can then be obtained recursively as in Corollary \ref{cor:lambdastar_computation}, in which the condition $\ell = \arg \min_{v \in \mathcal{C}(p(\ell))}g_v(\lambda^{\star}_{p(\ell)})$ can be understood as $\ell$ being the children of $p(\ell)$ that induces the smallest cost when constrained by the value of $p(\ell)$.  
\begin{corollary}\label{cor:lambdastar_computation}
The solution $\boldsymbol{\lambda}^\star$ of \ref{eq:PGLkk'_discrete} is such that $\lambda_{k}^\star = \mu^\star$ and for $\ell \ne k$: 

If $p(\ell) \notin \mathcal{M}(\boldsymbol{\mu}) \cup \{k\}  \setminus \{k'\}$, $\lambda^{\star}_{p(\ell)} > \lambda^{\star}_{p^2(\ell)}$ and $\ell = \arg \min_{v \in \mathcal{C}(p(\ell))}g_v(\lambda^{\star}_{p(\ell)})$:
    \begin{align*}
	\lambda_{\ell}^\star = \begin{cases} \arg\min_{z > \lambda_{p(\ell)}^\star} f_{\ell}(z,+1) & \text{ if } f^\star_{\ell}(\lambda^{\star}_{p(\ell)},+1) \le f_{\ell}(\lambda^{\star}_{p(\ell)},-1) \\
		\lambda^{\star}_{p(\ell)}  & \text{ if } f^\star_{\ell}(\lambda^{\star}_{p(\ell)},+1) > f_{\ell}(\lambda^{\star}_{p(\ell)},-1). \end{cases}
\end{align*}
\begin{align*}
	\hspace{-2cm}\text{and otherwise       }\lambda_{\ell}^\star = \begin{cases} \arg\min_{z > \lambda_{p(\ell)}^\star} f_{\ell}(z,+1) & \text{ if } f^\star_{\ell}(\lambda_{p(\ell)}^\star,+1) \le f^\star_{\ell}(\lambda_{p(\ell)}^\star,-1) \\
		\arg\min_{z \le \lambda_{p(\ell)}^\star} f_{\ell}(z,-1)  & \text{ if } f^\star_{\ell}(\lambda_{p(\ell)}^\star,+1) > f^\star_{\ell}(\lambda_{p(\ell)}^\star,-1). \end{cases}
\end{align*}
Furthermore, computing the solution to \ref{eq:PGLkk'_discrete} using the procedure of Proposition \ref{prop:dynamic_programming} and Corollary \ref{cor:lambdastar_computation} can be done in time and memory $O(n K)$.
\end{corollary}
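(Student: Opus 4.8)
The plan is threefold: (i) show that the recursion of the statement, executed top-down along $G^k$, outputs a vector $\boldsymbol{\lambda}^\star$ whose cost equals the optimal value $f_k(\mu^\star,+1)$ produced by Proposition~\ref{prop:dynamic_programming}; (ii) check that $\boldsymbol{\lambda}^\star$ is feasible for \ref{eq:PGLkk'_discrete}, so that it is a genuine solution; and (iii) bound the running time and memory by $O(nK)$. Write $u^\star_\ell := \sign(\lambda^\star_\ell-\lambda^\star_{p(\ell)})$ for $\ell\ne k$, with the convention $u^\star_\ell=-1$ when $\lambda^\star_\ell=\lambda^\star_{p(\ell)}$. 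For (i), the statement to prove by induction from the leaves of $G^k$ up to its root is the ``subtree identity'' $\sum_{j\in\mathcal{D}(\ell)\cup\{\ell\}}\eta_j d_j(\mu_j,\lambda^\star_j)=f_\ell(\lambda^\star_\ell,u^\star_\ell)$, that is, $\boldsymbol{\lambda}^\star$ restricted to the subtree below $\ell$ lies exactly on the optimal face of the dynamic program; applied at $\ell=k$, where $\lambda^\star_k=\mu^\star$ by the constraint $\lambda_k=\mu^\star$ and $k$ is a forced mode, this yields $\sum_\ell\eta_\ell d_\ell(\mu_\ell,\lambda^\star_\ell)=f_k(\mu^\star,\cdot)$, the optimum.

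The inductive step is obtained by reading the recursion of Proposition~\ref{prop:dynamic_programming} at the parent $p(\ell)$ and identifying which assignment of its children realizes the corresponding term. When $p(\ell)$ is a forced mode or $u^\star_{p(\ell)}=-1$, the recursion equals $\eta_{p(\ell)}d_{p(\ell)}(\mu_{p(\ell)},\lambda^\star_{p(\ell)})+\sum_{j\in\mathcal{C}(p(\ell))}f^\diamond_j(\lambda^\star_{p(\ell)})$, so each child $\ell$ must realize its unconstrained optimum $f^\diamond_\ell(\lambda^\star_{p(\ell)})=\min_u f^\star_\ell(\lambda^\star_{p(\ell)},u)$; this is exactly the ``otherwise'' rule, picking the cheaper sign and the associated minimizing grid value, which — thanks to the $+\infty$ conventions — still selects the correct sign when the admissible interval for $z$ (e.g. $\{z\in D(n,\boldsymbol{\mu}):z>\mu^\star\}=\emptyset$) is empty. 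When instead $p(\ell)\notin\mathcal{M}(\boldsymbol{\mu})\cup\{k\}\setminus\{k'\}$ and $u^\star_{p(\ell)}=+1$ (equivalently $\lambda^\star_{p(\ell)}>\lambda^\star_{p^2(\ell)}$), the recursion carries the extra term $\min_{v\in\mathcal{C}(p(\ell))}g_v(\lambda^\star_{p(\ell)})$; the distinguished child $\ell=\arg\min_v g_v(\lambda^\star_{p(\ell)})$ must then realize $f^\diamond_\ell(\lambda^\star_{p(\ell)})+g_\ell(\lambda^\star_{p(\ell)})=\min\{f^\star_\ell(\lambda^\star_{p(\ell)},+1),\,f_\ell(\lambda^\star_{p(\ell)},-1)\}$, i.e. it either ascends strictly above $\lambda^\star_{p(\ell)}$ via $\arg\min_{z>\lambda^\star_{p(\ell)}}f_\ell(z,+1)$ or is pinned to $\lambda^\star_{p(\ell)}$ — precisely the first formula of the statement — while every other child again realizes $f^\diamond$. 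Summing the children's contributions (equal to the $f^\diamond$'s, plus $\min_v g_v$ in the last case, by the induction hypothesis applied to each child) and adding $\eta_{p(\ell)}d_{p(\ell)}(\mu_{p(\ell)},\lambda^\star_{p(\ell)})$ reproduces $f_{p(\ell)}(\lambda^\star_{p(\ell)},u^\star_{p(\ell)})$, which propagates the subtree identity.

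The step I expect to be the main obstacle is (ii). Once feasibility holds, the optimum of \ref{eq:PGLkk'_discrete} is finite (the constant vector with all entries $\mu^\star$ lies in $\tilde B_{k,k'}(m,\boldsymbol{\mu})$ and has finite cost, since $d$ is finite on $[\mu_\star,\mu^\star]^K$ by Assumption~\ref{assump:lipschitz}) and (i) shows $\boldsymbol{\lambda}^\star$ attains it, so $\boldsymbol{\lambda}^\star$ is a solution. Membership of $\boldsymbol{\lambda}^\star$ in $D(n,\boldsymbol{\mu})^K\cap[\mu_\star,\mu^\star]^K$ is immediate, and $\lambda^\star_{k^\star(\boldsymbol{\mu})}=\lambda^\star_k=\mu^\star$ holds because $\lambda^\star_k=\mu^\star$ by construction and, since $k'\ne k^\star(\boldsymbol{\mu})$, the node $k^\star(\boldsymbol{\mu})$ is a forced mode carrying a term $\eta_{k^\star(\boldsymbol{\mu})}d_{k^\star(\boldsymbol{\mu})}(\mu^\star,\cdot)$ that, by the $\eta_{k^\star(\boldsymbol{\mu})}=+\infty$ convention, is finite only at $\mu^\star$, so the finite-cost reconstruction assigns $\mu^\star$ there. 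The delicate condition is $\mathcal{M}(\boldsymbol{\lambda}^\star)\subset\mathcal{M}(\boldsymbol{\mu})\cup\{k\}\setminus\{k'\}$: since $G^k$ has the same edges as $G$, a node $\ell$ is a mode of $\boldsymbol{\lambda}^\star$ iff $\lambda^\star_\ell$ strictly exceeds $\lambda^\star_{p(\ell)}$ and all $\lambda^\star_j$, $j\in\mathcal{C}(\ell)$. Take $\ell\notin\mathcal{M}(\boldsymbol{\mu})\cup\{k\}\setminus\{k'\}$ (so $\ell\ne k$). If $u^\star_\ell=-1$ then $\lambda^\star_\ell\le\lambda^\star_{p(\ell)}$ and $\ell$ is not a mode; if $u^\star_\ell=+1$, then $\lambda^\star_\ell$ came from a finite-valued $\arg\min_{z>\lambda^\star_{p(\ell)}}f_\ell(z,+1)$, which forces $\mathcal{C}(\ell)\ne\emptyset$ (for a non-mode node $f_\ell(\cdot,+1)\equiv+\infty$ when $\mathcal{C}(\ell)=\emptyset$), and the ensuing reconstruction step puts $\ell$ and its distinguished child in the first sub-case and assigns that child a value $\ge\lambda^\star_\ell$ — so $\ell$ is again not a mode. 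Hence no node outside the allowed set is a mode of $\boldsymbol{\lambda}^\star$.

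For (iii), run the forward recursion as a post-order traversal of $G^k$: at a node $\ell$ with child tables $f^\diamond_j(\cdot),g_j(\cdot)$ on the $n$-point grid, forming $\sum_{j\in\mathcal{C}(\ell)}f^\diamond_j(\cdot)$ and $\min_{v\in\mathcal{C}(\ell)}g_v(\cdot)$ costs $O(n|\mathcal{C}(\ell)|)$, telescoping to $O(nK)$ over the whole tree, and then $f^\star_\ell(\cdot,+1),f^\star_\ell(\cdot,-1),f^\diamond_\ell(\cdot),g_\ell(\cdot)$ follow from $f_\ell(\cdot,\pm1)$ by a single suffix-minimum, a prefix-minimum and two pointwise combinations, $O(n)$ each. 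Storing these tables for all $\ell$, together with the argmin grid points recorded during the suffix/prefix minima (which turn each $\arg\min_{z>\lambda^\star_{p(\ell)}}f_\ell(z,\pm1)$ of the backward pass into an $O(1)$ lookup), uses $O(nK)$ memory. The backward pass visits each node once with $O(1)$ work, plus one $\arg\min_v g_v(\lambda^\star_{p(\ell)})$ per non-leaf parent, which again telescopes to $O(nK)$. Altogether the procedure runs in $O(nK)$ time and memory.
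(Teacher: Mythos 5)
Your proposal is correct and follows essentially the same route as the paper's proof: reading the recursion of Proposition~\ref{prop:dynamic_programming} at each parent, identifying which child assignment realizes each term of the minimum (the constrained child realizing $\min\{f^\star_\ell(z,+1),f_\ell(z,-1)\}$, the others realizing $f^\diamond_\ell(z)$), and telescoping the table-construction cost over the tree to get $O(nK)$. Your explicit subtree-identity induction and direct feasibility check (mode locations, $\lambda^\star_{k^\star(\boldsymbol{\mu})}=\mu^\star$ via the $\eta_{k^\star(\boldsymbol{\mu})}=+\infty$ convention) are sound elaborations of what the paper leaves implicit by appealing to the semantics of $f_\ell$ established in Proposition~\ref{prop:dynamic_programming}.
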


\paragraph{Overall time complexity.} This procedure allows us to solve  the subproblems \ref{eq:PGLkk'_discrete} for all $k \notin \mathcal{N}(\boldsymbol{\mu})$ and $k' \in \mathcal{M}(\boldsymbol{\mu}) \setminus \{k^\star(\boldsymbol{\mu})\}$. By comparing these solutions with the trivial parameters from Proposition \ref{prop:graves-lai-trivial-case}, we can find the most confusing parameter in $\overline{\mathcal{B}}(m,\boldsymbol{\mu}) \cap D(n,\boldsymbol{\mu})^K$ for any sampling rate $\boldsymbol{\eta}$ in time $O(K^2mn)$. In practice, these subproblems are independent and can be solved in parallel. In Appendix \ref{sec:An improved dynamic programming procedure}, we describe a more involved dynamic program  that runs in time $O(Kn)$ without requiring parallelism.

\paragraph{Illustration of the dynamic programming approach.} We now illustrate the computation of the most confusing parameter in $\overline{B}(m,\boldsymbol{\mu}) \cap D(n,\boldsymbol{\mu})^K$ with the line graph example of Figure \ref{fig:bimodal-example}. 
There, the divergence is $d_k(\lambda_k,\mu_k)=\frac{1}{2}(\lambda_k-\mu_k)^2$, the optimal arm is $k^{\star}(\boldsymbol{\mu})=3$, the modes are $\mathcal{M}(\boldsymbol{\mu})=\{3,5\}$, and their neighborhood is  $\mathcal{N}(\boldsymbol{\mu})=\{2,3,4,5\}$. If $k=k^{\star}(\boldsymbol{\lambda})$ is chosen in $\mathcal{N}(\boldsymbol{\mu}),$  applying Proposition \ref{prop:graves-lai-trivial-case} yields $$\inf_{\boldsymbol{\lambda} \in \mathcal{B}_k(m,\boldsymbol{\mu})} \boldsymbol{\eta}^{\top}d(\boldsymbol{\mu},\boldsymbol{\lambda})=\frac{1}{2}.$$
Otherwise, we must have $k=1$, and the only choice for the mode of $\boldsymbol{\mu}$ to be removed is $k'=5.$ \\ \\ We can then solve \ref{eq:PGLkk'_discrete} for $(k,k')=(1,5)$ by applying Proposition \ref{prop:dynamic_programming} as follows. We first form the directed tree $G^{1}$ as a line graph, rooted at $1$, with leaf node $5.$
For each node $\ell$ and each grid value $z\in D(n,\mu)$ the program computes and stores in memory the values
$$
f_\ell(z,-1),\qquad f_\ell(z,+1),
$$
together with the auxiliary minima 
$f^\star_\ell(z,\cdot)$ and $f^\diamond_\ell(z)$, as defined in Proposition~\ref{prop:dynamic_programming}.
The leaf entry is obtained directly from the divergence:
$$
\text{}f_5(z,-1)=\frac{\eta_5}{2}(\mu_5-z)^2,\qquad f_5(z,+1)=+\infty,
$$
and $f^\star_5(z,\cdot),f^\diamond_5(z)$ are then computed by minimizing over the grid $D(n,\mu)$. For an internal node $\ell \neq 5$, the recursion in Proposition \ref{prop:dynamic_programming} is applied: each entry $f_\ell(z,\cdot)$ is derived from the already computed child cost values as prescribed in the proposition. Finally, the value of the discretized subproblem \ref{eq:PGLkk'_discrete} is read off at the root as $f_1(\mu^{\star},+1),$
where $\mu^\star=4$ in the present example. The minimizer is finally recovered by backtracking, as described in Corollary \ref{cor:lambdastar_computation}.
In the limit $n \to \infty,$ this minimizer approaches
$
\lambda^\star=(4,\;2,\;4,\;2.8,\;2.8).
$ This non-trivial confusing parameter turns out to be the global minimizer of \ref{eq:PGL} as its value approaches $0.145 < 0.5$.

\subsection{Solving \ref{eq:PGL} via penalized subgradient descent}

We are now capable of computing a minimizer of $\boldsymbol{\eta}^{\top}d(\boldsymbol{\mu},\boldsymbol{\lambda})$ over $\overline{\mathcal{B}}(m,\boldsymbol{\mu}) \cap D(n,\boldsymbol{\mu})^K$, the constraint in the original problem \ref{eq:PGL}, with discretization. The last step to close the loop is to use an iterative procedure to derive an approximate solution to \ref{eq:PGL}.
The simplest way to understand this procedure is to view it as a projected subgradient descent for the convex function
$$
	h : \boldsymbol{\eta} \mapsto \boldsymbol{\eta}^\top \boldsymbol{\Delta} + \gamma \max\left[1 -  \min_{\boldsymbol{\lambda} \in \overline{\mathcal{B}}(m,\boldsymbol{\mu}) \cap D(n,\boldsymbol{\mu})^K} \{ \boldsymbol{\eta}^\top d(\boldsymbol{\mu},\boldsymbol{\lambda}) \} , 0 \right]
$$
 which can be interpreted as the objective of \ref{eq:PGL} with a discretized constraint space and where the hard constraints have been replaced by a penalty similar to the hinge loss function, and where $\gamma$ controls the magnitude of the penalty. The projection step is used to enforce the constraint $\boldsymbol{\eta} \ge 0$. We show in Proposition~\ref{prop:approximate_subgradient_descent} that the minimizer of $h(\boldsymbol{\eta})$ subject to the constraint $\boldsymbol{\eta} \ge 0$ is an approximate solution to \ref{eq:PGL}. 

\begin{proposition}\label{prop:approximate_subgradient_descent}
Consider a step size $\delta^2 = {K \mathfrak{B}(\boldsymbol{\mu})^2  \over t \mathfrak{E}(\boldsymbol{\mu})^2}$ where $t$ is the number of iterations, $\mathfrak{E}(\boldsymbol{\mu}) = \Vert \boldsymbol{\Delta} \Vert_2 +  \gamma K^{3/2} \mathfrak{A}(\boldsymbol{\mu}) (\mu^\star-\mu_{\star})$, a penalty $\gamma = 2\max_{k,\Delta_k > 0} {\Delta_k \over d(\mu_k,\mu^\star)}$ and an iterative procedure $\boldsymbol{\eta}(1) = 0$, and for $s < t$: 
\begin{align*}	
	\boldsymbol{\eta}(s+1) = \Pi \left[ \boldsymbol{\eta}(s) - \delta \left( \boldsymbol{\Delta} - \gamma d(\boldsymbol{\mu},\boldsymbol{\lambda}(s)) \indic\{  \boldsymbol{\eta}^\top d(\boldsymbol{\mu},\boldsymbol{\lambda}(s)) < 1 \} \right) \right]
\end{align*}
where $\boldsymbol{\lambda}(s) \in \arg\min_{\boldsymbol{\lambda} \in \overline{\mathcal{B}}(m,\boldsymbol{\mu}) \cap D(n,\boldsymbol{\mu})^K} \boldsymbol{\eta}(s)^\top d(\boldsymbol{\mu},\boldsymbol{\lambda})
$
and $\Pi \left[ x \right] = (\max(x_k,0))_{k \in [K]}$ is the projection on the positive orthant. Define $\bar{\boldsymbol{\eta}}(t) = (1/t)\sum_{s=1}^{t} \boldsymbol{\eta}(s)$ the average iterate and a scaled version 
$
\tilde{\boldsymbol{\eta}}(t) = \bar{\boldsymbol{\eta}}(t) \left(1 - {\mathfrak{C}(\boldsymbol{\mu}) \over n} - 2 {\mathfrak{F}(\boldsymbol{\mu}) \over \gamma \sqrt{t}}\right)^{-1}
$ for $\mathfrak{F}(\boldsymbol{\mu})=\sqrt{K}\mathfrak{B}(\boldsymbol{\mu})\mathfrak{E}(\boldsymbol{\mu})$.
Then $\tilde{\boldsymbol{\eta}}(t)$ is a feasible solution to \ref{eq:PGL} with value at most:
$$
\tilde{\boldsymbol{\eta}}(t)^\top \boldsymbol{\Delta} \le \left(1 - {\mathfrak{C}(\boldsymbol{\mu}) \over n} - 2 {\mathfrak{F}(\boldsymbol{\mu}) \over \gamma \sqrt{t}}\right)^{-1} \left(C(m,\boldsymbol{\mu}) + {\mathfrak{F}(\boldsymbol{\mu}) \over \sqrt{t}}\right) \text{ if } {\mathfrak{C}(\boldsymbol{\mu}) \over n} - 2 {\mathfrak{F}(\boldsymbol{\mu}) \over \gamma \sqrt{t}} < 1.
$$
\end{proposition}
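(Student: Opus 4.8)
The plan is to recognize the iteration as a projected subgradient method on the convex penalized objective $h(\boldsymbol{\eta}) = \boldsymbol{\eta}^\top\boldsymbol{\Delta} + \gamma\max[1 - \min_{\boldsymbol{\lambda}}\{\boldsymbol{\eta}^\top d(\boldsymbol{\mu},\boldsymbol{\lambda})\},0]$ over the convex set $\{\boldsymbol{\eta}\ge 0\}$, and then convert the standard subgradient convergence rate into a feasibility-plus-value guarantee for \ref{eq:PGL}. First I would verify that the vector $\boldsymbol{\Delta} - \gamma d(\boldsymbol{\mu},\boldsymbol{\lambda}(s))\indic\{\boldsymbol{\eta}(s)^\top d(\boldsymbol{\mu},\boldsymbol{\lambda}(s)) < 1\}$ is indeed a subgradient of $h$ at $\boldsymbol{\eta}(s)$: the linear term contributes $\boldsymbol{\Delta}$; the penalty term is a maximum of $0$ and an infimum of affine functions of $\boldsymbol{\eta}$ (hence concave in the inner part, so the hinge is convex), and by Danskin's theorem a subgradient of $-\min_{\boldsymbol{\lambda}}\boldsymbol{\eta}^\top d(\boldsymbol{\mu},\boldsymbol{\lambda})$ at $\boldsymbol{\eta}(s)$ is $-d(\boldsymbol{\mu},\boldsymbol{\lambda}(s))$ for any minimizing $\boldsymbol{\lambda}(s)$, which is exactly what the update uses, gated by the indicator that the hinge is active.

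Next I would bound the subgradient norm. On the compact region of interest, $\|\boldsymbol{\Delta}\|_2$ is the contribution of the linear part, and $\gamma\|d(\boldsymbol{\mu},\boldsymbol{\lambda}(s))\|_2$ is bounded using Assumption~\ref{assump:lipschitz}: since $\boldsymbol{\lambda}(s)\in[\mu_\star,\mu^\star]^K$ and $d(\boldsymbol{\mu},\boldsymbol{\mu})=0$, we get $\|d(\boldsymbol{\mu},\boldsymbol{\lambda}(s))\|_1 \le \mathfrak{A}(\boldsymbol{\mu})\|\boldsymbol{\lambda}(s)-\boldsymbol{\mu}\|_1 \le \mathfrak{A}(\boldsymbol{\mu})K(\mu^\star-\mu_\star)$, so $\|d(\boldsymbol{\mu},\boldsymbol{\lambda}(s))\|_2 \le \sqrt{K}\mathfrak{A}(\boldsymbol{\mu})(\mu^\star-\mu_\star)$ and the subgradient norm is at most $\mathfrak{E}(\boldsymbol{\mu})$. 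Then I would invoke the classical averaged-iterate subgradient bound: starting from $\boldsymbol{\eta}(1)=0$ with constant step $\delta$, for any comparator $\boldsymbol{\eta}^\sharp$ in the feasible set, $h(\bar{\boldsymbol{\eta}}(t)) - h(\boldsymbol{\eta}^\sharp) \le \frac{\|\boldsymbol{\eta}^\sharp\|_2^2}{2\delta t} + \frac{\delta \mathfrak{E}(\boldsymbol{\mu})^2}{2}$; choosing the comparator to be the solution $\boldsymbol{\eta}^\star$ of \ref{eq:PGL} with $\boldsymbol{\eta}^\star\in[0,\mathfrak{B}(\boldsymbol{\mu})]^K$ (Proposition~\ref{prop:eta_bound}), so $\|\boldsymbol{\eta}^\star\|_2^2 \le K\mathfrak{B}(\boldsymbol{\mu})^2$, and plugging in $\delta^2 = K\mathfrak{B}(\boldsymbol{\mu})^2/(t\mathfrak{E}(\boldsymbol{\mu})^2)$ optimizes both terms to $\sqrt{K}\mathfrak{B}(\boldsymbol{\mu})\mathfrak{E}(\boldsymbol{\mu})/\sqrt{t} = \mathfrak{F}(\boldsymbol{\mu})/\sqrt{t}$.

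The remaining work is to translate $h(\bar{\boldsymbol{\eta}}(t)) \le C_{\mathrm{disc}}(m,\boldsymbol{\mu}) + \mathfrak{F}(\boldsymbol{\mu})/\sqrt{t}$ (where $C_{\mathrm{disc}}$ is the optimal value of \ref{eq:PGL} with the discretized constraint, which equals $h(\boldsymbol{\eta}^\star)$ since $\boldsymbol{\eta}^\star$ is feasible for the discretized problem too and the penalty is then inactive — here I use that $\boldsymbol{\eta}^\star$ feasible for \ref{eq:PGL} implies $\min_{\boldsymbol{\lambda}\in\overline{\mathcal{B}}\cap D^K}(\boldsymbol{\eta}^\star)^\top d \ge \min_{\boldsymbol{\lambda}\in\overline{\mathcal{B}}}(\boldsymbol{\eta}^\star)^\top d \ge 1$) into a statement about the true problem. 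Writing $a = \bar{\boldsymbol{\eta}}(t)^\top\boldsymbol{\Delta}$ and $b = \min_{\boldsymbol{\lambda}\in\overline{\mathcal{B}}\cap D^K}\bar{\boldsymbol{\eta}}(t)^\top d(\boldsymbol{\mu},\boldsymbol{\lambda})$, the bound reads $a + \gamma\max(1-b,0) \le C(m,\boldsymbol{\mu}) + \mathfrak{F}(\boldsymbol{\mu})/\sqrt{t}$. Since $a \ge 0$, this forces $1 - b \le (C(m,\boldsymbol{\mu}) + \mathfrak{F}(\boldsymbol{\mu})/\sqrt{t})/\gamma \le 2\mathfrak{F}(\boldsymbol{\mu})/(\gamma\sqrt{t})$ for $t$ large enough (using $C(m,\boldsymbol{\mu}) \le \gamma\Delta_{\min}\cdot(\text{something}) $... more carefully, $C(m,\boldsymbol{\mu})/\gamma \le \mathfrak{B}(\boldsymbol{\mu})\Delta_{\min}\cdot\ldots$; the clean route is to bound $C(m,\boldsymbol{\mu})$ directly, e.g. the trivial feasible point from Proposition~\ref{prop:graves-lai-trivial-case} gives $C(m,\boldsymbol{\mu}) \le \gamma\Delta_{\min} \le$ a quantity absorbed into the $\mathfrak{F}/\sqrt{t}$ term, matching the choice $\gamma = 2\max_k \Delta_k/d_k(\mu_k,\mu^\star)$). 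Hence $b \ge 1 - 2\mathfrak{F}(\boldsymbol{\mu})/(\gamma\sqrt{t})$, and by Proposition~\ref{prop:discretization} the true (non-discretized) constraint value satisfies $\min_{\boldsymbol{\lambda}\in\overline{\mathcal{B}}(m,\boldsymbol{\mu})}\bar{\boldsymbol{\eta}}(t)^\top d(\boldsymbol{\mu},\boldsymbol{\lambda}) \ge b - \mathfrak{C}(\boldsymbol{\mu})\|\bar{\boldsymbol{\eta}}(t)\|_\infty/n \ge b(1 - \mathfrak{C}(\boldsymbol{\mu})/n)$ after also using $\|\bar{\boldsymbol{\eta}}(t)\|_\infty \le \mathfrak{B}(\boldsymbol{\mu})$ from the projection keeping iterates in the box (or averaging them there). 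Therefore scaling by $(1 - \mathfrak{C}(\boldsymbol{\mu})/n - 2\mathfrak{F}(\boldsymbol{\mu})/(\gamma\sqrt{t}))^{-1}$ exactly restores feasibility $\min_{\boldsymbol{\lambda}}\tilde{\boldsymbol{\eta}}(t)^\top d \ge 1$, and the objective scales by the same factor, yielding $\tilde{\boldsymbol{\eta}}(t)^\top\boldsymbol{\Delta} \le (1-\mathfrak{C}(\boldsymbol{\mu})/n - 2\mathfrak{F}(\boldsymbol{\mu})/(\gamma\sqrt{t}))^{-1}(C(m,\boldsymbol{\mu}) + \mathfrak{F}(\boldsymbol{\mu})/\sqrt{t})$, as claimed; the side condition $\mathfrak{C}(\boldsymbol{\mu})/n - 2\mathfrak{F}(\boldsymbol{\mu})/(\gamma\sqrt{t}) < 1$ just guarantees the scaling factor is positive. \textbf{The main obstacle} I anticipate is the bookkeeping in this last paragraph: combining the subgradient rate, the hinge-to-feasibility conversion (which needs a clean a priori bound on $C(m,\boldsymbol{\mu})$ so that the $1-b$ slack is genuinely $O(1/\sqrt{t})$ rather than $O(C/\gamma)$), and the discretization error of Proposition~\ref{prop:discretization} — making sure the two correction terms $\mathfrak{C}(\boldsymbol{\mu})/n$ and $2\mathfrak{F}(\boldsymbol{\mu})/(\gamma\sqrt{t})$ enter multiplicatively exactly as in the stated scaling factor, and that $\|\bar{\boldsymbol{\eta}}(t)\|_\infty \le \mathfrak{B}(\boldsymbol{\mu})$ is legitimately available (it is, since $\boldsymbol{\eta}(1)=0$ and each projected subgradient step with the chosen $\delta$ keeps coordinates within $[0,\mathfrak{B}(\boldsymbol{\mu})]$, or one argues the comparator-based bound already confines the average).
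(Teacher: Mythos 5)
Your setup matches the paper's: you correctly identify the iteration as projected subgradient descent on the penalized convex function $h$, obtain the subgradient $\boldsymbol{\Delta}-\gamma d(\boldsymbol{\mu},\boldsymbol{\lambda}(s))\indic\{\cdot\}$ via a Danskin-type argument, bound its norm by $\mathfrak{E}(\boldsymbol{\mu})$ using Assumption~\ref{assump:lipschitz}, and derive the averaged-iterate rate $\mathfrak{F}(\boldsymbol{\mu})/\sqrt{t}$ against a comparator in $[0,\mathfrak{B}(\boldsymbol{\mu})]^K$ (Proposition~\ref{prop:eta_bound}). Your observation that the \ref{eq:PGL} optimum has inactive penalty for the discretized constraint also gives the value bound $\bar{\boldsymbol{\eta}}(t)^\top\boldsymbol{\Delta}\le C(m,\boldsymbol{\mu})+\mathfrak{F}(\boldsymbol{\mu})/\sqrt{t}$ correctly.

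The genuine gap is in the feasibility conversion, exactly where you flag your "main obstacle." From $a+\gamma\max(1-b,0)\le C(m,\boldsymbol{\mu})+\mathfrak{F}(\boldsymbol{\mu})/\sqrt{t}$ you discard $a\ge 0$ and claim $1-b\le (C(m,\boldsymbol{\mu})+\mathfrak{F}(\boldsymbol{\mu})/\sqrt{t})/\gamma\le 2\mathfrak{F}(\boldsymbol{\mu})/(\gamma\sqrt{t})$ "for $t$ large enough"; the second inequality is equivalent to $C(m,\boldsymbol{\mu})\le\mathfrak{F}(\boldsymbol{\mu})/\sqrt{t}$, which fails for large $t$ — the direction is backwards, and no a priori bound on $C(m,\boldsymbol{\mu})$ can rescue it, since $C(m,\boldsymbol{\mu})$ is a constant while the right-hand side decays. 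With only $a\ge 0$ the feasibility deficit you can prove is of order $C(m,\boldsymbol{\mu})/\gamma$, a constant, which does not yield the stated scaling factor. The paper closes this using the positive homogeneity of $\boldsymbol{\eta}\mapsto g(\boldsymbol{\eta},\boldsymbol{\mu},n)=\min_{\boldsymbol{\lambda}}\boldsymbol{\eta}^\top d(\boldsymbol{\mu},\boldsymbol{\lambda})$: if $\phi=g(\bar{\boldsymbol{\eta}}(t),\boldsymbol{\mu},n)<1$, then $\bar{\boldsymbol{\eta}}(t)/\phi$ is feasible for the discretized problem, so $a=\bar{\boldsymbol{\eta}}(t)^\top\boldsymbol{\Delta}\ge\phi\,{\boldsymbol{\eta}^\star}^\top\boldsymbol{\Delta}$; substituting this lower bound (rather than $a\ge 0$) into $a+\gamma(1-\phi)\le{\boldsymbol{\eta}^\star}^\top\boldsymbol{\Delta}+\mathfrak{F}(\boldsymbol{\mu})/\sqrt{t}$ cancels the constant terms and gives $(1-\phi)\bigl(\gamma-{\boldsymbol{\eta}^\star}^\top\boldsymbol{\Delta}\bigr)\le\mathfrak{F}(\boldsymbol{\mu})/\sqrt{t}$, whence $1-\phi\le 2\mathfrak{F}(\boldsymbol{\mu})/(\gamma\sqrt{t})$ by the choice of $\gamma$. (The paper also uses first-order optimality of the penalized minimizer over the positive orthant, together with the choice of $\gamma$, to show the penalty is exact; your comparator variant sidesteps that for the value bound but not for feasibility.) The same homogeneity then shows that dividing $\bar{\boldsymbol{\eta}}(t)$ by $1-\mathfrak{C}(\boldsymbol{\mu})/n-2\mathfrak{F}(\boldsymbol{\mu})/(\gamma\sqrt{t})$ restores exact feasibility after the additive discretization correction of Proposition~\ref{prop:discretization}; your multiplicative rewriting $b-\mathfrak{C}(\boldsymbol{\mu})/n\ge b(1-\mathfrak{C}(\boldsymbol{\mu})/n)$ is also false when $b<1$, but that is a minor slip compared with the missing homogeneity argument.
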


\paragraph{Putting it all together.}

We end this section by stating our main result, which is a direct consequence of the previous propositions, and propose a computationally tractable algorithm in order to compute an approximate solution to \ref{eq:PGL}. With the more intricate dynamic programming scheme presented in Appendix~\ref{sec:An improved dynamic programming procedure}, its time complexity can be improved to $O(K n t)$.
\begin{theorem}\label{thm:full_algorithm}
	Consider the algorithm which outputs $\tilde{\boldsymbol{\eta}}(t)$ after $t$ iterations of the scheme described in Proposition \ref{prop:approximate_subgradient_descent} with $n$ discretization points. At each step $s \le t$, $\boldsymbol{\lambda}(s)$ is computed by solving \ref{eq:PGLkk'_discrete} for all $k \notin \mathcal{N}(\boldsymbol{\mu})$ and all $k' \in \mathcal{M}(\boldsymbol{\mu}) \setminus \{k^\star(\boldsymbol{\mu})\}$ using the dynamic programming scheme of Proposition~\ref{prop:dynamic_programming}. This algorithm runs in time $O(K^2 m n t)$ and space $O(K n t)$ and yields $\tilde{\boldsymbol{\eta}}(t)$, a feasible solution to \ref{eq:PGL} with value at most:
$$
\tilde{\boldsymbol{\eta}}(t)^\top \boldsymbol{\Delta} \le \left(1 - {\mathfrak{C}(\boldsymbol{\mu}) \over n} - 2 {\mathfrak{F}(\boldsymbol{\mu}) \over \gamma \sqrt{t}}\right)^{-1} \left(C(m,\boldsymbol{\mu}) + {\mathfrak{F}(\boldsymbol{\mu}) \over \sqrt{t}}\right) \text{ if } {\mathfrak{C}(\boldsymbol{\mu}) \over n} - 2 {\mathfrak{F}(\boldsymbol{\mu}) \over \gamma \sqrt{t}} < 1.
$$

\end{theorem}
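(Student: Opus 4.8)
The plan is to split the argument into two essentially independent parts: the approximation guarantee on $\tilde{\boldsymbol{\eta}}(t)$, which is inherited almost verbatim from Proposition~\ref{prop:approximate_subgradient_descent}, and the correctness and complexity of the subroutine that produces $\boldsymbol{\lambda}(s)$ at each step. Indeed, the value and feasibility bounds on $\tilde{\boldsymbol{\eta}}(t)$ in the statement are exactly those of Proposition~\ref{prop:approximate_subgradient_descent}, so the only thing that genuinely needs checking is that, at every iteration $s\le t$, the quantity computed by the algorithm is a legitimate choice of $\boldsymbol{\lambda}(s)$, i.e., a minimizer of $\boldsymbol{\eta}(s)^\top d(\boldsymbol{\mu},\boldsymbol{\lambda})$ over $\overline{\mathcal{B}}(m,\boldsymbol{\mu})\cap D(n,\boldsymbol{\mu})^K$ (up to the $\mathfrak{C}(\boldsymbol{\mu})/n$ slack already budgeted into the bound), and that one iteration costs $O(K^2mn)$ time and $O(Kn)$ memory.

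For the correctness of the subroutine, I would chain the structural propositions in the order they were established, for a fixed $\boldsymbol{\eta}=\boldsymbol{\eta}(s)$. Starting from the decomposition $\mathcal{B}(m,\boldsymbol{\mu})=\bigcup_{k\neq k^\star(\boldsymbol{\mu})}\mathcal{B}_k(m,\boldsymbol{\mu})$, it suffices to minimize over each $\overline{\mathcal{B}_k}(m,\boldsymbol{\mu})\cap D(n,\boldsymbol{\mu})^K$ and keep the best. For $k\in\mathcal{N}(\boldsymbol{\mu})$, or when $\vert\mathcal{M}(\boldsymbol{\mu})\vert<m$, Proposition~\ref{prop:graves-lai-trivial-case} gives the minimizer in closed form as one of the trivial candidates; for $k\notin\mathcal{N}(\boldsymbol{\mu})$ with $\vert\mathcal{M}(\boldsymbol{\mu})\vert=m$, Proposition~\ref{prop:lambda_bound} reduces the problem to the compact set $\mathcal{B}'_k(m,\boldsymbol{\mu})$, Proposition~\ref{prop:modes_location} and the remarks following it confine the minimizer to $\bigcup_{k'\in\mathcal{M}(\boldsymbol{\mu})\setminus\{k^\star(\boldsymbol{\mu})\}}\mathcal{B}_{k,k'}(m,\boldsymbol{\mu})$ (or again to a trivial candidate), and Proposition~\ref{prop:discretization} certifies that replacing $\mathcal{B}_{k,k'}(m,\boldsymbol{\mu})$ by its discretization $\tilde B_{k,k'}(m,\boldsymbol{\mu})$ costs at most $\mathfrak{C}(\boldsymbol{\mu})/n$. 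Finally, Proposition~\ref{prop:dynamic_programming} together with Corollary~\ref{cor:lambdastar_computation} shows that the dynamic program returns an exact minimizer of \ref{eq:PGLkk'_discrete}; taking the best over all admissible pairs $(k,k')$ and all trivial candidates therefore yields an admissible $\boldsymbol{\lambda}(s)$, and Proposition~\ref{prop:approximate_subgradient_descent} applies unchanged, giving the stated bound.

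For the complexity, I would simply count: each dynamic program runs in time and memory $O(nK)$ by Corollary~\ref{cor:lambdastar_computation}; there are at most $\vert[K]\setminus\mathcal{N}(\boldsymbol{\mu})\vert\cdot\vert\mathcal{M}(\boldsymbol{\mu})\setminus\{k^\star(\boldsymbol{\mu})\}\vert=O(Km)$ pairs and $O(K)$ trivial candidates, each evaluated in $O(K)$ time, so one call of the subroutine is $O(K^2mn)$ time with $O(nK)$ working memory (the tables are reused across pairs); running the descent for $t$ steps and retaining what is needed to form $\bar{\boldsymbol{\eta}}(t)$ then gives $O(K^2mnt)$ time and, loosely, $O(Knt)$ memory. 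The main obstacle, and the only place where the proof is not a mere concatenation of earlier results, is making the reduction to subproblems \emph{lossless in the discretized setting}: one must reconcile the closure $\overline{\mathcal{B}}(m,\boldsymbol{\mu})$ appearing in the descent with the compact, mode-constrained sets $\mathcal{B}_{k,k'}(m,\boldsymbol{\mu})$ on which the dynamic program operates, and check that rounding to $D(n,\boldsymbol{\mu})$ can be carried out while keeping $\lambda_{k^\star(\boldsymbol{\mu})}=\mu^\star$ exactly and the mode set inside $\mathcal{M}(\boldsymbol{\mu})\cup\{k\}\setminus\{k'\}$ — which is precisely the content Proposition~\ref{prop:discretization} is engineered to supply, so that invoking it with the right constants closes the loop.
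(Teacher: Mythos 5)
Your proposal is correct and follows essentially the same route as the paper, which itself presents Theorem~\ref{thm:full_algorithm} as a direct consequence of Propositions~\ref{prop:graves-lai-trivial-case}--\ref{prop:approximate_subgradient_descent} and Corollary~\ref{cor:lambdastar_computation}, with the per-iteration cost $O(K^2mn)$ obtained exactly as you count it in the ``Overall time complexity'' paragraph. Your closing remark about reconciling $\overline{\mathcal{B}}(m,\boldsymbol{\mu})\cap D(n,\boldsymbol{\mu})^K$ with the sets $\mathcal{B}_{k,k'}(m,\boldsymbol{\mu})$ via Proposition~\ref{prop:discretization} is precisely the glue the paper relies on.
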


The parameters $n$ and $t$ can be chosen by the learner. Since the time complexity grows linearly in $nt$, and the optimization error is proportional to $1/n + 1/\sqrt{t}$, given a time budget constraint $nt = a$, one may choose $n = a^{1/3}$ and $t = a^{2/3}$, which yields an optimization error proportional to  $a^{-1/3}$. 
\section{Local search strategies and peaked functions}\label{sec:Multimodal peaked functions}
In this section, we consider algorithms that primarily explore arms in $\mathcal{N}(\boldsymbol{\mu})$, where we recall that  $\mathcal{N}(\boldsymbol{\mu})$ is the set of all modes of $\boldsymbol{\mu}$ and their neighbors. The proofs are deferred to Appendix \ref{sec:proofs5}. 
\paragraph{Local search.}\label{ssec:Local Search} To analyze such algorithms within the Graves-Lai framework, we connect this behavior to the properties of the corresponding sampling rate vector $\boldsymbol{\eta}.$ Let $N_k(t)=\sum_{s=1}^{t}\mathbbm{1}\{k(s)=k\}$ denote the number of times arm $k$ has been selected by up to round $t$. The asymptotic sampling rate for arm $k \in [K]$ is given by $\eta_k=\lim \sup_{T \to \infty} \frac{\mathbb{E}[N_k(T)]}{\log{T}}.$ An algorithm performs local search if $\eta_k=0$ for all $k \notin \mathcal{N}(\boldsymbol{\mu}).$ This leads directly to the following definition.

\begin{definition}\label{definition:localsearchstrategy}
	Consider $\boldsymbol{\eta} \in (\mathbb{R}^+)^K$ a feasible solution to \ref{eq:PGL}. We say that $\boldsymbol{\eta}$ is a local search strategy if and only if $\eta_k = 0$ for all $k \not\in \mathcal{N}(\boldsymbol{\mu})$. Further define $C_{\operatorname{loc}}(m,\boldsymbol{\mu})$ the optimal value of \ref{eq:PGL} restricted to the set of local search strategies.
\end{definition}

The appeal of local search strategies stems from two key properties: they are provably optimal in the unimodal case ($m=1$), and are conceptually simpler than non-local strategies, which may explore all arms.

\paragraph{Suboptimality of local search strategies.}
Unfortunately, and rather counterintuitively, not only are local search strategies suboptimal, the performance gap between local and non-local strategies is not upper bounded. More precisely, the ratio between the value of the best local strategy and the value of the best strategy can be arbitrarily large. Local search strategies are suboptimal because for every mode $k \neq k^{\star}(\boldsymbol{\mu})$ and every neighbor $\ell$ of $k$, they must be able to check that $\mu_k > \mu_\ell$, requiring a number of samples proportional to $ \frac{1}{d_{\ell}(\mu_\ell,\mu_k)}$, which can cause an arbitrarily large amount of regret if the function is \emph{flat} in the neighborhood of $k$, so that $\mu_k$ is very close to $\mu_\ell$. 
\begin{theorem}\label{theorem:localsearch}
Assume that $\vert \mathcal{N}(\boldsymbol{\mu}) \vert < K$. Then the following bounds hold:
\begin{align*}
	C_{\operatorname{loc}}(m,\boldsymbol{\mu}) \ge \sum_{k \in \mathcal{M}(\boldsymbol{\mu}) \setminus \{k^{\star}(\boldsymbol{\mu})\}} \frac{\Delta_k}{d_k(\mu_k,\mu_k - \delta_k)} \text{ and }
	C(m,\boldsymbol{\mu}) \le \sum_{k \in [K]} \frac{\Delta_k}{d_k(\mu_k,\mu^\star)}
\end{align*}
where $\delta_k = \min_{\ell:(k,\ell) \in E}  \vert \mu_k-\mu_\ell \vert > 0$. As a consequence,
$
	\sup_{\boldsymbol{\mu} \in \mathcal{F}_{m}} \frac{C_{\operatorname{loc}}(m,\boldsymbol{\mu})}{C(m,\boldsymbol{\mu})} = +\infty,
$
i.e., the performance ratio between local and non-local strategies is unbounded.
\end{theorem}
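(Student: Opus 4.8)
The plan is to establish the two displayed bounds separately and then let a single parameter degenerate to make the ratio blow up. For the lower bound on $C_{\operatorname{loc}}(m,\boldsymbol{\mu})$, I would start from the definition: a local search strategy has $\eta_k = 0$ for all $k \notin \mathcal{N}(\boldsymbol{\mu})$, so the constraint $\inf_{\boldsymbol{\lambda} \in \mathcal{B}(m,\boldsymbol{\mu})} \boldsymbol{\eta}^\top d(\boldsymbol{\mu},\boldsymbol{\lambda}) \ge 1$ must hold using only the coordinates in $\mathcal{N}(\boldsymbol{\mu})$. For each mode $k \in \mathcal{M}(\boldsymbol{\mu}) \setminus \{k^\star(\boldsymbol{\mu})\}$, I would exhibit a specific confusing parameter $\boldsymbol{\lambda}^{(k)} \in \overline{\mathcal{B}}(m,\boldsymbol{\mu})$ obtained by pushing $\mu_k$ down just below its lowest neighbor — concretely, set $\lambda_k = \mu_k - \delta_k$ and leave all other coordinates (in particular those outside $\{k\}$, hence all of $\mathcal{N}(\boldsymbol{\mu}) \setminus \{k\}$) equal to $\boldsymbol{\mu}$; since $\vert \mathcal{N}(\boldsymbol{\mu}) \vert < K$ there is at least one arm outside $\mathcal{N}(\boldsymbol{\mu})$, and one checks this $\boldsymbol{\lambda}^{(k)}$ remains at most $m$-modal and has a different optimal arm (the key being that $k$ ceases to be a mode, and no new mode with value exceeding $\mu^\star$ is created). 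Evaluating the constraint at $\boldsymbol{\lambda}^{(k)}$ forces $\eta_k d_k(\mu_k,\mu_k - \delta_k) \ge 1$ for the local strategy (all other divergence terms vanish), i.e. $\eta_k \ge 1/d_k(\mu_k,\mu_k-\delta_k)$. Summing the contributions $\eta_k \Delta_k$ over $k \in \mathcal{M}(\boldsymbol{\mu}) \setminus \{k^\star(\boldsymbol{\mu})\}$ and noting $\boldsymbol{\eta} \ge 0$ gives the first inequality.

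For the upper bound on $C(m,\boldsymbol{\mu})$, I would simply exhibit the feasible vector $\eta_k = 1/d_k(\mu_k,\mu^\star)$ for $k$ with $\Delta_k > 0$ and $\eta_k = 0$ otherwise, and verify it is feasible for \ref{eq:PGL}. Feasibility follows because any $\boldsymbol{\lambda} \in \mathcal{B}(m,\boldsymbol{\mu})$ has $k^\star(\boldsymbol{\lambda}) \neq k^\star(\boldsymbol{\mu})$ while $\lambda_{k^\star(\boldsymbol{\mu})} = \mu^\star$; hence there is some arm $k$ with $\lambda_k \ge \mu^\star > \mu_k$ (namely the new optimal arm, or any arm whose value rises to at least $\mu^\star$), and for that arm $\eta_k d_k(\mu_k,\lambda_k) \ge d_k(\mu_k,\mu^\star)/d_k(\mu_k,\mu^\star) = 1$ by Assumption~\ref{assump:unimodal} (monotonicity of $d_k$ beyond $\mu_k$). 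Since $C(m,\boldsymbol{\mu})$ is the minimum over feasible $\boldsymbol{\eta}$, it is at most $\boldsymbol{\eta}^\top \boldsymbol{\Delta} = \sum_{k} \Delta_k / d_k(\mu_k,\mu^\star)$, giving the second inequality.

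Finally, for the ratio statement, I would construct a family $\boldsymbol{\mu} \in \mathcal{F}_m$ depending on a small parameter $\varepsilon > 0$: take a graph (a line graph suffices) with at least two modes, keep the gaps $\Delta_k$ and the value $\mu^\star$ bounded away from zero and from each other, but arrange that one non-optimal mode $k$ has a neighbor $\ell$ with $\mu_\ell = \mu_k - \varepsilon$, so that $\delta_k \le \varepsilon$. Then the lower bound on $C_{\operatorname{loc}}$ contains the term $\Delta_k / d_k(\mu_k,\mu_k - \delta_k)$, which diverges as $\varepsilon \to 0$ because $d_k(\mu_k,\mu_k - \delta_k) \to 0$ by continuity and $d_k(\mu_k,\mu_k) = 0$, while the upper bound on $C(m,\boldsymbol{\mu})$ stays bounded since every $d_k(\mu_k,\mu^\star)$ is bounded below. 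Taking the supremum over the family yields $+\infty$. The main obstacle I anticipate is the bookkeeping in the first part: verifying rigorously that the perturbed parameter $\boldsymbol{\lambda}^{(k)}$ indeed lies in $\overline{\mathcal{B}}(m,\boldsymbol{\mu})$ — i.e. that lowering coordinate $k$ does not accidentally create an $(m{+}1)$-st mode and does change the argmax — which requires using that $k$ is a mode of $\boldsymbol{\mu}$ (so its neighbors are strictly lower, and lowering $k$ turns each such neighbor into a potential mode only if it was already locally maximal, a situation one rules out or absorbs by choosing the perturbation to land exactly at the neighbor value $\mu_k - \delta_k$) together with the hypothesis $\vert \mathcal{N}(\boldsymbol{\mu})\vert < K$ to guarantee room for the construction.
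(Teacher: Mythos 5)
There is a genuine gap in your lower bound on $C_{\operatorname{loc}}(m,\boldsymbol{\mu})$: the parameter $\boldsymbol{\lambda}^{(k)}$ you propose is not a confusing parameter. You set $\lambda_k = \mu_k - \delta_k$ and leave every other coordinate equal to that of $\boldsymbol{\mu}$; since you only lower a suboptimal coordinate, the argmax of $\boldsymbol{\lambda}^{(k)}$ is still $k^\star(\boldsymbol{\mu})$, so the condition $k^\star(\boldsymbol{\lambda}) \ne k^\star(\boldsymbol{\mu})$ in the definition of $\mathcal{B}(m,\boldsymbol{\mu})$ fails --- and the same holds in the closure, where the maximum must be attained with value at least $\mu^\star$ at some arm other than $k^\star(\boldsymbol{\mu})$. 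Your parenthetical claim that $\boldsymbol{\lambda}^{(k)}$ ``has a different optimal arm'' is therefore false: the constraint of \ref{eq:PGL} imposes nothing at this point, and you cannot conclude $\eta_k d_k(\mu_k,\mu_k-\delta_k) \ge 1$.

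The missing idea --- and the actual role of the hypothesis $\vert \mathcal{N}(\boldsymbol{\mu}) \vert < K$, which you invoke only to say there is ``room'' --- is to combine the downward perturbation at the mode $k$ with an \emph{upward} perturbation at some arm $j \notin \mathcal{N}(\boldsymbol{\mu})$. The paper takes $\boldsymbol{\lambda} = \boldsymbol{\mu} + (\mu^\star - \mu_j)\boldsymbol{e}^{(j)} + (\mu_\ell - \mu_k)\boldsymbol{e}^{(k)}$, where $\ell$ is the neighbor of $k$ closest to it. Raising $j$ to $\mu^\star$ makes $j$ an optimal arm of $\boldsymbol{\lambda}$ (in the closure $\overline{\mathcal{B}}(m,\boldsymbol{\mu})$), which is what makes the parameter confusing; it also creates a new mode at $j$, and lowering $k$ onto its closest neighbor destroys the mode at $k$, keeping $\boldsymbol{\lambda} \in \mathcal{F}_{\le m}$. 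A local search strategy pays nothing for the perturbation at $j$ since $\eta_j = 0$, so the constraint collapses to $\eta_k d_k(\mu_k, \mu_k - \delta_k) \ge 1$, which is exactly the inequality you were after. Your upper bound on $C(m,\boldsymbol{\mu})$ (feasibility of the Lai--Robbins rates) and your degenerating family for the unbounded ratio are correct and coincide with the paper's argument.
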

In particular, this result implies that the \texttt{IMED-MB} algorithm from \cite{saber2024bandits}, which uses a local search strategy, cannot be asymptotically optimal, which contradicts the statement of their Theorem 2. This is rigorously shown in Appendix \ref{subsec:imed-mb}.

\paragraph{Peaked reward functions.}\label{ssec:Peaked reward functions}

While in general local search strategies can be far from optimal, there exists a smaller subclass of reward functions on which they can be shown to be quasi-optimal, up to a constant multiplicative factor. We call these reward functions \emph{peaked} in the sense that they are not flat around their modes. 
\begin{definition}\label{definition:peaked}
	A reward function $\boldsymbol{\mu} \in \mathbb{R}^K$ is $\kappa$-peaked if and only if for all $k \in \mathcal{M}(\boldsymbol{\mu})$  and all $\ell$ neighbor of $k$ we have $d_k(\mu_k,\mu^\star) \le \kappa d_k(\mu_k,\mu_k-\delta_k/2)$ and  $d_{\ell}(\mu_\ell,\mu^\star) \le \kappa d_{\ell}(\mu_\ell,\mu_k-\delta_k/2)$. 
\end{definition}
In particular, when rewards are Gaussian, the condition above reduces to a simpler one: for all modes $k$, the gap between $k$ and its neighbors should be at least proportional to the gap between $k$ and the optimal arm, so that indeed, the function cannot be \emph{flat} around the modes.
\begin{proposition}\label{proposition:peakedgaussian}
Consider Gaussian rewards with fixed variance. Then $\boldsymbol{\mu}$ is $\kappa$-peaked if and only if for all $k \in \mathcal{M}(\boldsymbol{\mu})$ we have $\Delta_k \le \delta_k ( \frac{\sqrt{\kappa}}{2}-1)$.	
\end{proposition}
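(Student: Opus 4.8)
The plan is a direct computation exploiting the closed form of the Gaussian divergence. Writing $\sigma^2$ for the common variance, we have $d_k(\mu_k,\lambda)=(\mu_k-\lambda)^2/(2\sigma^2)$ for every arm $k$. I would rewrite each of the two inequalities of Definition~\ref{definition:peaked}, quantified over all modes $k$ and all neighbors $\ell$ of $k$, as an affine inequality in $\Delta_k$, and then check that the binding one is exactly $\Delta_k\le\delta_k(\tfrac{\sqrt\kappa}{2}-1)$.

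First, the ``mode'' inequality $d_k(\mu_k,\mu^\star)\le\kappa\,d_k(\mu_k,\mu_k-\delta_k/2)$ becomes $\Delta_k^2\le\kappa(\delta_k/2)^2$; since $\Delta_k\ge 0$, $\delta_k>0$ and $\kappa>0$, taking square roots this is equivalent to $\Delta_k\le\tfrac{\sqrt\kappa}{2}\delta_k$. For the ``neighbor'' inequality, fix a mode $k$ and a neighbor $\ell$ of $k$ and set $a=\mu_k-\mu_\ell$. Because $k$ is a mode, $a\ge\delta_k>0$; because $\mu^\star\ge\mu_k$, we have $\mu^\star-\mu_\ell=\Delta_k+a$ and $\mu_\ell-(\mu_k-\delta_k/2)=-(a-\delta_k/2)$ with $a-\delta_k/2\ge\delta_k/2>0$. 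Hence $d_\ell(\mu_\ell,\mu^\star)\le\kappa\,d_\ell(\mu_\ell,\mu_k-\delta_k/2)$ reads $(\Delta_k+a)^2\le\kappa(a-\delta_k/2)^2$, and since both sides are nonnegative with a positive bracket on the right, taking square roots gives the equivalent affine form $\Delta_k\le a(\sqrt\kappa-1)-\tfrac{\sqrt\kappa}{2}\delta_k$.

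It then remains to reduce the whole family of neighbor conditions (one per admissible gap $a\ge\delta_k$, the value $a=\delta_k$ being attained by the neighbor realizing $\delta_k$) to a single inequality. Taking $a=\delta_k$ shows $\Delta_k\le\delta_k(\tfrac{\sqrt\kappa}{2}-1)$ is necessary. Conversely, if $\Delta_k\le\delta_k(\tfrac{\sqrt\kappa}{2}-1)$ then either $\tfrac{\sqrt\kappa}{2}-1<0$, i.e. $\kappa<4$, in which case the right-hand side is negative while $\Delta_k\ge 0$, so the hypothesis never holds and the equivalence is vacuously true --- and one checks that in this regime no reward function is $\kappa$-peaked either, since the neighbor condition at $a=\delta_k$ already fails; or $\kappa\ge 4$, in which case $\sqrt\kappa-1\ge 1>0$ makes $a\mapsto a(\sqrt\kappa-1)-\tfrac{\sqrt\kappa}{2}\delta_k$ nondecreasing, so $\Delta_k\le\delta_k(\tfrac{\sqrt\kappa}{2}-1)\le a(\sqrt\kappa-1)-\tfrac{\sqrt\kappa}{2}\delta_k$ for every $a\ge\delta_k$, recovering all neighbor conditions, while $\delta_k(\tfrac{\sqrt\kappa}{2}-1)\le\tfrac{\sqrt\kappa}{2}\delta_k$ recovers the mode condition. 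Putting the pieces together yields that $\boldsymbol{\mu}$ is $\kappa$-peaked if and only if $\Delta_k\le\delta_k(\tfrac{\sqrt\kappa}{2}-1)$ for all $k\in\mathcal{M}(\boldsymbol{\mu})$.

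The main obstacle, such as it is, will be bookkeeping rather than anything analytic: one has to track the direction of the square-root step (verifying nonnegativity of both sides and positivity of $a-\delta_k/2$, which is precisely where the fact that $k$ is a mode, hence $a\ge\delta_k$, is used) and the elementary monotonicity in $a$ that isolates $a=\delta_k$ as the worst-case neighbor when $\kappa\ge 4$ and makes the statement vacuous when $\kappa<4$.
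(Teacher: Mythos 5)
Your proof is correct and follows essentially the same route as the paper: rewrite both divergence inequalities of Definition~\ref{definition:peaked} as affine conditions in $\Delta_k$ using the closed form of the Gaussian divergence, and identify the closest neighbor (the case $a=\delta_k$, which is always attained since $\delta_k$ is a minimum over neighbors) as the binding constraint. You are in fact more careful than the paper on the converse direction --- the monotonicity in $a$ when $\kappa\ge 4$ and the vacuous regime $\kappa<4$ --- which the paper's proof leaves implicit.
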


Proposition \ref{proposition:peakedoptimal} shows that for $\kappa$-peaked reward functions, there exists local search strategies that are a factor $\kappa$ from optimal.

\begin{proposition}\label{proposition:peakedoptimal}
Assume that $\boldsymbol{\mu} \in \mathcal{F}_m.$ Then the following bounds hold:
\begin{align*}
	C_{\operatorname{loc}}(m,\boldsymbol{\mu}) &\le \sum_{k \in \mathcal{M}(\boldsymbol{\mu})\setminus \{k^{\star}(\boldsymbol{\mu})\}}  \frac{\Delta_k}{d_k(\mu_k,\mu_k - \delta_k/2)} + \sum_{k \in \mathcal{M}(\boldsymbol{\mu})}\sum_{\ell:(k,\ell) \in E} \frac{\Delta_\ell}{d_{\ell}(\mu_\ell,\mu_k - \delta_k/2)} \\
	C(m,\boldsymbol{\mu}) &\ge \sum_{k \in \mathcal{N}(\boldsymbol{\mu})\setminus \{k^{\star}(\boldsymbol{\mu})\}} \frac{\Delta_k}{d_k(\mu_k,\mu^\star )} 
\end{align*}
and by corollary, if $\boldsymbol{\mu}$ is $\kappa$-peaked, there exists local search strategies within a constant factor:
$	
    \sup_{\boldsymbol{\mu} \in \mathcal{F}_{m},  \kappa-\text{ peaked} } \frac{C_{\operatorname{loc}}(m,\boldsymbol{\mu})}{C(m,\boldsymbol{\mu})} \le \kappa.
$
\end{proposition}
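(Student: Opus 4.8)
The statement bundles three assertions: the upper bound on $C_{\operatorname{loc}}(m,\boldsymbol{\mu})$, the lower bound on $C(m,\boldsymbol{\mu})$, and the ratio bound. The plan is to prove the two inequalities separately and then obtain the ratio bound as an immediate corollary. I would begin with the lower bound on $C(m,\boldsymbol{\mu})$, which is the shortest step. For each arm $j\in\mathcal{N}(\boldsymbol{\mu})\setminus\{k^{\star}(\boldsymbol{\mu})\}$, consider the parameter $\boldsymbol{\lambda}^{(j)}=\boldsymbol{\mu}+(\mu^{\star}-\mu_{j})\boldsymbol{e}^{(j)}$, which agrees with $\boldsymbol{\mu}$ except at $j$. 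It lies in $\overline{\mathcal{B}}(m,\boldsymbol{\mu})$: it is the limit of the parameters obtained by raising $\lambda_{j}$ to $\mu^{\star}+\varepsilon$, for which $j$ becomes a mode while every mode of $\boldsymbol{\mu}$ adjacent to $j$ loses mode status (adjacent vertices cannot both be modes), so the number of modes does not increase, and $\lambda_{k^{\star}(\boldsymbol{\mu})}=\mu^{\star}$, $k^{\star}(\boldsymbol{\lambda}^{(j)})=j\neq k^{\star}(\boldsymbol{\mu})$. Since any feasible $\boldsymbol{\eta}$ satisfies $\boldsymbol{\eta}^{\top}d(\boldsymbol{\mu},\boldsymbol{\lambda}^{(j)})\ge 1$ and this quantity equals $\eta_{j}d_{j}(\mu_{j},\mu^{\star})$, we get $\eta_{j}\ge 1/d_{j}(\mu_{j},\mu^{\star})$; summing $\eta_{j}\Delta_{j}$ over these $j$ and discarding the remaining nonnegative terms of $\boldsymbol{\eta}^{\top}\boldsymbol{\Delta}$ yields $C(m,\boldsymbol{\mu})\ge\sum_{j\in\mathcal{N}(\boldsymbol{\mu})\setminus\{k^{\star}(\boldsymbol{\mu})\}}\Delta_{j}/d_{j}(\mu_{j},\mu^{\star})$.

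For the upper bound on $C_{\operatorname{loc}}(m,\boldsymbol{\mu})$ I would construct an explicit feasible local search strategy $\boldsymbol{\eta}$ supported on $\mathcal{N}(\boldsymbol{\mu})$ and bound its objective by the stated right-hand side. The key tool is a \emph{midpoint dichotomy}: if a confusing $\boldsymbol{\lambda}$ turns a mode $k$ of $\boldsymbol{\mu}$ into a non-mode, there is a neighbor $\ell$ of $k$ with $\lambda_{k}\le\lambda_{\ell}$, whereas $\mu_{\ell}\le\mu_{k}-\delta_{k}<\mu_{k}-\delta_{k}/2<\mu_{k}$; hence either $\lambda_{k}\le\mu_{k}-\delta_{k}/2$ or $\lambda_{\ell}\ge\mu_{k}-\delta_{k}/2$, and Assumption~\ref{assump:unimodal} then forces $d_{k}(\mu_{k},\lambda_{k})\ge d_{k}(\mu_{k},\mu_{k}-\delta_{k}/2)$ in the first case, $d_{\ell}(\mu_{\ell},\lambda_{\ell})\ge d_{\ell}(\mu_{\ell},\mu_{k}-\delta_{k}/2)$ in the second. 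Accordingly I set, for each non-optimal mode $k$, $\eta_{k}$ of order $1/d_{k}(\mu_{k},\mu_{k}-\delta_{k}/2)$ (enlarged if needed so that also $\eta_{k}d_{k}(\mu_{k},\mu^{\star})\ge 1$), and for each neighbor $\ell$ of a mode that is not itself a mode, $\eta_{\ell}=\sum_{k\in\mathcal{M}(\boldsymbol{\mu}):(k,\ell)\in E}1/d_{\ell}(\mu_{\ell},\mu_{k}-\delta_{k}/2)$; the latter automatically gives $\eta_{\ell}d_{\ell}(\mu_{\ell},\mu^{\star})\ge 1$ because $\mu_{\ell}<\mu_{k}-\delta_{k}/2\le\mu^{\star}$. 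Feasibility is established by a case analysis on a generic $\boldsymbol{\lambda}\in\overline{\mathcal{B}}(m,\boldsymbol{\mu})$ with $j^{\star}=k^{\star}(\boldsymbol{\lambda})\neq k^{\star}(\boldsymbol{\mu})$ and $\lambda_{j^{\star}}\ge\mu^{\star}$: if $j^{\star}\in\mathcal{N}(\boldsymbol{\mu})$, the single term $\eta_{j^{\star}}d_{j^{\star}}(\mu_{j^{\star}},\lambda_{j^{\star}})$ already exceeds $1$ (using monotonicity and one of the reserved summands defining $\eta_{j^{\star}}$); if $j^{\star}\notin\mathcal{N}(\boldsymbol{\mu})$, then since $\boldsymbol{\mu}$ has exactly $m$ modes, $\boldsymbol{\lambda}$ has at most $m$, and $j^{\star}$ is a new mode adjacent to no mode of $\boldsymbol{\mu}$, some mode of $\boldsymbol{\mu}$ must be destroyed in $\boldsymbol{\lambda}$, and the midpoint dichotomy produces a term of $\boldsymbol{\eta}^{\top}d(\boldsymbol{\mu},\boldsymbol{\lambda})$ that is at least $1$. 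Finally $\boldsymbol{\eta}^{\top}\boldsymbol{\Delta}=\sum_{k}\eta_{k}\Delta_{k}$ regroups: the mode contributions give the first sum, and the contributions $\eta_{\ell}\Delta_{\ell}$ of a neighbor $\ell$ of several modes reassemble into the double sum $\sum_{k\in\mathcal{M}(\boldsymbol{\mu})}\sum_{(k,\ell)\in E}\Delta_{\ell}/d_{\ell}(\mu_{\ell},\mu_{k}-\delta_{k}/2)$.

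The ratio bound then follows: the $\kappa$-peaked hypothesis (Definition~\ref{definition:peaked}) gives $1/d_{k}(\mu_{k},\mu_{k}-\delta_{k}/2)\le\kappa/d_{k}(\mu_{k},\mu^{\star})$ and $1/d_{\ell}(\mu_{\ell},\mu_{k}-\delta_{k}/2)\le\kappa/d_{\ell}(\mu_{\ell},\mu^{\star})$, so $C_{\operatorname{loc}}(m,\boldsymbol{\mu})$ is at most $\kappa$ times a weighted sum of the quantities $\Delta_{j}/d_{j}(\mu_{j},\mu^{\star})$ over $\mathcal{M}(\boldsymbol{\mu})\setminus\{k^{\star}(\boldsymbol{\mu})\}$ and over neighbors of modes counted with multiplicity. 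One then charges the repeated neighbor contributions to distinct modes along the tree paths joining them, reducing to $\kappa\sum_{j\in\mathcal{N}(\boldsymbol{\mu})\setminus\{k^{\star}(\boldsymbol{\mu})\}}\Delta_{j}/d_{j}(\mu_{j},\mu^{\star})\le\kappa\,C(m,\boldsymbol{\mu})$ by the already-proved lower bound, whence $\sup C_{\operatorname{loc}}/C\le\kappa$. The main obstacle is the feasibility verification of the constructed local strategy together with the requirement that its objective be bounded by \emph{exactly} the claimed right-hand side: one must reconcile the constraint coming from the ``raise a single arm to $\mu^{\star}$'' confusing parameters — which for a non-optimal mode need not be dominated by the midpoint divergence $d_{k}(\mu_{k},\mu_{k}-\delta_{k}/2)$ — with the constraint coming from the ``destroyed mode'' confusing parameters far from all modes, and the bookkeeping when a non-mode neighbors several modes (both in this reconciliation and in the final charging step) is where the real care is needed.
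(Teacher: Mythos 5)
Your plan follows the paper's proof in every essential: the lower bound on $C(m,\boldsymbol{\mu})$ is obtained exactly as in the paper (the single-arm confusing parameters $\boldsymbol{\mu}+(\mu^\star-\mu_j)\boldsymbol{e}^{(j)}$ force $\eta_j\ge 1/d_j(\mu_j,\mu^\star)$ for every $j\in\mathcal{N}(\boldsymbol{\mu})\setminus\{k^\star(\boldsymbol{\mu})\}$, via Proposition~\ref{prop:graves-lai-trivial-case}); the upper bound is obtained by exhibiting the same local strategy supported on $\mathcal{N}(\boldsymbol{\mu})$, with $\eta_k=\max\bigl(1/d_k(\mu_k,\mu^\star),\,1/d_k(\mu_k,\mu_k-\delta_k/2)\bigr)$ on non-optimal modes and midpoint-based rates on their neighbors; and feasibility is checked by the same case split (new maximum inside $\mathcal{N}(\boldsymbol{\mu})$ versus a destroyed mode of $\boldsymbol{\mu}$, the latter resolved by your ``midpoint dichotomy'', which is word for word the paper's argument). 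Two points where you are actually more careful than the paper: first, for a non-mode arm $\ell$ adjacent to several modes you take a sum (a max would also do) over the adjacent modes, whereas the paper's case-based definition of $\eta_\ell$ is ambiguous there, and this multiplicity also matters when passing from the two displayed inequalities to the ratio bound. Second, the ``obstacle'' you flag at the end is genuine and is not resolved in the paper's own proof either: the enlargement $\eta_k\ge 1/d_k(\mu_k,\mu^\star)$ is indispensable for feasibility against the parameter that raises mode $k$ itself to $\mu^\star$, but when $d_k(\mu_k,\mu^\star)<d_k(\mu_k,\mu_k-\delta_k/2)$ (a flat gap $\Delta_k$ together with a large separation $\delta_k$) the resulting term $\Delta_k/d_k(\mu_k,\mu^\star)$ strictly exceeds the corresponding summand of the displayed right-hand side, so the first displayed inequality really needs the max (or an added $\Delta_k/d_k(\mu_k,\mu^\star)$ term) inside the first sum; the paper simply never evaluates the objective of its feasible strategy. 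The final ratio bound is unaffected, exactly as you argue, because under $\kappa$-peakedness both branches of the max are bounded by $\kappa/d_k(\mu_k,\mu^\star)$ (and likewise for the neighbor terms), after which the already-proved lower bound absorbs everything. So: same route as the paper, with your proposal supplying the bookkeeping the paper elides.
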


\section{Numerical experiments}\label{sec:num_exp}

In this section, we conduct numerical experiments to demonstrate the benefit of properly exploiting the multimodal structure. To this end, we implement \texttt{OSSB} from \cite{combes2017} and use our approach to solve \ref{eq:PGL}. At round $t$, \texttt{OSSB} samples as dictated by the solution $\boldsymbol{\eta}^\star(t)$ to the Graves-Lai problem for the empirical estimate $\hat{\boldsymbol{\mu}}(t)$ of $\boldsymbol{\mu}$, which is given by $\hat{\mu}_{k}(t)=\frac{\sum_{s=1}^{t}X_{k,s}\indic\{k(s)=k\}}{\max(1,N_k\left(t\right))}.$
We compare the cumulative regret of two algorithms:
\begin{enumerate}[label=(\roman*)]
    \item \emph{Multimodal \texttt{OSSB}}: the \texttt{OSSB} algorithm where the Graves-Lai problem is solved using our proposed method,
    \item \emph{Classical \texttt{OSSB}}: the \texttt{OSSB} algorithm for unstructured bandits, which serves as a baseline.
\end{enumerate}
The pseudo-code of \texttt{OSSB} and further details regarding the experiment are deferred to Appendix \ref{sec:OSSB}.

$G$ is chosen as a fixed binary tree of height two (resulting in $K=7$ arms). We consider instances $\boldsymbol{\mu} \in \mathcal{F}_2$ with rewards from arm $k \in [K]$ drawn from a Gaussian distribution $\mathcal{N}(\mu_k, 1)$.  The mean rewards $\mu_k$ are generated as a sum of exponential functions centered on the modes $\mathcal{M}(\boldsymbol{\mu})$:
\begin{equation*}
\mu_k = \sum_{j \in \mathcal{M}(\boldsymbol{\mu})} \left( 1 + \mathbbm{1}_{j = k^{\star}(\boldsymbol{\mu})} \right) \exp\left(-\rho_{jk}/\sigma\right),
\end{equation*}
where $\rho_{jk}$ is the shortest path distance between nodes $j$ and $k$ in $G$. We choose $\mathcal{M}(\boldsymbol{\mu})=\{4,6\}$ and $k^\star(\boldsymbol{\mu})=6.$ The parameter $\sigma$  controls how peaked the reward function is: a small $\sigma$ leads to sharp peaks with modes well-separated from their neighbors, whereas a large $\sigma$ creates flatter modes.

We consider two instances: $\sigma=0.5$ (easy instance) and $\sigma=4$ (hard instance). We run the experiment up to a horizon of $T=10,000$. To reduce the computational burden of solving $P_{GL}$ at each round, we only update $\boldsymbol{\eta}^\star(t)$ when $t=2^k$ for $k \in \{0,\dots, \lfloor\ \log_2{T} \rfloor\}. $ The cumulative regret is averaged over $500$ trials. The results are presented in Figure \ref{fig:grouped}, where the shaded regions have radius one standard error. In both settings, multimodal \texttt{OSSB} exhibits superior performance over classical \texttt{OSSB}.

\begin{figure}[H]
\subfloat[$\sigma=0.5$]{\label{fig:regret1}
\centering
\includegraphics[width=0.5\linewidth]{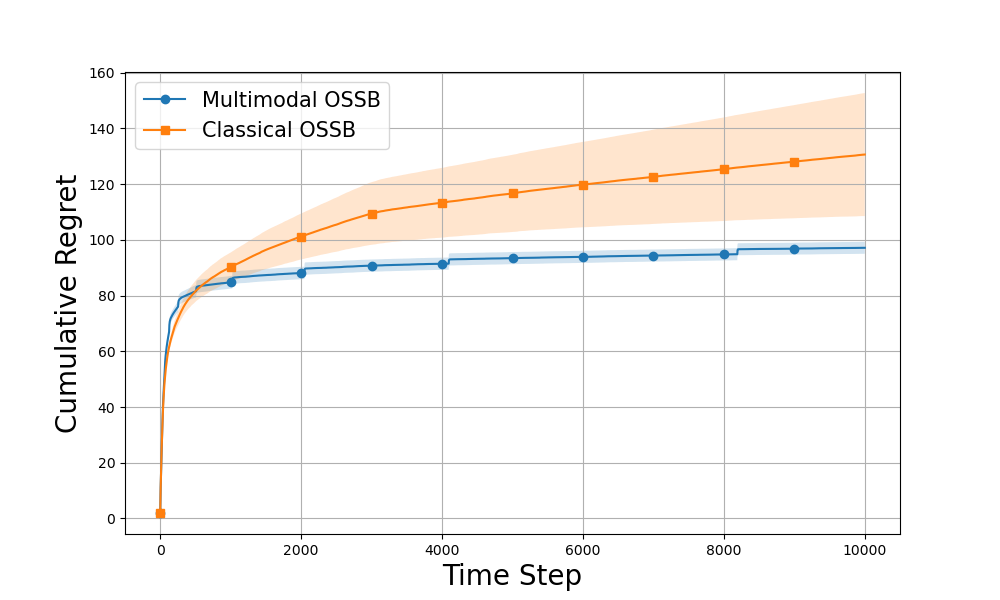}
}
\hfill
\subfloat[$\sigma=4$]{\label{fig:regret2}
\centering

\includegraphics[width=0.5\linewidth]{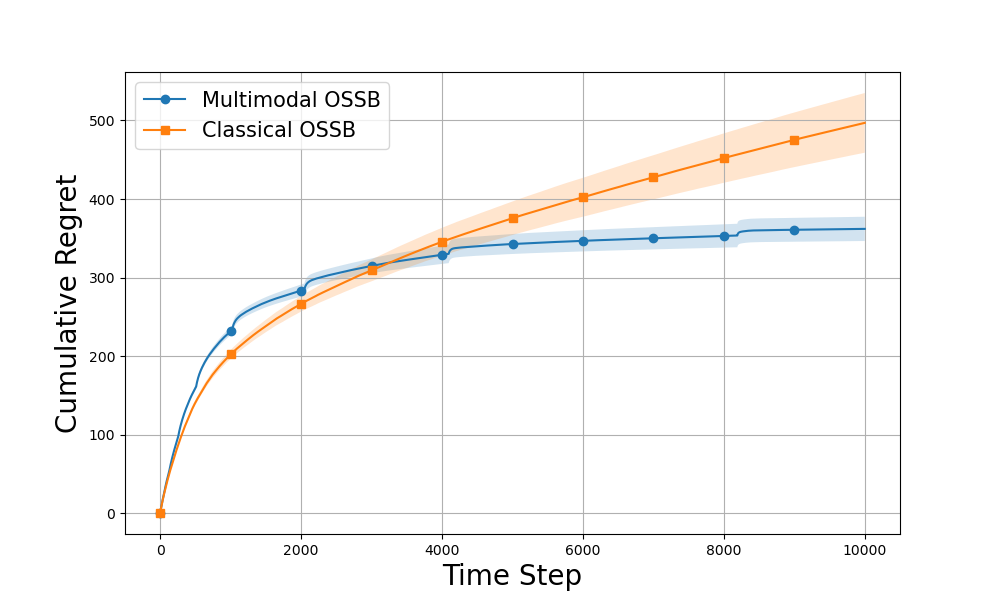}
}

\caption{Cumulative regret as a function of the number of rounds.}
\label{fig:grouped}
\end{figure}

Further experiments on the runtime of our dynamic programming approach are deferred to Appendices \ref{sec:runtime_exp} and \ref{sec:runtime_DP}. The code used for the experiments is available at \url{https://github.com/wilrev/MultimodalBandits}.

\section{Conclusion}

We have considered a stochastic multi-armed bandit with i.i.d. rewards and a multimodal reward structure, and have proposed the first known computationally tractable algorithm to solve the Graves-Lai optimization problem, which we have shown is a requirement to implement asymptotically optimal algorithms, as the performance ratio between local and non-local strategies can be arbitrarily large. We believe that an interesting direction for future work is to characterize the minimal computational complexity necessary to solve this problem in terms of the number of arms, modes and graph structure.

\newpage 
\bibliography{biblio}

\begin{thebibliography}{10}

\bibitem{combes2017}
Richard Combes, Stefan Magureanu, and Alexandre Proutiere.
\newblock Minimal exploration in structured stochastic bandits.
\newblock In {\em Advances in Neural Information Processing Systems}, 2017.

\bibitem{CombesUnimodalBandit2014}
Richard Combes and Alexandre Proutiere.
\newblock Unimodal bandits: Regret lower bounds and optimal algorithms.
\newblock In {\em Proceedings of the 31st International Conference on Machine Learning}, 2014.

\bibitem{contal2016statistical}
Emile Contal.
\newblock {\em Statistical learning approaches for global optimization}.
\newblock PhD thesis, Universit{\'e} Paris Saclay (COmUE), 2016.

\bibitem{CombesA2021}
Thibault Cuvelier, Richard Combes, and Eric Gourdin.
\newblock Asymptotically optimal strategies for combinatorial semi-bandits in polynomial time.
\newblock In {\em ALT}, 2021.

\bibitem{degenne2020}
Remy Degenne, Han Shao, and Wouter~M. Koolen.
\newblock Adaptive algorithms for stochastic bandits.
\newblock In {\em Proceedings of ICML}, 2020.

\bibitem{garivier2011}
Aurelien Garivier and Olivier Cappe.
\newblock The {KL-UCB} algorithm for bounded stochastic bandits and beyond.
\newblock In {\em Proceedings of COLT}, 2011.

\bibitem{graves1997}
Todd~L. Graves and Tze~Leung Lai.
\newblock Asymptotically efficient adaptive choice of control laws in controlled markov chains.
\newblock {\em SIAM Journal on Control and Optimization}, 35(3):715--743, 1997.

\bibitem{herkenrath1983}
Ulrich Herkenrath.
\newblock The n-armed bandit with unimodal structure.
\newblock {\em Metrika}, 30(1):195--210, 1983.

\bibitem{honda2010}
Junya Honda and Akimichi Takemura.
\newblock An asymptotically optimal bandit algorithm for bounded support models.
\newblock In {\em Proceedings of COLT}, 2010.

\bibitem{kaufmann12a}
Emilie Kaufmann, Nathaniel Korda, and R\'emi Munos.
\newblock Thompson sampling: An asymptotically optimal finite-time analysis.
\newblock In {\em Proceedings of ALT}, 2012.

\bibitem{komiyama2015regret}
Junpei Komiyama, Junya Honda, Hisashi Kashima, and Hiroshi Nakagawa.
\newblock Regret lower bound and optimal algorithm in dueling bandit problem.
\newblock In {\em Proceedings of COLT}, pages 1141--1154, 2015.

\bibitem{kraft1988}
Dieter Kraft.
\newblock A software package for sequential quadratic programming.
\newblock {\em Technical Report DFVLRFB 88-28, Deutsche Forschungs- und Versuchsanstalt für Luft- und Raumfahrt – Institut für Dynamik der Flugsysteme, Köln, Deutschland.}, 1988.

\bibitem{lai1985}
Tze~Lung Lai and Herbert Robbins.
\newblock Asymptotically efficient adaptive allocation rules.
\newblock {\em Advances in Applied Mathematics}, 6(1):4--22, 1985.

\bibitem{lee2019mcmc}
Holden Lee.
\newblock {\em MCMC algorithms for sampling from multimodal and changing distributions}.
\newblock PhD thesis, Princeton University, 2019.

\bibitem{misra2019}
Kanishka Misra, Eric~M. Schwartz, and Jacob Abernethy.
\newblock Dynamic online pricing with incomplete information using multiarmed bandit experiments.
\newblock {\em Marketing Science}, 38(2):226--252, 2019.

\bibitem{myers2022}
Vivek Myers, Erdem Biyik, Nima Anari, and Dorsa Sadigh.
\newblock Learning multimodal rewards from rankings.
\newblock In Aleksandra Faust, David Hsu, and Gerhard Neumann, editors, {\em Proceedings of the 5th Conference on Robot Learning}, volume 164 of {\em Proceedings of Machine Learning Research}, pages 342--352, 08--11 Nov 2022.

\bibitem{saber2022structure}
Hassan Saber.
\newblock {\em Structure Adaptation in Bandit Theory}.
\newblock PhD thesis, Universit{\'e} de Lille, 2022.

\bibitem{saber2024bandits}
Hassan Saber and Odalric-Ambrym Maillard.
\newblock Bandits with multimodal structure.
\newblock {\em Reinforcement Learning Journal}, 5:2400--2439, 2024.

\bibitem{saber2021}
Hassan Saber, Pierre M{\'e}nard, and Odalric-Ambrym Maillard.
\newblock Indexed minimum empirical divergence for unimodal bandits.
\newblock In {\em Proceedings of NeurIPS}, volume~34, pages 7346--7356, 2021.

\bibitem{shalevshwartz}
Shai Shalev-Shwartz and Shai Ben-David.
\newblock {\em Understanding Machine Learning: From Theory to Algorithms}.
\newblock Cambridge University Press, 2014.

\bibitem{CombesB2020}
Cindy Trinh, Emilie Kaufmann, Claire Vernade, and Richard Combes.
\newblock Solving bernoulli rank-one bandits with unimodal thompson sampling.
\newblock In {\em Proceedings of ALT}, 2020.

\bibitem{vanparys2023}
Bart Van~Parys and Negin Golrezaei.
\newblock Optimal learning for structured bandits.
\newblock {\em Management Science}, 2023.

\bibitem{Virtanen2020}
Pauli Virtanen, Ralf Gommers, Travis~E. Oliphant, Matt Haberland, Tyler Reddy, David Cournapeau, Evgeni Burovski, Pearu Peterson, Warren Weckesser, Jonathan Bright, St{\'e}fan~J. van~der Walt, Matthew Brett, Joshua Wilson, K.~Jarrod Millman, Nikolay Mayorov, Andrew R.~J. Nelson, Eric Jones, Robert Kern, Eric Larson, C.~J. Carey, {\.{I}}lhan Polat, Yu~Feng, Eric~W. Moore, Jake VanderPlas, Denis Laxalde, Josef Perktold, Robert Cimrman, Ian Henriksen, E.~A. Quintero, Charles~R. Harris, Anne~M. Archibald, Ant{\^o}nio~H. Ribeiro, Fabian Pedregosa, Paul van Mulbregt, and SciPy 1.0~Contributors.
\newblock Scipy 1.0: fundamental algorithms for scientific computing in python.
\newblock {\em Nature Methods}, 17(3):261--272, Mar 2020.

\bibitem{Wang2021-ms}
Yining Wang, Boxiao Chen, and David Simchi-Levi.
\newblock Multimodal dynamic pricing.
\newblock {\em Manage. Sci.}, 67(10):6136--6152, October 2021.

\bibitem{yu2011}
Jia~Yuan Yu and Shie Mannor.
\newblock Unimodal bandits.
\newblock In {\em Proceedings of ICML}, page 41–48, 2011.

\end{thebibliography}
\bibliographystyle{plain}

\section*{NeurIPS Paper Checklist}

\begin{enumerate}

\item {\bf Claims}
    \item[] Question: Do the main claims made in the abstract and introduction accurately reflect the paper's contributions and scope?
   \item[] Answer: \answerYes{} 
    \item[] Justification: We indeed provide in the paper the first known computationally tractable algorithm to solve the Graves-Lai problem for multimodal bandits.
    \item[] Guidelines:
    \begin{itemize}
        \item The answer NA means that the abstract and introduction do not include the claims made in the paper.
        \item The abstract and/or introduction should clearly state the claims made, including the contributions made in the paper and important assumptions and limitations. A No or NA answer to this question will not be perceived well by the reviewers. 
        \item The claims made should match theoretical and experimental results, and reflect how much the results can be expected to generalize to other settings. 
        \item It is fine to include aspirational goals as motivation as long as it is clear that these goals are not attained by the paper. 
    \end{itemize}

\item {\bf Limitations}
    \item[] Question: Does the paper discuss the limitations of the work performed by the authors?
 \item[] Answer: \answerYes{} 
    \item[] Justification: We state in the conclusion that interesting future work would be to characterize the minimal computational complexity required to solve the Graves-Lai problem.
    \item[] Guidelines:
    \begin{itemize}
        \item The answer NA means that the paper has no limitation while the answer No means that the paper has limitations, but those are not discussed in the paper. 
        \item The authors are encouraged to create a separate "Limitations" section in their paper.
        \item The paper should point out any strong assumptions and how robust the results are to violations of these assumptions (e.g., independence assumptions, noiseless settings, model well-specification, asymptotic approximations only holding locally). The authors should reflect on how these assumptions might be violated in practice and what the implications would be.
        \item The authors should reflect on the scope of the claims made, e.g., if the approach was only tested on a few datasets or with a few runs. In general, empirical results often depend on implicit assumptions, which should be articulated.
        \item The authors should reflect on the factors that influence the performance of the approach. For example, a facial recognition algorithm may perform poorly when image resolution is low or images are taken in low lighting. Or a speech-to-text system might not be used reliably to provide closed captions for online lectures because it fails to handle technical jargon.
        \item The authors should discuss the computational efficiency of the proposed algorithms and how they scale with dataset size.
        \item If applicable, the authors should discuss possible limitations of their approach to address problems of privacy and fairness.
        \item While the authors might fear that complete honesty about limitations might be used by reviewers as grounds for rejection, a worse outcome might be that reviewers discover limitations that aren't acknowledged in the paper. The authors should use their best judgment and recognize that individual actions in favor of transparency play an important role in developing norms that preserve the integrity of the community. Reviewers will be specifically instructed to not penalize honesty concerning limitations.
    \end{itemize}

\item {\bf Theory assumptions and proofs}
    \item[] Question: For each theoretical result, does the paper provide the full set of assumptions and a complete (and correct) proof?
   \item[] Answer: \answerYes{} 
    \item[] Justification: Assumptions are clearly stated in Section \ref{sec:asymptotically-optimal} and detailed proofs of all results are provided in the appendices.
    \item[] Guidelines:
    \begin{itemize}
        \item The answer NA means that the paper does not include theoretical results. 
        \item All the theorems, formulas, and proofs in the paper should be numbered and cross-referenced.
        \item All assumptions should be clearly stated or referenced in the statement of any theorems.
        \item The proofs can either appear in the main paper or the supplemental material, but if they appear in the supplemental material, the authors are encouraged to provide a short proof sketch to provide intuition. 
        \item Inversely, any informal proof provided in the core of the paper should be complemented by formal proofs provided in appendix or supplemental material.
        \item Theorems and Lemmas that the proof relies upon should be properly referenced. 
    \end{itemize}

    \item {\bf Experimental result reproducibility}
    \item[] Question: Does the paper fully disclose all the information needed to reproduce the main experimental results of the paper to the extent that it affects the main claims and/or conclusions of the paper (regardless of whether the code and data are provided or not)?
    \item[] Answer: \answerYes{} 
    \item[] Justification: The dynamic programming procedure used is described fully in the main paper. Details on OSSB's implementation (along with the corresponding pseudo-code) and our experiments is provided in Appendix \ref{sec:add_exp}.
    \item[] Guidelines:
    \begin{itemize}
        \item The answer NA means that the paper does not include experiments.
        \item If the paper includes experiments, a No answer to this question will not be perceived well by the reviewers: Making the paper reproducible is important, regardless of whether the code and data are provided or not.
        \item If the contribution is a dataset and/or model, the authors should describe the steps taken to make their results reproducible or verifiable. 
        \item Depending on the contribution, reproducibility can be accomplished in various ways. For example, if the contribution is a novel architecture, describing the architecture fully might suffice, or if the contribution is a specific model and empirical evaluation, it may be necessary to either make it possible for others to replicate the model with the same dataset, or provide access to the model. In general. releasing code and data is often one good way to accomplish this, but reproducibility can also be provided via detailed instructions for how to replicate the results, access to a hosted model (e.g., in the case of a large language model), releasing of a model checkpoint, or other means that are appropriate to the research performed.
        \item While NeurIPS does not require releasing code, the conference does require all submissions to provide some reasonable avenue for reproducibility, which may depend on the nature of the contribution. For example
        \begin{enumerate}
            \item If the contribution is primarily a new algorithm, the paper should make it clear how to reproduce that algorithm.
            \item If the contribution is primarily a new model architecture, the paper should describe the architecture clearly and fully.
            \item If the contribution is a new model (e.g., a large language model), then there should either be a way to access this model for reproducing the results or a way to reproduce the model (e.g., with an open-source dataset or instructions for how to construct the dataset).
            \item We recognize that reproducibility may be tricky in some cases, in which case authors are welcome to describe the particular way they provide for reproducibility. In the case of closed-source models, it may be that access to the model is limited in some way (e.g., to registered users), but it should be possible for other researchers to have some path to reproducing or verifying the results.
        \end{enumerate}
    \end{itemize}

\item {\bf Open access to data and code}
    \item[] Question: Does the paper provide open access to the data and code, with sufficient instructions to faithfully reproduce the main experimental results, as described in supplemental material?
    \item[] Answer: \answerYes{} 
    \item[] Justification: The code used for the experiments is available at \url{https://github.com/wilrev/MultimodalBandits}.
    \item[] Guidelines:
    \begin{itemize}
        \item The answer NA means that paper does not include experiments requiring code.
        \item Please see the NeurIPS code and data submission guidelines (\url{https://nips.cc/public/guides/CodeSubmissionPolicy}) for more details.
        \item While we encourage the release of code and data, we understand that this might not be possible, so “No” is an acceptable answer. Papers cannot be rejected simply for not including code, unless this is central to the contribution (e.g., for a new open-source benchmark).
        \item The instructions should contain the exact command and environment needed to run to reproduce the results. See the NeurIPS code and data submission guidelines (\url{https://nips.cc/public/guides/CodeSubmissionPolicy}) for more details.
        \item The authors should provide instructions on data access and preparation, including how to access the raw data, preprocessed data, intermediate data, and generated data, etc.
        \item The authors should provide scripts to reproduce all experimental results for the new proposed method and baselines. If only a subset of experiments are reproducible, they should state which ones are omitted from the script and why.
        \item At submission time, to preserve anonymity, the authors should release anonymized versions (if applicable).
        \item Providing as much information as possible in supplemental material (appended to the paper) is recommended, but including URLs to data and code is permitted.
    \end{itemize}

\item {\bf Experimental setting/details}
    \item[] Question: Does the paper specify all the training and test details (e.g., data splits, hyperparameters, how they were chosen, type of optimizer, etc.) necessary to understand the results?
    \item[] Answer: \answerYes{} 
    \item[] Justification: The specific instances considered are provided explicitly in the paper.
    \item[] Guidelines:
    \begin{itemize}
        \item The answer NA means that the paper does not include experiments.
        \item The experimental setting should be presented in the core of the paper to a level of detail that is necessary to appreciate the results and make sense of them.
        \item The full details can be provided either with the code, in appendix, or as supplemental material.
    \end{itemize}

\item {\bf Experiment statistical significance}
    \item[] Question: Does the paper report error bars suitably and correctly defined or other appropriate information about the statistical significance of the experiments?
     \item[] Answer: \answerYes{} 
    \item[] Justification: Empirical 95\% confidence intervals are provided for the regret results.
    \item[] Guidelines:
    \begin{itemize}
        \item The answer NA means that the paper does not include experiments.
        \item The authors should answer "Yes" if the results are accompanied by error bars, confidence intervals, or statistical significance tests, at least for the experiments that support the main claims of the paper.
        \item The factors of variability that the error bars are capturing should be clearly stated (for example, train/test split, initialization, random drawing of some parameter, or overall run with given experimental conditions).
        \item The method for calculating the error bars should be explained (closed form formula, call to a library function, bootstrap, etc.)
        \item The assumptions made should be given (e.g., Normally distributed errors).
        \item It should be clear whether the error bar is the standard deviation or the standard error of the mean.
        \item It is OK to report 1-sigma error bars, but one should state it. The authors should preferably report a 2-sigma error bar than state that they have a 96\% CI, if the hypothesis of Normality of errors is not verified.
        \item For asymmetric distributions, the authors should be careful not to show in tables or figures symmetric error bars that would yield results that are out of range (e.g. negative error rates).
        \item If error bars are reported in tables or plots, The authors should explain in the text how they were calculated and reference the corresponding figures or tables in the text.
    \end{itemize}

\item {\bf Experiments compute resources}
    \item[] Question: For each experiment, does the paper provide sufficient information on the computer resources (type of compute workers, memory, time of execution) needed to reproduce the experiments?
    \item[] Answer: \answerYes{} 
    \item[] Justification: Hardware specifications are provided in Appendix \ref{sec:runtime_exp}.
    \item[] Guidelines:
    \begin{itemize}
        \item The answer NA means that the paper does not include experiments.
        \item The paper should indicate the type of compute workers CPU or GPU, internal cluster, or cloud provider, including relevant memory and storage.
        \item The paper should provide the amount of compute required for each of the individual experimental runs as well as estimate the total compute. 
        \item The paper should disclose whether the full research project required more compute than the experiments reported in the paper (e.g., preliminary or failed experiments that didn't make it into the paper). 
    \end{itemize}
    
\item {\bf Code of ethics}
    \item[] Question: Does the research conducted in the paper conform, in every respect, with the NeurIPS Code of Ethics \url{https://neurips.cc/public/EthicsGuidelines}?
    \item[] Answer: \answerYes{} 
    \item[] Justification: Our research does not involve human subjects or sensitive data, and we do not believe our research findings to have any potential negative societal impact.
    \item[] Guidelines:
    \begin{itemize}
        \item The answer NA means that the authors have not reviewed the NeurIPS Code of Ethics.
        \item If the authors answer No, they should explain the special circumstances that require a deviation from the Code of Ethics.
        \item The authors should make sure to preserve anonymity (e.g., if there is a special consideration due to laws or regulations in their jurisdiction).
    \end{itemize}

\item {\bf Broader impacts}
    \item[] Question: Does the paper discuss both potential positive societal impacts and negative societal impacts of the work performed?
    \item[] Answer: \answerNA{} 
    \item[] Justification: Our work is mostly theoretical, and consequently has no immediate societal impact.
    \item[] Guidelines:
    \begin{itemize}
        \item The answer NA means that there is no societal impact of the work performed.
        \item If the authors answer NA or No, they should explain why their work has no societal impact or why the paper does not address societal impact.
        \item Examples of negative societal impacts include potential malicious or unintended uses (e.g., disinformation, generating fake profiles, surveillance), fairness considerations (e.g., deployment of technologies that could make decisions that unfairly impact specific groups), privacy considerations, and security considerations.
        \item The conference expects that many papers will be foundational research and not tied to particular applications, let alone deployments. However, if there is a direct path to any negative applications, the authors should point it out. For example, it is legitimate to point out that an improvement in the quality of generative models could be used to generate deepfakes for disinformation. On the other hand, it is not needed to point out that a generic algorithm for optimizing neural networks could enable people to train models that generate Deepfakes faster.
        \item The authors should consider possible harms that could arise when the technology is being used as intended and functioning correctly, harms that could arise when the technology is being used as intended but gives incorrect results, and harms following from (intentional or unintentional) misuse of the technology.
        \item If there are negative societal impacts, the authors could also discuss possible mitigation strategies (e.g., gated release of models, providing defenses in addition to attacks, mechanisms for monitoring misuse, mechanisms to monitor how a system learns from feedback over time, improving the efficiency and accessibility of ML).
    \end{itemize}
    
\item {\bf Safeguards}
    \item[] Question: Does the paper describe safeguards that have been put in place for responsible release of data or models that have a high risk for misuse (e.g., pretrained language models, image generators, or scraped datasets)?
    \item[] Answer: \answerNA{} 
    \item[] Justification: Our work poses such risk.
    \item[] Guidelines:
    \begin{itemize}
        \item The answer NA means that the paper poses no such risks.
        \item Released models that have a high risk for misuse or dual-use should be released with necessary safeguards to allow for controlled use of the model, for example by requiring that users adhere to usage guidelines or restrictions to access the model or implementing safety filters. 
        \item Datasets that have been scraped from the Internet could pose safety risks. The authors should describe how they avoided releasing unsafe images.
        \item We recognize that providing effective safeguards is challenging, and many papers do not require this, but we encourage authors to take this into account and make a best faith effort.
    \end{itemize}

\item {\bf Licenses for existing assets}
    \item[] Question: Are the creators or original owners of assets (e.g., code, data, models), used in the paper, properly credited and are the license and terms of use explicitly mentioned and properly respected?
    \item[] Answer: \answerNA{} 
    \item[] Justification: The code used is our own and the data is synthetic.
    \item[] Guidelines:
    \begin{itemize}
        \item The answer NA means that the paper does not use existing assets.
        \item The authors should cite the original paper that produced the code package or dataset.
        \item The authors should state which version of the asset is used and, if possible, include a URL.
        \item The name of the license (e.g., CC-BY 4.0) should be included for each asset.
        \item For scraped data from a particular source (e.g., website), the copyright and terms of service of that source should be provided.
        \item If assets are released, the license, copyright information, and terms of use in the package should be provided. For popular datasets, \url{paperswithcode.com/datasets} has curated licenses for some datasets. Their licensing guide can help determine the license of a dataset.
        \item For existing datasets that are re-packaged, both the original license and the license of the derived asset (if it has changed) should be provided.
        \item If this information is not available online, the authors are encouraged to reach out to the asset's creators.
    \end{itemize}

\item {\bf New assets}
    \item[] Question: Are new assets introduced in the paper well documented and is the documentation provided alongside the assets?
    \item[] Answer: \answerNA{} 
    \item[] Justification: We introduce no such asset.
    \item[] Guidelines:
    \begin{itemize}
        \item The answer NA means that the paper does not release new assets.
        \item Researchers should communicate the details of the dataset/code/model as part of their submissions via structured templates. This includes details about training, license, limitations, etc. 
        \item The paper should discuss whether and how consent was obtained from people whose asset is used.
        \item At submission time, remember to anonymize your assets (if applicable). You can either create an anonymized URL or include an anonymized zip file.
    \end{itemize}

\item {\bf Crowdsourcing and research with human subjects}
    \item[] Question: For crowdsourcing experiments and research with human subjects, does the paper include the full text of instructions given to participants and screenshots, if applicable, as well as details about compensation (if any)? 
    \item[] Answer: \answerNA{} 
    \item[] Justification: We conduct no such experiment.
    \item[] Guidelines:
    \begin{itemize}
        \item The answer NA means that the paper does not involve crowdsourcing nor research with human subjects.
        \item Including this information in the supplemental material is fine, but if the main contribution of the paper involves human subjects, then as much detail as possible should be included in the main paper. 
        \item According to the NeurIPS Code of Ethics, workers involved in data collection, curation, or other labor should be paid at least the minimum wage in the country of the data collector. 
    \end{itemize}

\item {\bf Institutional review board (IRB) approvals or equivalent for research with human subjects}
    \item[] Question: Does the paper describe potential risks incurred by study participants, whether such risks were disclosed to the subjects, and whether Institutional Review Board (IRB) approvals (or an equivalent approval/review based on the requirements of your country or institution) were obtained?
    \item[] Answer: \answerNA{} 
    \item[] Justification: We conduct no such experiment.
    \item[] Guidelines:
    \begin{itemize}
        \item The answer NA means that the paper does not involve crowdsourcing nor research with human subjects.
        \item Depending on the country in which research is conducted, IRB approval (or equivalent) may be required for any human subjects research. If you obtained IRB approval, you should clearly state this in the paper. 
        \item We recognize that the procedures for this may vary significantly between institutions and locations, and we expect authors to adhere to the NeurIPS Code of Ethics and the guidelines for their institution. 
        \item For initial submissions, do not include any information that would break anonymity (if applicable), such as the institution conducting the review.
    \end{itemize}

\item {\bf Declaration of LLM usage}
    \item[] Question: Does the paper describe the usage of LLMs if it is an important, original, or non-standard component of the core methods in this research? Note that if the LLM is used only for writing, editing, or formatting purposes and does not impact the core methodology, scientific rigorousness, or originality of the research, declaration is not required.
     \item[] Answer: \answerNA{} 
    \item[] Justification: LLMs were not used as a non-standard component in this work.
    \item[] Guidelines:
    \begin{itemize}
        \item The answer NA means that the core method development in this research does not involve LLMs as any important, original, or non-standard components.
        \item Please refer to our LLM policy (\url{https://neurips.cc/Conferences/2025/LLM}) for what should or should not be described.
    \end{itemize}

\end{enumerate}

\appendix
\clearpage 
\section{Experimental details}\label{sec:add_exp}

\subsection{\texttt{OSSB} for multimodal bandits}\label{sec:OSSB}
As explained in Section \ref{sec:asymptotically-optimal}, asymptotically optimal algorithms for structured bandits attempt to sample $k \neq k^{\star}(\boldsymbol{\mu})$ a number of times $\eta_k^\star \ln T + o(\ln T)$ when $T \to \infty$, where $\boldsymbol{\eta}^\star$ is a solution to \ref{eq:PGL}. The \texttt{OSSB} algorithm of \cite{combes2017} does so by sampling as dictated by the solution to the Graves-Lai problem for the empirical estimate $\hat{\boldsymbol{\mu}}(t)$ of $\boldsymbol{\mu}$, updated at each round $t$. The pseudo-code of \texttt{OSSB} for multimodal bandits is given in Algorithm \ref{alg:ossb}.

\begin{algorithm}[h]
\caption{Optimal Sampling for Structured Bandits (\texttt{OSSB})}\label{alg:ossb}
\begin{algorithmic}
\STATE $N_k(1) \gets 0, \hat{\mu}_{k}(1) \gets 0 \text{  } \forall k \in [K]$ \COMMENT{Initialization}
\FOR{$t = 1, \dots, T$}
\STATE Compute $\boldsymbol{\eta}^{\star}(t)$ the solution to \ref{eq:PGL} for $\hat{\boldsymbol{\mu}}(t)$ \
\IF{$\forall k \in [K], N_k(t) \geq \eta^{\star}_k(t)\ln t$}
    \STATE $k(t) \gets \displaystyle \arg \max_{k \in [K]} \hat{\mu}_{k}(t)$  \COMMENT{Exploitation, ties broken arbitrarily}
\ELSE
    \STATE $k(t) \gets \displaystyle \arg \min_{k \in [K]} \frac{N_k(t)}{\eta^{\star}_k\left(\hat{\boldsymbol{\mu}}(t)\right)} $ \COMMENT{Exploration, ties broken arbitrarily}
\ENDIF

\STATE Observe $X_{k,(t),t}$
\FOR{$k \neq k(t)$ } 
\STATE  $ \hat{\mu}_{k}(t+1) \gets \hat{\mu}_{k}(t) $
\STATE $ N_k(t+1) \gets N_k(t) $
\ENDFOR
\STATE $\hat{\mu}_{k(t)}(t+1) \gets \frac{X_{k(t),t}+\hat{\mu}_{k(t)}(t)N_{k(t)}\left(t\right)}{N_{k(t)}\left(t\right)+1}$
\STATE $N_{k(t)}(t+1) \gets N_{k(t)}(t)+1 $
\ENDFOR
\end{algorithmic}
\end{algorithm}

We implement two versions of \texttt{OSSB}. Each version is associated with a specific sampling strategy $\boldsymbol{\eta}^{\star}(t)$, which it aims to follow at round $t$. Let $\hat{\boldsymbol{\Delta}}_k(t)=\max_{k' \in [K]} \hat{\mu}_{k'}(t)-\hat{\mu}_k(t).$ These strategies are given by:

\begin{itemize}
    \item the solution to \ref{eq:PGL} for $\hat{\boldsymbol{\mu}}(t)$, as computed by our algorithm, with the convention $\eta^{\star}_k(t)=0$ if $k \in \arg \max \hat{\boldsymbol{\mu}}(t)$ (\emph{Multimodal \texttt{OSSB}}, Algorithm \ref{alg:ossb})
    \item $\eta^{\star}_k(t)=\frac{1}{d_k(\hat{\mu}_k(t),\hat{\boldsymbol{\mu}}(t)^{\star})}\mathbbm{1}\{\hat{\boldsymbol{\Delta}}_k(t)>0\}$ (\emph{Classical \texttt{OSSB}}, rates given by the Lai-Robbins bound \citep{lai1985}).   
\end{itemize} 

The specific instances $\boldsymbol{\mu} \in \mathcal{F}_{2}$ considered in the experiment of Section \ref{sec:num_exp}  are shown in Figure \ref{fig:mu_instances}.

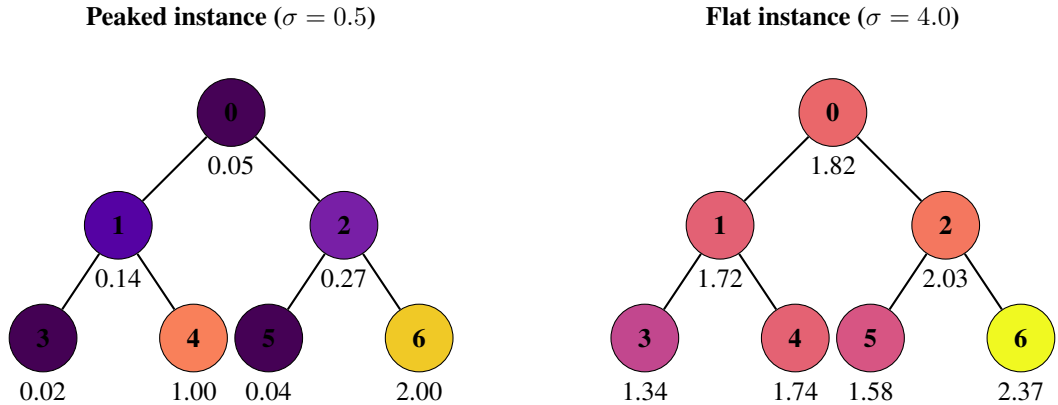
\begin{figure}[htbp]
    \centering
    \definecolor{pcolor0}{RGB}{70,2,86}      \definecolor{fcolor0}{RGB}{234,103,106}
    \definecolor{pcolor1}{RGB}{85,2,163}     \definecolor{fcolor1}{RGB}{227,97,116}
    \definecolor{pcolor2}{RGB}{120,31,166}   \definecolor{fcolor2}{RGB}{244,119,95}
    \definecolor{pcolor3}{RGB}{68,1,84}      \definecolor{fcolor3}{RGB}{194,71,142}
    \definecolor{pcolor4}{RGB}{249,128,90}   \definecolor{fcolor4}{RGB}{228,98,115}
    \definecolor{pcolor5}{RGB}{69,1,85}      \definecolor{fcolor5}{RGB}{215,85,131}
    \definecolor{pcolor6}{RGB}{240,201,38}   \definecolor{fcolor6}{RGB}{240,249,33}

    \tikzset{
        nodeStyle/.style={circle, draw=black, minimum size=0.9cm, inner sep=0pt, font=\bfseries, text=black},
        modeStyle/.style={nodeStyle, draw=blue, thick},
        optimalStyle/.style={nodeStyle, draw=red, ultra thick},
    }

\begin{tikzpicture}[node distance=1cm]

    \begin{scope}[local bounding box=panelA]
        \node[font=\bfseries] (titleA) at (0,0.75) {Peaked instance ($\sigma = 0.5$)};
        
        \node[nodeStyle, fill=pcolor0, label=below:{0.05}] (A0) at (0,-0.5) {0};
        \node[nodeStyle, fill=pcolor1, label=below:{0.14}] (A1) at (-1.5,-2) {1};
        \node[nodeStyle, fill=pcolor2, label=below:{0.27}] (A2) at (1.5,-2) {2};
        \node[nodeStyle, fill=pcolor3, label=below:{0.02}] (A3) at (-2.5,-3.5) {3};
        \node[nodeStyle, fill=pcolor4, label=below:{1.00}] (A4) at (-0.5,-3.5) {4};
        \node[nodeStyle, fill=pcolor5, label=below:{0.04}] (A5) at (0.5,-3.5) {5};
        \node[nodeStyle, fill=pcolor6, label=below:{2.00}] (A6) at (2.5,-3.5) {6};

        \path[draw, thick] (A0)--(A1) (A0)--(A2) (A1)--(A3) (A1)--(A4) (A2)--(A5) (A2)--(A6);
    \end{scope}

    \begin{scope}[local bounding box=panelB, xshift=8cm]
        \node[font=\bfseries] (titleB) at (0,0.75) {Flat instance ($\sigma = 4.0$)};

        \node[nodeStyle, fill=fcolor0, label=below:{1.82}] (B0) at (0,-0.5) {0};
        \node[nodeStyle, fill=fcolor1, label=below:{1.72}] (B1) at (-1.5,-2) {1};
        \node[nodeStyle, fill=fcolor2, label=below:{2.03}] (B2) at (1.5,-2) {2};
        \node[nodeStyle, fill=fcolor3, label=below:{1.34}] (B3) at (-2.5,-3.5) {3};
        \node[nodeStyle, fill=fcolor4, label=below:{1.74}] (B4) at (-0.5,-3.5) {4};
        \node[nodeStyle, fill=fcolor5, label=below:{1.58}] (B5) at (0.5,-3.5) {5};
        \node[nodeStyle, fill=fcolor6, label=below:{2.37}] (B6) at (2.5,-3.5) {6};

        \path[draw, thick] (B0)--(B1) (B0)--(B2) (B1)--(B3) (B1)--(B4) (B2)--(B5) (B2)--(B6);
    \end{scope}

\end{tikzpicture}
    
    \caption{$2$-modal reward instances on the binary tree $G$ ($K=7$, $\mathcal{M}(\boldsymbol{\mu})=\{4,6\}$, $k^\star(\boldsymbol{\mu})=6)$.}
    \label{fig:mu_instances}
\end{figure}

In this specific experiment, instead of the penalized subgradient subroutine described in Proposition \ref{prop:approximate_subgradient_descent}, we used the Sequential Least SQuares Programming (SLSQP) method from \cite{kraft1988} and implemented in SciPy by \cite{Virtanen2020} for faster convergence towards a solution to \ref{eq:PGL} for each $\hat{\boldsymbol{\mu}}(t)$, and we only solve \ref{eq:PGL} when $t=2^k$ for $k \in \{0,\dots,\lfloor \log_2{T} \rfloor\}.$ We used $n=100$ discretization points.

\subsection{Runtime experiment}\label{sec:runtime_exp}

In this experiment, we evaluate the runtime of the algorithm of Theorem \ref{thm:full_algorithm} with respect to the number of arms $K$. We generate line graphs with varying numbers of arms $K \in \{20, 25, 30, \dots, 70\}$, number of modes $\vert M(\boldsymbol{\mu}) \vert = m\in \{2,3,4,5\}$, for $n=100$ discretization points and $t=100$ iterations of penalized subgradient descent. 
The reward instances $\boldsymbol{\mu}$ are generated as in Section \ref{sec:num_exp} with $\sigma=2$. To ensure that they are always $m$-modal, the position of $k^\star(\boldsymbol{\mu})$ and of the other modes of $\boldsymbol{\mu}$ are chosen to be as spread out as possible. We perform $5$ trials per configuration $(K,m)$. Figure \ref{fig:runtime_k} displays the average runtime as a function of $K$ for each value of $m$ on a log-log plot, as well as the corresponding slopes obtained from log-log regression. The runtime exhibits an approximately quadratic growth with the number of arms, which aligns with the time complexity $O(K^2mnt)$ stated in Theorem \ref{thm:full_algorithm}.

\begin{figure}[!htb]
    \centering
    \includegraphics[scale=0.35]{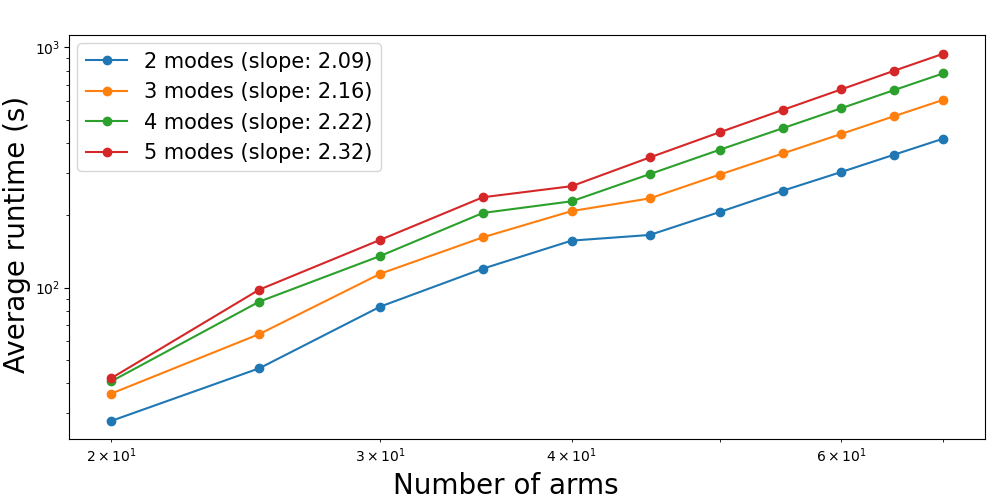}
    \caption{Runtime as a function of the number of arms.}
    \label{fig:runtime_k}
 \end{figure}

All experiments were run on a single desktop PC equipped with an AMD Ryzen 7 5800X 8-core/16-thread CPU @ 3.8 GHz, 16 GB DDR4 RAM.

 \section{Proofs of Section 3}\label{sec:proofs3}

 \begin{proof}[Proof of Proposition \ref{prop:graves_lai}]
Applying Theorem 1 in \cite{combes2017} to the parameter set $\mathcal{F}_{\le m}$ yields that the Graves-Lai constant $C(m,\boldsymbol{\mu})$ is the value of $$\text{minimize}_{\boldsymbol{\eta}} \; \boldsymbol{\eta}^\top \boldsymbol{\Delta} \text{ subject to } \inf_{\boldsymbol{\lambda} \in \boldsymbol{\lambda}(m,\boldsymbol{\mu})} \boldsymbol{\eta}^\top d(\boldsymbol{\mu},\boldsymbol{\lambda}) \ge 1 \text{ and } \boldsymbol{\eta} \ge 0$$
with  $\boldsymbol{\lambda}(m,\boldsymbol{\mu}) = \{ \boldsymbol{\lambda} \in \mathcal{F}_{\le m}, d_k(\mu_k,\lambda_k)=0, k^{\star}(\boldsymbol{\lambda}) \ne k\}$ for $k=k^{\star}(\boldsymbol{\mu}).$ By Assumption \ref{assump:unimodal}, $d_k(\mu_k,\lambda_k)=0$ holds if and only if $\mu_k=\lambda_k$, which  concludes the proof.
\end{proof}
\section{Proofs of Section 4}\label{sec:proofs4}

\subsection{Proof of Proposition~\ref{prop:convex_structure}}

Assume that $B(m,\boldsymbol{\mu}) = \cup_{i=1}^{\mathcal{U}(K,m)} Z_i$ where the $Z_i$ are disjoint convex sets. Denote by $\cX(m)$ the set of independent sets of $G$ of size $m$, and let us lower bound its size. Consider the process in which we first select a node $x_1 \in [K]$, and then for $i=2,...,m$ we select a node $x_i \in [K]$ in $[K] \setminus \cup_{j=1}^{i-1} \mathcal{N}(x_j)$ where $\mathcal{N}(x_j)$ is $x_j$ and its neighbors. Then $X = \{x_1,...,x_m\}$ is an independent set of $G$ of size $m$, and at step $i$ of the process has at least 
$$
	\vert  [K] \setminus \cup_{j=1}^{i-1} N(x_j)\vert  \ge K - \sum_{j=1}^{i-1} \vert N(x_j)\vert  \ge K-m(\deg(G)+1)
$$
choices, since $\vert N(x_j)\vert  \le \deg(G)+1$ and $i \le m$. Hence, the number of independent sets of $G$ of size $m$ is at least $$
\vert \cX(m)\vert  \ge {1 \over m!} (K- (\deg(G)+1)m)^m.
$$

For each independent set of size $m$, $X \in \cX(m)$, define the vector $\boldsymbol{\mu}^{X} \in \RR^{K}$ such that $\mu^{X}_k = \indic\{k \in X\}$ for $k \in [K]$. We can readily check that $\boldsymbol{\mu}^X$ is multimodal with $m$ modes, and that its $m$ modes are precisely the elements of $X$. Now consider another independent set of size $m$, $X' \in \cX(m)$ and consider $\boldsymbol{\lambda}^{X,X'} = (3/4) \boldsymbol{\mu}^X + (1/4) \boldsymbol{\mu}^{X'}$ a convex combination of $\boldsymbol{\mu}^X$ and $\boldsymbol{\mu}^{X'}$. For  $k \in X$ we have that $\lambda_k = 3/4$ and $\lambda_{k'} \le 1/4$ for any $k'$ that is a neighbor of $k$, and hence $k$ is a mode of $\boldsymbol{\lambda}$. On the other hand, assume that there exists $k \in X'$ such that $k$ is not a neighbor of a node in $X$. Then $\boldsymbol{\lambda}^{X,X'}_{k} = 1/4$ and $\boldsymbol{\lambda}^{X,X'}_{k'} = 0$ for any $k'$ that is a neighbor of $k$ so that $k$ is a mode of $\boldsymbol{\lambda}^{X,X'}$. Putting it together, this means that, for $\boldsymbol{\lambda}^{X,X'}$ to have $m$ modes, one must make sure each element of $X'$ is the neighbor of an element in $X$. 

Consider $X \in Z_i$ for some $i$. For any $X' \in \cup Z_i$ by convexity we must have $\boldsymbol{\lambda}^{X,X'} \in Z_i$ so that $\boldsymbol{\lambda}^{X,X'}$ has $m$ modes. By the above this means that all elements of $X'$ must be neighbors of $X$, so $\vert \cX(m) \cup Z_i\vert  \le \binom{\deg(G)m}{m}$ since $X'$ has $m$ elements and $X$ has at most $\deg(G)m$ neighbors. Since the $Z_i$ are disjoint, 
$$
	{1 \over m!} (K- (\deg(G)+1)m)^m \le \vert \cX(m)\vert  = \sum_{i=1}^{\mathcal{U}(K,m)} \vert \cX(m) \cup Z_i\vert  \le \mathcal{U}(K,m) \binom{\deg(G)m}{m}.
$$
This yields the result:
$
	\mathcal{U}(K,m) \ge {((\deg(G)-1)m)! \over (\deg(G)m)!} (K- (\deg(G)+1)m)^m.
$

\subsection{Proof of Proposition~\ref{prop:graves-lai-trivial-case}}
By definition, for any $\boldsymbol{\lambda} \in \mathcal{B}_{k}(m,\boldsymbol{\mu})$, we must have $\lambda_k > \mu^{\star}$, so that $\inf_{\boldsymbol{\lambda} \in B(m,\boldsymbol{\mu})} \boldsymbol{\eta}^{\top}d(\boldsymbol{\mu},\boldsymbol{\lambda}) \ge \eta_kd_k(\mu_k,\mu^{\star}).$ Conversely, let $\varepsilon >0$ and $\boldsymbol{\lambda}^{\varepsilon}=\boldsymbol{\mu}+(\mu^{\star}+\varepsilon-\mu_k)\boldsymbol{e}^{(k)}.$ 
If $\vert \mathcal{M}(\boldsymbol{\mu}) \vert <m$, $\mathcal{M}(\boldsymbol{\lambda}^{\varepsilon}) \subset \mathcal{M}(\boldsymbol{\mu}) \cup \{k\} $ ensures that  $\vert \mathcal{M}(\boldsymbol{\lambda}^{\varepsilon}) \vert \le m$ and in turn $\boldsymbol{\lambda}^{\varepsilon} \in \mathcal{B}_{k}(m,\boldsymbol{\mu}).$
If $k \in \mathcal{N}(\boldsymbol{\mu}),$ either $k$ is a mode of $\boldsymbol{\mu}$ and $\mathcal{M}(\boldsymbol{\lambda}^{\varepsilon})=\mathcal{M}(\boldsymbol{\mu})$, or $k$ is a neighbor of a mode $\ell$ and $\mathcal{M}(\boldsymbol{\lambda}^{\varepsilon})=\mathcal{M}(\boldsymbol{\mu})\cup \{k\} \setminus \{\ell\}$. In any case, $\vert \mathcal{M}(\boldsymbol{\lambda}^{\varepsilon}) \vert \le m$ and in turn, $\boldsymbol{\lambda}^{\varepsilon} \in \mathcal{B}_{k}(m,\boldsymbol{\mu}).$ Consequently, $\inf_{\boldsymbol{\lambda} \in \mathcal{B}_{k}(m,\boldsymbol{\mu})} \boldsymbol{\eta}^{\top}d(\boldsymbol{\mu},\boldsymbol{\lambda}) \le \boldsymbol{\eta}^{\top}d(\boldsymbol{\mu},\boldsymbol{\lambda}^{\varepsilon})=\eta_kd_k(\mu_k,\mu^{\star}+\varepsilon)$. Letting $\varepsilon \to 0$ yields the other side of the inequality. Finally, note that $\boldsymbol{\mu}+(\mu^{\star}-\mu_k)\boldsymbol{e}^{(k)}=\lim_{\varepsilon \to 0} \boldsymbol{\lambda}^{\varepsilon} \in \overline{\mathcal{B}_k}(m,\boldsymbol{\mu}).$ This concludes the proof.

\subsection{Proof of Proposition~\ref{prop:lambda_bound}}

We first demonstrate the following intermediary result.

\begin{lemma}\label{lem:closure-of-B(m,mu,k)}
    The closure of $\mathcal{B}_{k}(m,\boldsymbol{\mu})$ in $\mathbb{R}^K$ is given by $$F_k=\{ \boldsymbol{\lambda} \in \mathcal{F}_{\le m},  \lambda_{k^\star(\boldsymbol{\mu})}=\mu^{\star},  \lambda_k = \max_{j \in [K]}\lambda_j, \lambda_k \ge \mu^{\star}\}.$$
\end{lemma}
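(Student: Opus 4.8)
The plan is to prove the two inclusions $\overline{\mathcal{B}_k(m,\boldsymbol{\mu})} \subseteq F_k$ and $F_k \subseteq \overline{\mathcal{B}_k(m,\boldsymbol{\mu})}$ separately, working throughout under the standing convention $k \ne k^\star(\boldsymbol{\mu})$ coming from the decomposition $\mathcal{B}(m,\boldsymbol{\mu}) = \cup_{k \ne k^\star(\boldsymbol{\mu})} \mathcal{B}_k(m,\boldsymbol{\mu})$. The easy preliminary step is $\mathcal{B}_k(m,\boldsymbol{\mu}) \subseteq F_k$: if $\boldsymbol{\lambda} \in \mathcal{B}_k(m,\boldsymbol{\mu})$ then $\boldsymbol{\lambda} \in \mathcal{F}_{\le m}$ and $\lambda_{k^\star(\boldsymbol{\mu})} = \mu^\star$ by definition, while $k^\star(\boldsymbol{\lambda}) = k$ forces $\lambda_k > \lambda_j$ for every $j \ne k$, hence $\lambda_k = \max_{j\in[K]} \lambda_j$ and in particular $\lambda_k > \lambda_{k^\star(\boldsymbol{\mu})} = \mu^\star$, so all the defining conditions of $F_k$ hold.

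Given this, $\overline{\mathcal{B}_k(m,\boldsymbol{\mu})} \subseteq F_k$ reduces to showing that $F_k$ is closed in $\mathbb{R}^K$. The conditions $\lambda_{k^\star(\boldsymbol{\mu})} = \mu^\star$, $\lambda_k \ge \mu^\star$, and $\lambda_k = \max_{j} \lambda_j$ (which is just $\lambda_k \ge \lambda_j$ for every $j$) each cut out a closed subset of $\mathbb{R}^K$, so the only point that needs an argument is that $\mathcal{F}_{\le m}$ is closed. I would prove this via lower semicontinuity of the mode set: a mode is a strict local maximum, so if $\boldsymbol{\lambda}^{(n)} \to \boldsymbol{\lambda}$ and $j \in \mathcal{M}(\boldsymbol{\lambda})$, then $\lambda_j > \lambda_\ell$ for all neighbours $\ell$ of $j$, and by continuity $\lambda^{(n)}_j > \lambda^{(n)}_\ell$ for all $n$ large enough, i.e. $j \in \mathcal{M}(\boldsymbol{\lambda}^{(n)})$ eventually. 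Ranging $j$ over the finite set $\mathcal{M}(\boldsymbol{\lambda})$ yields $\mathcal{M}(\boldsymbol{\lambda}) \subseteq \mathcal{M}(\boldsymbol{\lambda}^{(n)})$ for large $n$, so $|\mathcal{M}(\boldsymbol{\lambda})| \le |\mathcal{M}(\boldsymbol{\lambda}^{(n)})| \le m$, which gives $\boldsymbol{\lambda} \in \mathcal{F}_{\le m}$ and hence closedness of $F_k$.

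For the reverse inclusion $F_k \subseteq \overline{\mathcal{B}_k(m,\boldsymbol{\mu})}$, I would take $\boldsymbol{\lambda} \in F_k$ and exhibit an approximating sequence by perturbing coordinate $k$: set $\boldsymbol{\lambda}^{\varepsilon} = \boldsymbol{\lambda} + \varepsilon\, \boldsymbol{e}^{(k)}$ with $\varepsilon \downarrow 0$, so that $\boldsymbol{\lambda}^\varepsilon \to \boldsymbol{\lambda}$. Since $\lambda_k = \max_j \lambda_j$, for $\varepsilon > 0$ we have $\lambda^\varepsilon_k = \lambda_k + \varepsilon > \lambda_k \ge \lambda_j = \lambda^\varepsilon_j$ for all $j \ne k$, so $k^\star(\boldsymbol{\lambda}^\varepsilon) = k$; and because $k \ne k^\star(\boldsymbol{\mu})$, the entry $\lambda^\varepsilon_{k^\star(\boldsymbol{\mu})} = \mu^\star$ is left untouched. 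What remains is to check $\boldsymbol{\lambda}^\varepsilon \in \mathcal{F}_{\le m}$ for $\varepsilon$ small. Here one observes that raising $\lambda_k$ can only affect the mode status of $k$ itself: no neighbour of $k$ can be a mode of $\boldsymbol{\lambda}$ (since $\lambda_k$ is a global maximum), and for small $\varepsilon$ none becomes one; every node at graph-distance $\ge 2$ from $k$ has both its own and its neighbours' coordinates unchanged, so its mode status is unchanged. Thus $\mathcal{M}(\boldsymbol{\lambda}^\varepsilon) \subseteq \mathcal{M}(\boldsymbol{\lambda}) \cup \{k\}$.

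The main obstacle is the final bookkeeping in this reverse inclusion: one must reconcile the perturbation at $k$ with the hard cap of $m$ modes, i.e. control the case in which $\boldsymbol{\lambda}$ already has $m$ modes and $k \notin \mathcal{M}(\boldsymbol{\lambda})$, so that the naive bump would create an $(m{+}1)$-st mode. This requires exploiting the $F_k$ structure (in particular that $\lambda_k$ is the global maximum, which constrains how the mode containing or adjacent to $k$ sits relative to the plateau at level $\mu^\star$) to either show this situation does not arise for $\boldsymbol{\lambda} \in F_k$, or to replace the one-coordinate bump by a slightly modified limiting family that keeps the mode count at most $m$ while still converging to $\boldsymbol{\lambda}$. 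Everything outside of this step is a routine continuity/definition chase.
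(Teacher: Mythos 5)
Your forward inclusion is fine and matches the paper's argument in substance: the paper verifies directly that any limit of a sequence in $\mathcal{B}_k(m,\boldsymbol{\mu})$ lands in $F_k$, using exactly the lower semicontinuity of the mode set that you use to prove $\mathcal{F}_{\le m}$ (hence $F_k$) is closed. The problem is the reverse inclusion, where you have correctly located the crux but not resolved it, and where one of your two suggested escape routes is a dead end. With the single-coordinate bump $\boldsymbol{\lambda}^{\varepsilon}=\boldsymbol{\lambda}+\varepsilon\,\boldsymbol{e}^{(k)}$, the node $k$ \emph{always} becomes a mode of $\boldsymbol{\lambda}^{\varepsilon}$ (since $\lambda_k=\max_j\lambda_j$ implies $\lambda_k+\varepsilon>\lambda_v$ for every neighbour $v$ of $k$), so the bump fails exactly when $\vert\mathcal{M}(\boldsymbol{\lambda})\vert=m$ and $k\notin\mathcal{M}(\boldsymbol{\lambda})$. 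This situation genuinely occurs: on the line $1-2-\cdots-7$ with $m=2$, $k^{\star}(\boldsymbol{\mu})=4$, $k=1$, take $\lambda_1=\lambda_2=\lambda_4=\mu^{\star}$ with $\lambda_3,\lambda_5,\lambda_6,\lambda_7<\mu^{\star}$ arranged so that $\mathcal{M}(\boldsymbol{\lambda})=\{4,7\}$; then $\boldsymbol{\lambda}\in F_1$ but $\boldsymbol{\lambda}+\varepsilon\,\boldsymbol{e}^{(1)}$ has three modes. So "show this situation does not arise" is not an option, and the lemma needs the second route, which you left unexecuted.

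The paper's resolution is the missing idea: when $\lambda_k=\mu^{\star}$ and $k\notin\mathcal{M}(\boldsymbol{\lambda})$, the fact that $\lambda_k$ is the global maximum forces the existence of a neighbour $\ell$ of $k$ with $\lambda_{\ell}=\lambda_k=\mu^{\star}$, and one perturbs $k$ and $\ell$ \emph{jointly}, setting $\lambda^{t}_k=\lambda^{t}_{\ell}=\mu^{\star}+1/t$. Keeping the two entries equal prevents $k$ (and $\ell$) from becoming a mode, and no other node can gain mode status since only the values of its neighbours increased; hence $\mathcal{M}(\boldsymbol{\lambda}^{t})\subseteq\mathcal{M}(\boldsymbol{\lambda})$ and the mode budget is respected. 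One also needs $\ell\ne k^{\star}(\boldsymbol{\mu})$ so that the constraint $\lambda_{k^{\star}(\boldsymbol{\mu})}=\mu^{\star}$ survives the perturbation; this follows from the standing assumption $k\notin\mathcal{N}(\boldsymbol{\mu})$ under which the lemma is invoked (inside the proof of Proposition~\ref{prop:lambda_bound}), an assumption your write-up never uses and which is in fact essential here. Everything else in your proposal is a correct continuity/definition chase, but without this joint-perturbation step the proof is incomplete.
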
 

\begin{proof} Consider $(\boldsymbol{\lambda}^t)_{t \ge 0}$ a sequence of elements of $\mathcal{B}_k(m,\boldsymbol{\mu})$ with $\lim_{t \to \infty} \boldsymbol{\lambda}^t=\boldsymbol{\lambda}^{\infty}$ and let us prove that $\boldsymbol{\lambda}^{\infty} \in F_k$. For all $t \ge 0$, we have $\lambda^t_{k^\star(\boldsymbol{\mu})} = \lambda^{\infty}_{k^\star(\boldsymbol{\mu})} = \mu^{\star}$. Furthermore, for all $t \ge 0$,   $k^\star(\boldsymbol{\lambda}^t) = k$, hence $ \lambda^t_k > \lambda^{t}_{k^{\star}(\boldsymbol{\mu})}=\mu^{\star}$ so that $\lambda^{\infty}_k \ge \mu^{\star}$. Now, let $i \in \mathcal{M}(\boldsymbol{\lambda}^{\infty})$, which implies $\lambda^{\infty}_i > \lambda^{\infty}_{\ell}$ for all $\ell$ neighbor of $i$. Hence, for all $t$ large enough, $i \in \mathcal{M}(\boldsymbol{\lambda}^t)$. Repeating the same reasoning for all the modes of $\boldsymbol{\lambda}^{\infty}$, for all $t$ large enough, $\mathcal{M}(\boldsymbol{\lambda}^{\infty}) \subset \mathcal{M}(\boldsymbol{\lambda}^t)$, and $\vert \mathcal{M}(\boldsymbol{\lambda}^{\infty})\vert  \le \vert \mathcal{M}(\boldsymbol{\lambda}^t)\vert  \le  m$. We have proven that $\boldsymbol{\lambda}^{\infty} \in F_k$.  Conversely, let $\boldsymbol{\lambda} \in F_k$. Either $\lambda_k > \mu^{\star}$ and $\boldsymbol{\lambda} \in \mathcal{B}_k(m,\boldsymbol{\mu}) \subset \overline{B}_k(m,\boldsymbol{\mu})$, or $\lambda_{j} \leq \mu^{\star}$ for all $j$. There are then two cases to distinguish. \begin{enumerate}[label=(\roman*)]
    \item If $k$ is a mode of $\boldsymbol{\lambda}$,  we can write  $\boldsymbol{\lambda}=\lim_{t \to \infty} \boldsymbol{\lambda}^t$ where $\boldsymbol{\lambda}^{t} \in \mathcal{B}_k(m,\boldsymbol{\mu})$ is defined by $\lambda^{t}_k=\mu^{\star}+1/t$ and $\lambda^{t}_{j}=\lambda_{j}$ for $j \neq k$.
     \item If $k$ is not a mode of $\boldsymbol{\lambda},$ as $\lambda_{j} \le \mu^{\star}$ for all $j$, this must mean that there exists a neighbor $\ell$ of $k$ such that $\lambda_{\ell}=\lambda_{k}=\mu^{\star}.$ Note further that $k \notin \mathcal{N}(\boldsymbol{\mu})$ ensures $\ell \neq k^{\star}(\boldsymbol{\mu})$. Then we can write $\boldsymbol{\lambda}=\lim_{t \to \infty} \boldsymbol{\lambda}^t$ where $\boldsymbol{\lambda}^{t} \in \mathcal{B}_k(m,\boldsymbol{\mu})$ is defined by $\lambda^{t}_j=\mu^{\star}+1/t$ for $j \in \{k,\ell\}$ and $\lambda^{t}_{j}=\lambda_{j}$ otherwise.
\end{enumerate}

In any case, we have shown that $\boldsymbol{\lambda} \in \overline{B}_k(m,\boldsymbol{\mu})$, which concludes the proof.
\end{proof}
We now prove Proposition \ref{prop:lambda_bound}. By Assumption \ref{assump:lipschitz}, $\boldsymbol{\lambda} \mapsto \boldsymbol{\eta}^{\top}d(\boldsymbol{\mu},\boldsymbol{\lambda})$ is continuous, so that $\inf_{\boldsymbol{\lambda} \in \mathcal{B}_{k}(m,\boldsymbol{\mu})} \boldsymbol{\eta}^{\top}d(\boldsymbol{\mu},\boldsymbol{\lambda})=\inf_{\boldsymbol{\lambda} \in \overline{\mathcal{B}_k}(m,\boldsymbol{\mu})} \boldsymbol{\eta}^{\top}d(\boldsymbol{\mu},\boldsymbol{\lambda}).$ Consider now $\boldsymbol{\lambda} \in \overline{\mathcal{B}_k}(m,\boldsymbol{\mu})$ such that $\lambda_{\ell} > \mu^{\star}$ for some $\ell$ and let $\varepsilon= \min_{\ell, \lambda_{\ell}>\mu^{\star}} (\lambda_{\ell}-\mu^{\star})>0$ the minimal amount by which an entry of $\boldsymbol{\lambda}$ is larger than $\mu^{\star}$. We will show that there exists $\boldsymbol{\lambda}' \in \overline{\mathcal{B}_k}(m,\boldsymbol{\mu})$ such that $\boldsymbol{\eta}^{\top}d(\boldsymbol{\mu},\boldsymbol{\lambda}')<\boldsymbol{\eta}^{\top}d(\boldsymbol{\mu},\boldsymbol{\lambda}).$ 

Consider a graph $G' = ([K],E')$ where $(i,j) \in E'$ if and only if $(i,j) \in E$ and $\lambda_i = \lambda_j$. Consider $S \subset [K]$ the connected component of $G'$ where $\ell$ lies. 
Consider the minimal gap between two neighboring arms:
$
\delta = \min_{(i,j) \in E, \lambda_i \ne \lambda_j} \vert \lambda_i - \lambda_j\vert 
$. Consider $\boldsymbol{\lambda}'$ such that $\lambda'_i = \lambda_i - (\min(\varepsilon,\delta)/2) \indic\{i \in S\}$. As the nodes are modified by strictly less than $\delta$, we have $\mathcal{M}(\boldsymbol{\lambda}') = \mathcal{M}(\boldsymbol{\lambda})$. As they are modified by strictly less than $\varepsilon$, $\lambda'_k > \mu^{\star}$. In turn, $\boldsymbol{\lambda}' \in \overline{\mathcal{B}_k}(m,\boldsymbol{\mu})$. Furthermore, $d(\boldsymbol{\mu},\boldsymbol{\lambda}') < d(\boldsymbol{\mu},\boldsymbol{\lambda})$ since for all $k$, $\lambda_k \mapsto d_k(\mu_k,\lambda_k)$ is strictly increasing whenever $\lambda_k > \mu^\star \ge \mu_k$, which implies that $\boldsymbol{\eta}^{\top}d(\boldsymbol{\mu},\boldsymbol{\lambda}') < \boldsymbol{\eta}^{\top}d(\boldsymbol{\mu},\boldsymbol{\lambda})$. 

We may prove similarly that if $\lambda_{\ell} < \mu_{\star}$ for some $\ell$, there exists $\boldsymbol{\lambda}' \in \overline{\mathcal{B}_k}(m,\boldsymbol{\mu})$ such that $\boldsymbol{\eta}^{\top}d(\boldsymbol{\mu},\boldsymbol{\lambda}') < \boldsymbol{\eta}^{\top}d(\boldsymbol{\mu},\boldsymbol{\lambda})$. This ensures that $\inf_{\boldsymbol{\lambda} \in \overline{\mathcal{B}_k}(m,\boldsymbol{\mu})} \boldsymbol{\eta}^{\top}d(\boldsymbol{\mu},\boldsymbol{\lambda})=\inf_{\boldsymbol{\lambda} \in \mathcal{B}'_{k}(m,\boldsymbol{\mu})} \boldsymbol{\eta}^{\top}d(\boldsymbol{\mu},\boldsymbol{\lambda})$. Finally, as $\boldsymbol{\lambda} \mapsto \boldsymbol{\eta}^{\top}d(\boldsymbol{\mu},\boldsymbol{\lambda})$ is continuous on the compact set $\mathcal{B}'_{k}(m,\boldsymbol{\mu})$, the infimum is attained. This concludes the proof.

\subsection{Proof of Proposition~\ref{prop:eta_bound}}

Consider $\boldsymbol{\eta}$ defined as $\eta_k = {1 \over d_k(\mu_k,\mu^\star)}$ for all $k \neq k^{\star}(\boldsymbol{\mu})$, and where the value of $\eta_{k^{\star}(\boldsymbol{\mu})}$ is arbitrary. Let us check that $\boldsymbol{\eta}$ is a feasible solution to \ref{eq:PGL}. For any $\boldsymbol{\lambda} \in B(m,\boldsymbol{\mu})$, there must exist $k \ne k^\star(\boldsymbol{\mu})$ such that $\lambda_k  >  \mu_{\star}$ and therefore $\boldsymbol{\eta}^\top d(\boldsymbol{\mu},\boldsymbol{\lambda}) \ge \eta_k d_k(\mu_k,\lambda_k) > \eta_k d_k(\mu_k,\mu^\star) = 1$. This shows that $\inf_{\boldsymbol{\lambda} \in B(m,\boldsymbol{\mu})} \boldsymbol{\eta}^\top d(\boldsymbol{\mu},\boldsymbol{\lambda}) \ge 1$ hence $\boldsymbol{\eta}$ is indeed a feasible solution to \ref{eq:PGL}, so $\boldsymbol{\eta}$ must have a higher value than a solution $\boldsymbol{\eta}^\star$ of \ref{eq:PGL}. As long as $\eta^\star_{k}=0$ when $\Delta_k=0$ (note that this choice does not impact the value of \ref{eq:PGL}), we get 
$$
	\Vert \boldsymbol{\eta}^\star \Vert_{\infty} \boldsymbol{\Delta}_{\min} \le {\boldsymbol{\eta}^\star}^\top \boldsymbol{\Delta} \le {\boldsymbol{\eta}}^\top \boldsymbol{\Delta} = \sum_{k, \Delta_k > 0} {\Delta_k \over d_k(\mu_k,\mu^\star)} 
$$
hence $\Vert \boldsymbol{\eta}^\star \Vert_{\infty} \le \mathfrak{B}(\boldsymbol{\mu})$
as announced.

\subsection{Proof of Proposition~\ref{prop:modes_location}} 

Consider $i\ne k$ a mode $i \in \mathcal{M}(\boldsymbol{\lambda}^\star)$, and define the minimal difference between $i$ and its neighbors
$
\delta_i = \min_{(i,j) \in E} \vert \lambda^\star_i - \lambda^\star_j\vert )
$
For all $\delta' \in (-\delta_i,\delta_i)$, it is noted that $\mathcal{M}(\boldsymbol{\lambda}^\star) = \mathcal{M}(\boldsymbol{\lambda}^\star + \delta' \boldsymbol{e}^{(i)})$, and so  $\boldsymbol{\lambda}^\star + \delta' \boldsymbol{e}^{(i)} \in \mathcal{B}'_{k}(m,\boldsymbol{\mu})$. Therefore, the function $\delta' \mapsto \boldsymbol{\eta} d(\mu^\star,\boldsymbol{\lambda}^\star + \delta' \boldsymbol{e}^{(i)})$ must attain its minimum at $\delta' = 0$, which implies $\lambda^\star_i = \mu_i$. Further consider $\ell$ a neighbor of $i$ and
$
\delta_\ell = \vert \lambda^\star_i - \lambda^\star_\ell\vert 
$. For all $\delta' \in [0,\delta_\ell)$, it is noted that $\mathcal{M}(\boldsymbol{\lambda}^\star + \delta' \boldsymbol{e}^{(\ell)}) \subset \mathcal{M}(\boldsymbol{\lambda}^\star)$,  so that  $\boldsymbol{\lambda}^\star + \delta' \boldsymbol{e}^{(\ell)} \in \mathcal{B}'_{k}(m,\boldsymbol{\mu})$. Therefore, the function $\delta' \mapsto \boldsymbol{\eta} d(\mu^\star,\boldsymbol{\lambda}^\star + \delta' \boldsymbol{e}^{(\ell)})$ must attain its minimum at $\delta' = 0$, which implies $\lambda^\star_\ell \geq \mu_\ell$. Putting it together, we have proven that, if $i \in \mathcal{M}(\boldsymbol{\lambda}^\star)$, for all $(\ell,i) \in E$ we have $\mu_\ell \le \lambda^\star_\ell < \lambda^\star_i = \mu_i$, and in turn $i \in \mathcal{M}(\boldsymbol{\mu})$, which concludes the proof.   

\subsection{Proof of Proposition~\ref{prop:discretization}}

Let us consider $\boldsymbol{\lambda}$ a minimizer of $\boldsymbol{\eta}^\top d(\boldsymbol{\mu},\boldsymbol{\lambda})$ over $\mathcal{B}_{k,k'}(m,\boldsymbol{\mu})$. We already know that $\boldsymbol{\lambda} \in [\mu_{\star},\mu^\star]$ from Proposition~\ref{prop:lambda_bound}. Consider $G^k$ the directed tree rooted at $k$ and define $\tilde{\boldsymbol{\lambda}}$ a rounding of $\boldsymbol{\lambda}$ using the following recursive procedure: start at the root $\tilde\lambda_k \in \arg\min\{ \vert x - \lambda_k\vert : x   \in D(n,\boldsymbol{\mu})\}$ 
then for all $\ell$, choose
$$
	\tilde\lambda_\ell = \arg\min\{ \vert x-\tilde\lambda_{p(\ell)}\vert : x \in D(n,\boldsymbol{\mu}), \sign(x-\tilde\lambda_{p(\ell)}) = \sign(\lambda_\ell-\lambda_{p(\ell)})  \}
$$
where $p(\ell)$ is the parent of $\ell$ and $\sign$ is the sign function. The idea is that, when rounding in this fashion, we both have the insurance that for any $(i,\ell) \in E$ 
$
	\vert \tilde\lambda_\ell -  \tilde\lambda_i\vert  \le (1/n) (\mu^{\star}-\mu_{\star}) 
$
so that, by recursion over $\ell$:
$
	\Vert\tilde{\boldsymbol{\lambda}} -  \boldsymbol{\lambda}\Vert_{\infty} \le (1/n) \diam(G) (\mu^{\star}-\mu_{\star}) 
$
and we also have that
$
	\sign(\lambda_i-\lambda_\ell) = \sign(\tilde{\lambda}_i-\tilde{\lambda}_\ell)
$
so that $M(\tilde{\boldsymbol{\lambda}}) = \mathcal{M}(\boldsymbol{\lambda})$. In essence, we round $\boldsymbol{\lambda}$ to ensure both a small rounding error while leaving the set of modes unchanged. We further have 
\begin{align*}
\boldsymbol{\eta}^\top d(\boldsymbol{\mu},\tilde{\boldsymbol{\lambda}}) &= \boldsymbol{\eta}^\top d(\boldsymbol{\mu},\tilde{\boldsymbol{\lambda}}) + \boldsymbol{\eta}^\top ( d(\boldsymbol{\mu},\tilde{\boldsymbol{\lambda}}) - d(\boldsymbol{\mu},\boldsymbol{\lambda}))
\le \boldsymbol{\eta}^\top d(\boldsymbol{\mu},\tilde{\boldsymbol{\lambda}}) + \Vert\boldsymbol{\eta}\Vert_{\infty}  \Vert d(\boldsymbol{\mu},\tilde{\boldsymbol{\lambda}}) - d(\boldsymbol{\mu},\boldsymbol{\lambda})\Vert_{1} \\
&\le \boldsymbol{\eta}^\top d(\boldsymbol{\mu},\tilde{\boldsymbol{\lambda}}) + \mathfrak{B}(\boldsymbol{\mu})\mathfrak{A}(\boldsymbol{\mu}) \Vert\tilde{\boldsymbol{\lambda}} - \boldsymbol{\lambda}\Vert_{1} 
 \le \boldsymbol{\eta}^\top d(\boldsymbol{\mu},\tilde{\boldsymbol{\lambda}}) + \mathfrak{B}(\boldsymbol{\mu})\mathfrak{A}(\boldsymbol{\mu}) K \Vert\tilde{\boldsymbol{\lambda}} - \boldsymbol{\lambda}\Vert_{\infty} \\
 &\le  \boldsymbol{\eta}^\top d(\boldsymbol{\mu},\tilde{\boldsymbol{\lambda}}) + {\mathfrak{C}(\boldsymbol{\mu}) \over n}
\end{align*}
with
$
	\mathfrak{C}(\boldsymbol{\mu}) = \diam(G) (\mu^{\star}-\mu_{\star}) \mathfrak{B}(\boldsymbol{\mu}) \mathfrak{A}(\boldsymbol{\mu}) K 
$
using Assumption \ref{assump:lipschitz}, the fact that $\Vert\boldsymbol{\eta}\Vert_{\infty} \le \mathfrak{B}(\boldsymbol{\mu})$ and the previous bound. This concludes the proof.

\subsection{Proof of  Proposition~\ref{prop:dynamic_programming} }

We recall that we consider the graph $G^k$, which is a directed tree rooted at $k$. 
Consider node $\ell$ and its parent $p(\ell)$, assume that $\lambda_{p(\ell)}$ is known, and we wish to minimize the value:
$$
 \eta_\ell d_{\ell}(\mu_\ell,\lambda_\ell) + \sum_{j \in  {\cal D}(\ell)} \eta_j d_j(\mu_j,\lambda_j)
$$
which corresponds to $\boldsymbol{\eta}^\top d(\boldsymbol{\mu},\boldsymbol{\lambda})$ restricted to $\ell$ and its descendants, subject to $\boldsymbol{\lambda} \in \mathcal{B}_{k,k'}(m,\boldsymbol{\mu})$. Then it suffices to optimize over $\lambda_\ell$ and $\lambda_j$ for $j \in {\cal D}(\ell)$. Also, we may readily check that setting $\eta_{k^\star(\boldsymbol{\mu})} = +\infty$ is equivalent to enforcing the constraint $\lambda_{k^\star(\boldsymbol{\mu})} = \mu^\star$. Of course, the sign of $\lambda_\ell-\lambda_{p(\ell)}$ is important to ensure that the constraints are satisfied. Define $f_\ell(z,+1)$ the minimal value that can be obtained if selecting $\lambda_\ell=z> \lambda_{p(\ell)}$ and $f_\ell(z,-1)$ the minimal value that can be obtained if selecting $\lambda_\ell=z\le \lambda_{p(\ell)}$. Consider $\lambda_\ell=z$ and $\lambda_{p(\ell)}$ fixed, and let us examine how one can select $\lambda_j$ for $j \in {\cal C}(\ell)$ the children of $\ell$ to ensure that $\boldsymbol{\lambda} \in \mathcal{B}_{k,k'}(m,\boldsymbol{\mu})$ is respected.

(i) First consider $\ell \not\in \mathcal{M}(\boldsymbol{\mu}) \cup \{k\}  \setminus \{k'\}$, so that $\ell$ should not be a mode. If $\lambda_\ell \le \lambda_{p(\ell)}$ then $\ell$ cannot be a mode anyways, so for each $j \in {\cal C}(\ell)$ we have two choices: select $\lambda_j \le \lambda_\ell$, which gives a minimal value of $\min_{w \le z} f_{j}(w,-1) = f_{j}^\star(z,-1)$, or select $\lambda_j > \lambda_\ell$, which gives a minimal  value of $\min_{w > z} f_{j}(w,+1) = f_{j}^\star(z,+1)$. The minimal value over these two choices is $f_{j}^\diamond(z) = \min(f_{j}^\star(z,-1),f_{j}^\star(z,+1))$. Therefore, 
$$
	f_\ell(z,-1) = \eta_\ell d_{\ell}(\mu_\ell,z) + \sum_{j \in \cC(\ell)} f_j^\diamond(z).
$$
 
(ii) If $\lambda_\ell > \lambda_{p(\ell)}$, since $\ell$ should not be a mode, one must make sure that there exists at least one child $v \in \cC(\ell)$ of $\ell$ such that $\lambda_\ell \le \lambda_v$, and the value of $\lambda_j$ for $j \in \cC(\ell) \setminus \{v\}$ can be chosen freely. Of course, if $\cC(\ell) = \emptyset$ this is not possible and one has simply $f_\ell(z,+1) = +\infty$. If $\cC(\ell) \ne \emptyset$ and $v \in \mathcal{C}(\ell)$, $\min\{f_{v}^\star(z,+1),f_v(z,-1)\}$ represents the minimal cost of choosing a value of $\lambda_{v}$ such that $\lambda_{\ell} \leq \lambda_{v}$. By the same reasoning as before, the minimal obtainable value is therefore:
$$
f_\ell(z,+1) = \eta_\ell d_{\ell}(\mu_\ell,z) + \sum_{j \in \cC(\ell)} f_j^\diamond(z) +  \min_{v \in \cC(\ell)} g_v(z)
$$ for $g_v(z)=\min\{f_{v}^\star(z,+1),f_v(z,-1)\}-f_v^{\diamond}(z)$, where we have minimized over the choice of $v \in \cC(\ell)$. 

(iii) Now consider the case $\ell \in \mathcal{M}(\boldsymbol{\mu}) \cup \{k\}  \setminus \{k'\}$. Since $\ell$ can either be a mode or not a mode, regardless of the sign of $\lambda_{\ell} - \lambda_{{p(\ell)}}$, we have no constraints on the choice of $\lambda_j$ for $j \in \cC(\ell)$ and the minimal value that can be obtained is 
$$
	f_\ell(z,u) = \eta_\ell d_{\ell}(\mu_\ell,z) + \sum_{j \in \cC(\ell)} f_j^\diamond(z)
$$
for both $u=-1$ and $u=+1$. 

We have proven that $f_\ell$ indeed obeys the dynamic programming equations. Since $k$ is the root of the tree and we must have $\lambda_{k}^\star = \mu^\star$, then  $f_k(\mu^\star,u)$ is the value of \ref{eq:PGLkk'} for any $u$.

\subsection{Proof of Corollary~\ref{cor:lambdastar_computation} }

From the definition of \ref{eq:PGLkk'_discrete}, $\lambda^{\star}_k=\mu^{\star}$. Consider $\ell \neq k$ and its parent ${p(\ell)}$ in $G^k$, and assume that $\lambda^\star_{p(\ell)}$ is known. There are two cases to consider:

\begin{enumerate}[label=(\roman*)]
    \item If ${p(\ell)} \notin \mathcal{M}(\boldsymbol{\mu}) \cup \{k\}  \setminus \{k'\}$, $\lambda^{\star}_{p(\ell)} > \lambda^{\star}_{p^2(\ell)}$ where $p^2(\ell)$ is the parent of $p(\ell)$, and $\ell = \arg \min_{v \in \mathcal{C}(p(\ell))}g_v(z)$ is the node that induces the smallest cost when constrained, we must have $\lambda^{\star}_{\ell} \geq \lambda^{\star}_p(\ell)$ to ensure that $p(\ell)$ is not a mode. There are two choices: either select $\lambda^\star_{\ell} > \lambda^\star_{p(\ell)}$ which yields value $f^{\star}_{\ell}(\lambda^\star_{p(\ell)},+1)$ and dictates the choice $\lambda^\star_{\ell} \in \arg\min_{z > \lambda_{p(\ell)}^\star} f_{\ell}(z,+1)$, otherwise select $\lambda^{\star}_{\ell}=\lambda^{\star}_{p(\ell)}$ which yields value $f_{\ell}(\lambda^\star_{p(\ell)},-1)$. Taking the minimal value among the two choices gives:
\begin{align*}
	\lambda_{\ell}^\star = \begin{cases} \arg\min_{z > \lambda_{p(\ell)}^\star} f_{\ell}(z,+1) & \text{ if } f^\star_{\ell}(\lambda^{\star}_{p(\ell)},+1) \le f_{\ell}(\lambda^{\star}_{p(\ell)},-1) \\
		\lambda^{\star}_{p(\ell)}  & \text{ if } f^\star_{\ell}(\lambda^{\star}_{p(\ell)},+1) > f_{\ell}(\lambda^{\star}_{p(\ell)},-1) \end{cases}
\end{align*}
    \item Otherwise, there are no constraints on the value of $\lambda^{\star}_{\ell}$: either select $\lambda^\star_{\ell} > \lambda^\star_{p(\ell)}$ which yields value $f^{\star}_{\ell}(\lambda^\star_{p(\ell)},+1)$ and dictates the choice $\lambda^\star_{\ell} \in \arg\min_{z > \lambda_{{p(\ell)}}^\star} f_{\ell}(z,+1)$, otherwise select $\lambda^\star_{\ell} \le \lambda^\star_{p(\ell)}$  which yields value $f^{\star}_{\ell}(\lambda^\star_{p(\ell)},-1)$ and dictates the choice $\lambda^\star_{\ell} \in \arg\min_{z \le \lambda_{{p(\ell)}}^\star} f_{\ell}(z,-1)$. Taking the minimal value among the two choices gives:
\begin{align*}
	\lambda_{\ell}^\star = \begin{cases} \arg\min_{z > \lambda_{p(\ell)}^\star} f_{\ell}(z,+1) & \text{ if } f^\star_{\ell}(\lambda^{\star}_{p(\ell)},+1) \le f^\star_{\ell}(\lambda^{\star}_{p(\ell)},-1) \\
		\arg\min_{z \le \lambda_{p(\ell)}^\star} f_{\ell}(z,-1)  & \text{ if } f^\star_{\ell}(\lambda^{\star}_{p(\ell)},+1) > f^\star_{\ell}(\lambda^{\star}_{p(\ell)},-1) \end{cases}
\end{align*}
\end{enumerate} 
It is noted that storing the values of $f_\ell(z,u)$, $f_\ell^\star(z,u)$ and $f_\ell^\diamond(z)$ for $\ell \in [K]$, $z \in D(n,\boldsymbol{\mu})$ and $u \in \{-1,+1\}$ requires memory $O(n K)$ since $\vert D(n,\boldsymbol{\mu})\vert  = n$. Furthermore, if $f_j(z,u)$, $f_j^\star(z,u)$ and $f_j^\diamond(z)$ for $j \in {\cal C}(\ell)$, $z \in D(n,\boldsymbol{\mu})$ and $u \in \{-1,+1\}$, have been computed, then one may compute $f_\ell(z,u)$, $f_\ell^\star(z,u)$ and $f_\ell^\diamond(z)$ for $z \in D(n,\boldsymbol{\mu})$ and $u \in \{-1,+1\}$ in time $O( n \vert {\cal C}(\ell)\vert )$ from the dynamic programming equations. Since $\sum_{\ell \in [K]} \vert {\cal C}(\ell)\vert  = K-1$, the complete procedure requires time  $O(nK)$. This completes the proof.

\subsection{Proof of  Proposition~\ref{prop:approximate_subgradient_descent} }
To ease the notation, denote
$$
g(\boldsymbol{\eta},\boldsymbol{\mu},n) =  \min_{\boldsymbol{\lambda} \in \overline{B}(m,\boldsymbol{\mu}) \cap D(n,\boldsymbol{\mu})^K} \{ \boldsymbol{\eta}^\top d(\boldsymbol{\mu},\boldsymbol{\lambda}) \}
\text{ and }
g(\boldsymbol{\eta},\boldsymbol{\mu}) = \min_{\boldsymbol{\lambda} \in \overline{B}(m,\boldsymbol{\mu})} \{ \boldsymbol{\eta}^\top d(\boldsymbol{\mu},\boldsymbol{\lambda}) \}.
$$
Recall that
$
	h(\boldsymbol{\eta}) = \boldsymbol{\eta}^\top \boldsymbol{\Delta} + \gamma \max\left[1 -  g(\boldsymbol{\eta},\boldsymbol{\mu},n) , 0 \right]
$
which is convex as a sum of a linear function, and a maximum of linear functions, with subgradients:
$$
	 \boldsymbol{\Delta} - \gamma  d(\boldsymbol{\mu},\boldsymbol{\lambda}) \indic\{ g(\boldsymbol{\eta},\boldsymbol{\mu},n) < 1 \} = v \in \partial h(\boldsymbol{\eta})
$$	
for any $\boldsymbol{\lambda}$ such that $g(\boldsymbol{\eta},\boldsymbol{\mu},n) = \boldsymbol{\eta}^\top d(\boldsymbol{\mu},\boldsymbol{\lambda})$. The norm of a subgradient is upper bounded as
$$
	\Vert v \Vert_2 \le \Vert \boldsymbol{\Delta} \Vert_2 + \gamma \Vert d(\boldsymbol{\mu},\boldsymbol{\lambda})\Vert_2 \le \Vert \boldsymbol{\Delta} \Vert_2 +  \gamma K^{3/2} \mathfrak{A}(\boldsymbol{\mu}) (\mu^\star-\mu_{\star}) 	
	= \mathfrak{E}(\boldsymbol{\mu})
$$
since 
$$\Vert d(\boldsymbol{\mu},\boldsymbol{\lambda})\Vert_2 \le \sqrt{K} \Vert d(\boldsymbol{\mu},\boldsymbol{\lambda})\Vert_1 \le \sqrt{K} \mathfrak{A}(\boldsymbol{\mu}) \Vert\boldsymbol{\mu}-\boldsymbol{\lambda}\Vert_1 \le K^{3/2} \mathfrak{A}(\boldsymbol{\mu}) (\mu^\star-\mu_{\star}).
$$
Hence, $h$ is Lipschitz continuous with Lipschitz constant $\mathfrak{E}(\boldsymbol{\mu})$. Let $\boldsymbol{\eta}^\star$ the minimizer of $h$ over $(\RR^+)^K$, and let us prove that $g(\boldsymbol{\eta}^\star,\boldsymbol{\mu},n) \ge 1$. Any subgradient at $\boldsymbol{\eta}^\star$ must have positive entries so that:
$$
	0 \le \boldsymbol{\Delta} - \gamma  d(\boldsymbol{\mu},\tilde{\boldsymbol{\lambda}}) \indic\{ {\boldsymbol{\eta}^\star}^\top d(\boldsymbol{\mu},\tilde{\boldsymbol{\lambda}}) \le 1 \}
$$
where $\tilde{\boldsymbol{\lambda}}$ is such that $g(\boldsymbol{\eta}^\star,\boldsymbol{\mu},n) = {\boldsymbol{\eta}^\star}^\top d(\boldsymbol{\mu},\tilde{\boldsymbol{\lambda}})$.
In turn, since $\tilde{\boldsymbol{\lambda}} \in \overline{B}(m,\boldsymbol{\mu})$ there must exist $k \ne k^\star(\boldsymbol{\mu})$ such that $\tilde{\lambda}_k \ge \mu^\star$ and hence:
$$
\indic\{ \boldsymbol{\eta}^\top d(\boldsymbol{\mu},\tilde{\boldsymbol{\lambda}}) \le 1 \} \le {\Delta_k \over \gamma  d_k(\mu_k,\mu^\star)} \le {1 \over 2}
$$
by definition of $\gamma$. This implies that $\boldsymbol{\eta}^\top d(\boldsymbol{\mu},\tilde{\boldsymbol{\lambda}}) \ge 1$ and so $g(\boldsymbol{\eta}^\star,\boldsymbol{\mu},n) \ge 1$. So $\boldsymbol{\eta}^\star$ minimizes $\boldsymbol{\eta}^\top \boldsymbol{\Delta}$ over the set of $\boldsymbol{\eta}$ such that $g(\boldsymbol{\eta},\boldsymbol{\mu},n) \ge 1$.

We may now analyze the iterative scheme in itself. Since $h$ is convex and Lipschitz continuous with Lipschitz constant $\mathfrak{E}(\boldsymbol{\mu})$, and our iterative scheme is a projected subgradient descent with step size $\delta$, from \cite{shalevshwartz}[Lemma 14.1]:
\begin{align*}
h(\bar{\boldsymbol{\eta}}(t)) - h(\boldsymbol{\eta}^\star) &\le {1 \over 2 t} \left( {\Vert\boldsymbol{\eta}^\star\Vert_2^2 \over \delta} + t\delta \mathfrak{E}(\boldsymbol{\mu})^2\right) \le {1 \over 2 t} \left( {K \mathfrak{B}(\boldsymbol{\mu})^2 \over \delta} + t \delta \mathfrak{E}(\boldsymbol{\mu})^2\right) \\
&= {\sqrt{K} \mathfrak{B}(\boldsymbol{\mu}) \mathfrak{E}(\boldsymbol{\mu}) \over \sqrt{t}} = {\mathfrak{F}(\boldsymbol{\mu}) \over \sqrt{t}}
\end{align*}
where we used Proposition~\ref{prop:eta_bound}  and setting $\delta^2 = {K \mathfrak{B}(\boldsymbol{\mu})^2  \over t \mathfrak{E}(\boldsymbol{\mu})^2}$ to optimize the right hand side.

Let us now check that $\tilde{\boldsymbol{\eta}}(t)$ is an approximate solution to \ref{eq:PGL}. From the above
$$
	\bar{\boldsymbol{\eta}}(t)^\top \boldsymbol{\Delta} \le h(\bar{\boldsymbol{\eta}}(t)) \le h(\boldsymbol{\eta}^\star) + {\mathfrak{F}(\boldsymbol{\mu}) \over \sqrt{t}}  = {\boldsymbol{\eta}^\star}^\top \boldsymbol{\Delta} + {\mathfrak{F}(\boldsymbol{\mu}) \over \sqrt{t}} \le C(m,\boldsymbol{\mu}) + {\mathfrak{F}(\boldsymbol{\mu}) \over \sqrt{t}}
$$
using the definition of $h$, the above bound, the fact that ${\boldsymbol{\eta}^\star}^\top \boldsymbol{\Delta}$ is the minimum of $\boldsymbol{\eta}^\top \boldsymbol{\Delta}$ subject to $g(\boldsymbol{\eta},\boldsymbol{\mu},n) \ge 1$, $\boldsymbol{\eta} \ge 0$ and the fact that $C(m,\boldsymbol{\mu})$ is the minimum of $\boldsymbol{\eta}^\top \boldsymbol{\Delta}$ subject to $g(\boldsymbol{\eta},\boldsymbol{\mu}) \ge 1$, $\boldsymbol{\eta} \ge 0$. Scaling the above on both sides gives a bound on the value of $\tilde{\boldsymbol{\eta}}(t)$:
$$
	\tilde{\boldsymbol{\eta}}(t)^\top \boldsymbol{\Delta} \le \left(1 - {\mathfrak{C}(\boldsymbol{\mu}) \over n} - 2 {\mathfrak{F}(\boldsymbol{\mu}) \over \gamma \sqrt{t}}\right)^{-1} \left(C(m,\boldsymbol{\mu}) + {\mathfrak{F}(\boldsymbol{\mu}) \over \sqrt{t}}\right)	
$$
We now have to check that $\tilde{\boldsymbol{\eta}}(t)$ is a feasible solution to \ref{eq:PGL}. Denote by $\phi = g(\bar{\boldsymbol{\eta}}(t),\boldsymbol{\mu},n)$. Consider the case $\phi < 1$, so that:
$$
	\bar{\boldsymbol{\eta}}(t)^\top \boldsymbol{\Delta} \ge \min_{\boldsymbol{\eta}: g(\boldsymbol{\eta},\boldsymbol{\mu},n) \ge \phi} \boldsymbol{\eta}^\top \boldsymbol{\Delta} = \phi \min_{\boldsymbol{\eta}: g(\boldsymbol{\eta},\boldsymbol{\mu},n) \ge 1}  \boldsymbol{\eta}^\top \boldsymbol{\Delta} = \phi {\boldsymbol{\eta}^\star}^\top \boldsymbol{\Delta}
$$
where we used the fact that $g$ is homogeneous, i.e. $g(\boldsymbol{\eta},\boldsymbol{\mu},n) \ge \phi$ if and only if $g(\boldsymbol{\eta}/\phi,\boldsymbol{\mu},n) \ge 1$. This implies
$$
	\phi {\boldsymbol{\eta}^\star}^\top \boldsymbol{\Delta} + \gamma(1-\phi) \le  \bar{\boldsymbol{\eta}}(t)^\top \boldsymbol{\Delta} + \gamma(1-\phi) = h(\bar{\boldsymbol{\eta}}(t)) \le h(\boldsymbol{\eta}^\star) + {\mathfrak{F}(\boldsymbol{\mu}) \over \sqrt{t}} =  {\boldsymbol{\eta}^\star}^\top \boldsymbol{\Delta} + {\mathfrak{F}(\boldsymbol{\mu}) \over \sqrt{t}}
$$
Rearranging the terms we get:
$$
1-\phi \le 2 {\mathfrak{F}(\boldsymbol{\mu}) \over \gamma \sqrt{t}}
$$	
Therefore, using Proposition~\ref{prop:discretization}:
$$
	g(\bar{\boldsymbol{\eta}}(t),\boldsymbol{\mu}) \ge g(\bar{\boldsymbol{\eta}}(t),\boldsymbol{\mu},n) - {\mathfrak{C}(\boldsymbol{\mu}) \over n} \ge 1 - {\mathfrak{C}(\boldsymbol{\mu}) \over n} - 2 {\mathfrak{F}(\boldsymbol{\mu}) \over \gamma \sqrt{t}}
$$
And once again using the homogeneity of $g$ we get 
$$
	g(\tilde{\boldsymbol{\eta}}(t),\boldsymbol{\mu}) = g(\tilde{\boldsymbol{\eta}}(t),\boldsymbol{\mu},n) \left(1 - {\mathfrak{C}(\boldsymbol{\mu}) \over n} - 2 {\mathfrak{F}(\boldsymbol{\mu}) \over \gamma \sqrt{t}} \right) \ge 1
$$
which is the announced result and concludes the proof.

\section{Proofs of Section 5}\label{sec:proofs5}
\subsection{Proof of Theorem~\ref{theorem:localsearch}}\label{ssec:Proof of theorem localsearch}

Consider $j \not\in \mathcal{N}(\boldsymbol{\mu})$, $k \neq k^{\star}(\boldsymbol{\mu})$ a mode of $\boldsymbol{\mu}$ and $\ell$ the neighbor of $k$ which is the closest to $k$, that is $\arg \min_{\ell:(k,\ell) \in E}  \vert \mu_k-\mu_\ell\vert $. Define the parameter $\boldsymbol{\lambda} = \boldsymbol{\mu} + (\mu^\star-\mu_j) \boldsymbol{e}^{(j)} + (\mu_\ell - \mu_k) \boldsymbol{e}^{(k)}$. One may check that $\boldsymbol{\lambda} \in \overline{B}(m,\boldsymbol{\mu})$. For any feasible local search strategy $\boldsymbol{\eta}$  we have $1 \le \boldsymbol{\eta}^\top d(\boldsymbol{\mu},\boldsymbol{\lambda}) = \eta_j d_j(\mu_j,\lambda_j) + \eta_k d_k(\mu_k,\lambda_\ell) =  \eta_j d_j(\mu_j,\mu^\star) + \eta_k d_k(\mu_k,\mu_k - \delta_k) = \eta_k d_k(\mu_k,\mu_k - \delta_k)$ since $\eta_j = 0$ as $j \not\in \mathcal{N}(\boldsymbol{\mu})$ and $\boldsymbol{\eta}$ is a local search strategy. Hence, $\Delta_k \eta_k \ge \frac{\Delta_k}{d_k(\mu_k,\mu_k - \delta_k)}$ for any mode $k$ and summing over modes we get the announced result
\begin{align*}
	C_{\operatorname{loc}}(m,\boldsymbol{\mu}) \ge \sum_{k \in \mathcal{M}(\boldsymbol{\mu}) \setminus \{k^{\star}(\boldsymbol{\mu})\}} \frac{\Delta_k}{d_k(\mu_k,\mu_k - \delta_k)}.
\end{align*}
Now, since a multimodal bandit problem is also a classical bandit problem, we always have
\begin{align*}
	C(m,\boldsymbol{\mu}) \le \sum_{k \in [K]} \frac{\Delta_k}{d_k(\mu_k,\mu^{\star})}  
\end{align*}
so that  
\begin{align*}
	\sup_{\boldsymbol{\mu} \in \mathcal{F}_{m}} \frac{C_{\operatorname{loc}}(m,\boldsymbol{\mu})}{C(m,\boldsymbol{\mu})} = +\infty
\end{align*}
as there exists reward functions $\boldsymbol{\mu} \in \mathcal{F}_{m}$ where $\delta_k$ is arbitrarily small while $\mu^{\star}-\mu_{k}$ remains comparatively large for any $k \neq k^{\star}(\boldsymbol{\mu})$, for instance consider a case where $\mu_{k} = 1$ if $k=k^{\star}(\boldsymbol{\mu})$, $\mu_k = \epsilon$ if $k \in \mathcal{M}(\boldsymbol{\mu}) \setminus \{k^{\star}(\boldsymbol{\mu})\}$ and $\mu_k = 0$ if $k \not\in \mathcal{M}(\boldsymbol{\mu})$. This function has exactly $m$ modes and $\delta_k = \epsilon$ for any mode $k \neq k^{\star}(\boldsymbol{\mu})$.

\subsection{Suboptimality of \texttt{IMED-MB}}\label{subsec:imed-mb} We argue that  \texttt{IMED-MB} cannot be asymptotically optimal for all instances $\boldsymbol{\mu} \in \mathcal{F}_{\le m}$, contrary to the statement of Theorem 2 in \cite{saber2024bandits}. Consider an instance $\boldsymbol{\mu} \in \mathcal{F}_{m}$ for which the optimal value for local search strategies is strictly greater than the value of \ref{eq:PGL}, i.e. $C_{\operatorname{loc}}(m, \boldsymbol{\mu}) > C(m, \boldsymbol{\mu})$. Such an instance is guaranteed to exist by Theorem \ref{theorem:localsearch}. Assume by contradiction that their Theorem 2 holds. Then, by their Proposition 1, \texttt{IMED-MB} is asymptotically optimal, so that 
\begin{equation*}
    \limsup_{T\rightarrow\infty} \frac{R(\boldsymbol{\mu},T)}{\ln(T)} \leq C(m, \boldsymbol{\mu}).
\end{equation*}
Furthermore, by their Proposition 1 and Theorem 2 again, for $k \neq k^{\star}(\boldsymbol{\mu})$ and $\eta_k=\lim_{T \to \infty} \frac{\mathbb{E}[N_k(T)]}{\ln(T)},$  we have $\eta_k=\frac{1}{d_k(\mu_k,\mu^{\star})}$ if $k \in \mathcal{N}(\boldsymbol{\mu}),$ and $\eta_k=0$ otherwise. In particular, $\boldsymbol{\eta}$ is a local search strategy (and \texttt{IMED-MB} is uniformly good), so that its regret must verify

\begin{equation*}
   \lim \inf_{T \to \infty} \frac{R(\boldsymbol{\mu},T)}{\ln{T}}  \ge C_{\operatorname{loc}}(m, \boldsymbol{\mu}). 
\end{equation*}

Combining these inequalities leads to:
\begin{equation*}
    C_{\operatorname{loc}}(m, \boldsymbol{\mu}) \leq \liminf_{T\rightarrow\infty} \frac{R(\boldsymbol{\mu},T)}{\ln(T)} \leq \limsup_{T\rightarrow\infty} \frac{R(\boldsymbol{\mu},T)}{\ln(T)} \leq C(m, \boldsymbol{\mu}),
\end{equation*} a contradiction.
\subsection{Proof of Proposition \ref{proposition:peakedgaussian}}

Consider $k$ a mode and $\ell$ one of its neighbors. For Gaussian rewards the conditions become $(\mu^\star- \mu_k)^2 \le \kappa (\delta_k/2)^2$ and $(\mu^\star-\mu_\ell)^2 \le \kappa (\mu_k - \delta_k/2 - \mu_\ell)^2 $. These conditions are equivalent to $\Delta_k \le \sqrt{\kappa} \delta_k/2$ and $\Delta_\ell \le {\sqrt{\kappa}} (\Delta_\ell - \Delta_k - \delta_k/2)$. This must be true for all $\ell$ neighbor of $k$ and should hold when $\Delta_\ell = \Delta_k + \delta_k $, therefore we must have $\Delta_k \le (\frac{\sqrt{\kappa}}{2}-1) \delta_k$. Hence, $\boldsymbol{\mu}$ is $\kappa$-peaked if and only if $\Delta_k \le \delta_k ( \frac{\sqrt{\kappa}}{2}-1)$ for all $k \in \mathcal{M}(\boldsymbol{\mu})$.

\subsection{Proof of Proposition~\ref{proposition:peakedoptimal}}\label{ssec:Proof of proposition peakedoptimal}
By Proposition \ref{prop:graves-lai-trivial-case}, for $k \in \mathcal{N}(\boldsymbol{\mu}) \setminus \{k^{\star}(\boldsymbol{\mu})\}$, $\inf_{\boldsymbol{\lambda} \in B(m,\boldsymbol{\mu})} \boldsymbol{\eta}^{\top}d(\boldsymbol{\mu},\boldsymbol{\lambda}) \le \eta_kd_k(\mu_k,\mu^{\star})$, so for any $\boldsymbol{\eta}$ feasible solution to \ref{eq:PGL} we must have $\eta_k \ge \frac{1}{d_k(\mu_k,\mu^{\star})}$.
Summing over $k \in \mathcal{N}(\boldsymbol{\mu})\setminus \{k^{\star}(\boldsymbol{\mu})\}$, the value of \ref{eq:PGL} is lower bounded as
\begin{align*}
	C(m,\boldsymbol{\mu}) &\ge \sum_{k \in \mathcal{N}(\boldsymbol{\mu})\setminus \{k^{\star}(\boldsymbol{\mu})\}} \frac{\Delta_k}{d_k(\mu_k,\mu^\star )}.
\end{align*}
Now, consider the local search strategy $\boldsymbol{\eta}$ defined as
\begin{align*}
	\eta_k = \begin{cases} 
		\max\left( \frac{1}{d_k(\mu_k,\mu^\star)} ,  \frac{1}{d_k(\mu_k,\mu_k-\delta_k/2)} \right)
&  \text{ if } k  \in \mathcal{M}(\boldsymbol{\mu})\setminus \{k^{\star}(\boldsymbol{\mu})\} \\ 
		\max\left( \frac{1}{d_k(\mu_k,\mu^\star)} ,  \frac{1}{d_k(\mu_k,\mu_\ell-\delta_\ell/2)} \right) & \text{ if } (k,\ell) \in E \text{ and } \ell \in \mathcal{M}(\boldsymbol{\mu}) \\
		0 & \text{ otherwise}.
	\end{cases}
\end{align*}
Let us check that $\boldsymbol{\eta}$ is feasible.
Consider $\boldsymbol{\lambda} \in B(m,\boldsymbol{\mu})$ attaining its maximum at $k \in [K] \setminus \{k^\star(\boldsymbol{\mu})\}$.
On the one hand, if $k \in \mathcal{N}(\boldsymbol{\mu})$, since $\lambda_k  > \mu^\star$ we have $\boldsymbol{\eta}^\top d(\boldsymbol{\mu},\boldsymbol{\lambda}) \ge \eta_k d_k(\mu_k,\lambda_k) > \eta_{k} d_k(\mu_{k} , \mu^\star) \ge 1$. 
On the other hand, if $k \not\in \mathcal{N}(\boldsymbol{\mu})$ there must exist at least one $j \in \mathcal{M}(\boldsymbol{\mu})$ such that $j \not\in \mathcal{M}(\boldsymbol{\lambda})$, as otherwise $\boldsymbol{\lambda}$ would have more than $m$ modes. 
In turn there must exist $\ell$ a neighbor of $j$ such that $\lambda_j \le \lambda_\ell$. This implies that either $\lambda_j \le \mu_j - \delta_j/2 < \mu_j$ and in that case $\boldsymbol{\eta}^\top d(\boldsymbol{\mu},\boldsymbol{\lambda}) \ge \eta_jd_j(\mu_j,\lambda_j) \ge  \eta_j d_j(\mu_j,\mu_j-\delta_j/2) \ge 1$, or $\lambda_\ell > \mu_{j}-\delta_j/2 > \mu_{\ell}$ and in that case $\boldsymbol{\eta}^\top d(\boldsymbol{\mu},\boldsymbol{\lambda}) \ge \eta_{\ell}d_{\ell}(\mu_{\ell},\lambda_{\ell}) >  \eta_{\ell} d_{\ell}(\mu_{\ell} , \mu_{j} - \delta_j/2) \ge 1$. In all cases we have $\boldsymbol{\eta}^\top d(\boldsymbol{\mu},\boldsymbol{\lambda}) \ge 1$ for all $\boldsymbol{\lambda} \in B(m,\boldsymbol{\mu})$, hence $\boldsymbol{\eta}$ is feasible. This concludes the proof.

\section{An improved dynamic programming procedure}\label{sec:An improved dynamic programming procedure}
In this section, we describe a more complex, but more computationally efficient dynamic program than that presented in Section~\ref{sec:asymptotically-optimal}, which enables solving $P_{GL}$ more efficiently. Throughout this section we view $G$ as the directed tree $G^{k^\star(\boldsymbol{\mu})}$ rooted at node $k^\star(\boldsymbol{\mu})$. 
\subsection{Dynamic programming variables}\label{sec:Dynamic programming variables}
We describe a dynamic programming procedure to solve the optimization problem
\begin{align*}\label{eq:PGsquare}\tag{$\tilde{P}'_{GL}$}
	\text{minimize}_{\boldsymbol{\lambda}} \; \boldsymbol{\eta}^\top d(\boldsymbol{\mu},\boldsymbol{\lambda}) \text{ subject to } \boldsymbol{\lambda} \in \cup_{k \ne k^{\star}(\boldsymbol{\mu)}} \tilde{\mathcal{B}}_{k}(m,\boldsymbol{\mu}) 
\end{align*}
where $\tilde{\mathcal{B}}_{k}(m,\boldsymbol{\mu})=\mathcal{B}'_k(m,\boldsymbol{\mu}) \cap D(n,\boldsymbol{\mu})^K$ (refer to Proposition \ref{prop:lambda_bound} for the definition of $\mathcal{B}'_k(m,\boldsymbol{\mu})$).
Consider $\boldsymbol{\lambda}^\star$ the optimal solution to this problem. Then there exists a node $k \ne k^\star(\boldsymbol{\mu})$ such that $\lambda^{\star}_k=\mu^{\star}.$ With a slight abuse of notation, we denote this node by $k^{\star}(\boldsymbol{\lambda}^{\star})$. We distinguish two cases.

{\bf Case 1.} If $k=k^{\star}(\boldsymbol{\lambda}^{\star}) \in \mathcal{N}(\boldsymbol{\mu})$, from Proposition~\ref{prop:graves-lai-trivial-case}, the optimal solution is $\boldsymbol{\lambda}^\star=\boldsymbol{\mu}+(\mu^{\star}-\mu_{k} ) \boldsymbol{e}^{(k)}$, and the optimal value is $\eta_kd_k(\mu_k,\mu^{\star}).$

{\bf Case 2.} If $k=k^{\star}(\boldsymbol{\lambda}^{\star}) \not\in \mathcal{N}(\boldsymbol{\mu})$, from Proposition~\ref{prop:modes_location}, the modes of $\boldsymbol{\lambda}^\star$ must be located at $\mathcal{M}(\boldsymbol{\lambda}^\star) = \mathcal{M}(\boldsymbol{\mu}) \cup \{ k \} \setminus \{ k'  \}$, where $k' \in \mathcal{M}(\boldsymbol{\mu}) \setminus \{k^{\star}(\boldsymbol{\mu})\}$ is the mode of $\boldsymbol{\mu}$ that is not a mode of $\boldsymbol{\lambda}^\star$.

Given a node $\ell$ of $G$, and flags $(z,a,b,c) \in  D(n,\boldsymbol{\mu}) \times \{0,1,2\} \times \{0,1\} \times \{0,1\}$ we define $h_\ell(z,a,b,c)$ as the minimal value of $\sum_{j \in \mathcal{D}(\ell) \cup \{ \ell \}} \eta_j d_j(\mu_j,\lambda_j)$ where $\mathcal{M}(\boldsymbol{\lambda}^\star) = \mathcal{M}(\boldsymbol{\mu}) \cup \{ k^{\star}(\boldsymbol{\lambda}^{\star}) \} \setminus \{ k'\}$ for some $k' \in \mathcal{M}(\boldsymbol{\mu}) \setminus \{k^{\star}(\boldsymbol{\mu})\}$, under four constraints: \\ 
(i) $\lambda_\ell = z$  \\
(ii) If $a=0$, $\ell \in \mathcal{M}(\boldsymbol{\lambda}^{\star})$, if $a=1$, $\max_{ v \in \mathcal{C}(\ell)} \lambda_v \ge \lambda_\ell$, and if $a=2$, $\lambda_{p(\ell)} \ge \lambda_{\ell}$ \\
(iii) $b = \mathbbm{1}\{ k^{\star}(\boldsymbol{\lambda}^{\star}) \in \mathcal{D}(\ell) \cup \{ \ell \} \}$  \\
(iv) $c = \mathbbm{1}\{ k' \in \mathcal{D}(\ell) \cup \{ \ell \} \}$  \\ 
We further define $\boldsymbol{\lambda}^\star(\ell,z,a,b,c)$ as the corresponding optimal solution. Recall that $G^{k^{\star}(\boldsymbol{\mu})}$ is a tree rooted at $k^\star(\boldsymbol{\mu})$ and that in case 2, $k^{\star}(\boldsymbol{\mu})$ must be a mode of $\boldsymbol{\lambda}^{\star}$. Putting the two cases together, the optimal solution to \ref{eq:PGsquare} is 
\begin{align*}
	\boldsymbol{\lambda}^\star =
    \begin{cases}
         \boldsymbol{\mu}+(\mu^{\star}-\mu_{k^\perp} ) \boldsymbol{e}^{(k^\perp)} & \text{ if }  \eta_{k^\perp}d_{k^\perp}(\mu_k^\perp,\mu^\star)\le h_{k^\star(\boldsymbol{\mu})}(\mu^\star,0,1,1)  \\
         \boldsymbol{\lambda}^\star(k^\star(\boldsymbol{\mu}),\mu^\star,0,1,1) & \text{ otherwise} 
    \end{cases}
\end{align*}
where $k^\perp \in \arg\min_{k \in \mathcal{N}(\mu)} \eta_{k}{d_{k}(\mu_k,\mu^\star)}$.

\subsection{Terminal conditions for leaves}\label{sec:Terminal condition for leaves}
First consider $\ell$ a leaf of $G$. Then five terminal conditions must be satisfied: \\
(i) Since $a=1$ requires $\min_{v \in \mathcal{C}(\ell) } \lambda_v \ge \lambda_{\ell}$, we must have $a \ne 1$  \\
(ii) If $b=0$ then $\ell \ne k^\star(\boldsymbol{\lambda}^ {\star})$, so that either $a \ne 0$ or $\ell \in \mathcal{M}(\boldsymbol{\mu})$ \\ 
(iii) If $b=1$ then  $\ell =k^\star(\boldsymbol{\lambda}^ {\star})$, so that  $a=0$ and $\ell \not\in \mathcal{M}(\boldsymbol{\mu})$ \\
(iv) If $c=0$ then $\ell \ne k',$ so that  either $a=0$ or $\ell \not\in \mathcal{M}(\boldsymbol{\mu})$ \\
(v) If $c=1$ then $\ell = k',$ so that $a\ne0$ and $\ell \in \mathcal{M}(\boldsymbol{\mu})$ so that 
\begin{align*}
		h_\ell(z,a,b,c) =  
        \begin{cases}
             \eta_\ell d_{\ell}(\mu_\ell,z) + \sum_{v \in \mathcal{D}(\ell)}h_v(z_v,a_v,b_v,c_v) & \text{ if (i) - (v) hold } \\
             +\infty & \text{ otherwise.}
        \end{cases}       
\end{align*}

\subsection{Dynamic programming rules for internal nodes}\label{sec:Dynamic programming rules}
Now consider $\ell$ an internal node of $G$. We wish to compute the value of $h$ recursively using dynamic programming. We first write
\begin{align*}
		h_\ell(z,a,b,c) =  \eta_\ell d_{\ell}(\mu_\ell,\lambda_\ell) + \sum_{v \in \mathcal{C}(\ell)} h_v(z_v,a_v,b_v,c_v) 
\end{align*}
where $(z_v,a_v,b_v,c_v)_{v \in \mathcal{C}(\ell)}$ obeys a set of rules: \\
(i) If $a=0$ then  $\ell$ is a mode of $\lambda$, so $z_v < z$ for all $v \in \mathcal{C}(\ell)$, and $a_v=2$, because $\lambda_v < \lambda_{\ell}=\lambda_{p(v)}$ \\ 
(ii) if $a=1$ then $\ell$ has a child with higher value, so there must exist at least one $v \in \mathcal{C}(\ell)$ with $z_v \ge z$  \\
(iii) If $b=0$ then $b_v = 0$ for all $v \in \mathcal{C}(\ell)$, since if the subtree of $\ell$ does not contain $k^\star(\boldsymbol{\lambda}^{\star})$, then none of the subtrees of $v$ contain $k^\star(\boldsymbol{\lambda}^{\star})$ \\
(iv) If $b=1$, $a = 0$ and $\ell \not\in \mathcal{M}(\boldsymbol{\mu})$, then  $\ell = k^\star(\boldsymbol{\lambda}^{\star})$, so that none of the subtrees of $v$ contain $k^\star(\boldsymbol{\lambda}^{\star})$, i.e., $b_v = 0$ for all $v \in \mathcal{C}(\ell)$. \\
(v) If $b=1$ and either $a \in \{1,2\}$ or $\ell \in \mathcal{M}(\boldsymbol{\mu})$, then $\sum_{v \in \mathcal{C}(\ell)} b_v = 1$, since if the subtree of $\ell$ contains $k^\star(\boldsymbol{\lambda}^{\star})$,  and $\ell \ne k^\star(\boldsymbol{\lambda}^{\star})$ then there must exist exactly one $v$ whose subtree contains $k^\star(\boldsymbol{\lambda}^{\star})$ \\
(vi) If $c=0$ and $\ell \in \mathcal{M}(\boldsymbol{\mu})$ then $a=0$, since if the subtree of $\ell$ does not contain $k'$ then $\ell \ne k'$, which gives $\ell \in \mathcal{M}(\boldsymbol{\lambda}^{\star})$ \\
(vii) If $c=0$ then $c_v =0$ for all $v \in \mathcal{C}(\ell)$, since if the subtree of $\ell$ does not contain $k'$ then the subtree of all $v$ does not contain $k'$ \\
(viii) If $c=1$ then either $a = \{1,2\}$ and $\ell \in \mathcal{M}(\boldsymbol{\mu})$ and we have $\ell=k'$, which implies $c_v = 0$ for all $v \in \mathcal{D}(\ell)$, or there must exist exactly one $v$ whose subtree contains $k'$, so that  $\sum_{v \in \mathcal{D}(\ell)} c_v = 1$ \\
(ix) If $z_v \le z$ then $a_v = 2$, otherwise $a_v \in \{0,1\}$. 
\subsection{Recursive equations for internal nodes}\label{ssec:Recursive equations}
Based on those rules we compute the value of $h_\ell(z,a,b,c)$ recursively using dynamic programming. 
To do so we define the following auxiliary functions where, as in Section \ref{subsec:dp_computing}, the minima are taken with the implicit constraint $w \in D(n,\boldsymbol{\mu})$:
\begin{align*}
	h^{>}_\ell(z,b,c) &= \min_{w > z} \min_{a \in \{0,1\}} h_\ell(w,a,b,c) \text{ , }h^{<}_\ell(z,b,c) = \min_{w < z} h_\ell(w,2,b,c)  \\
	h^{\ge}_\ell(z,b,c) &= \min( h^{>}_\ell(z,b,c), h_\ell(z,2,b,c)) \text{ , }  
	h^\star_\ell(z,b,c) = \min(h^{<}_\ell(z,b,c), h_\ell(z,2,b,c),h^{>}_\ell(z,b,c)    ) \\
    h^\Delta_\ell(z,b,c) &= | \mathcal{C}(\ell) |^{-1} \min_{v \in \mathcal{C}(\ell)} \Big\{ h^{\ge}_{v}(z,b,c) - h_{v}^\star(z,b,c) \Big\} \\
	\mathcal{X}_\ell(s_1,s_2) &= \Big\{  (x_{1,v},x_{2,v})_{v \in \mathcal{C}(\ell)} \in \{0,1\}^{2 \times \mathcal{C}(\ell)} : \sum_{v \in \mathcal{C}(\ell)} (x_{1,v},x_{2,v}) = (s_1,s_2) \Big\}  \text{ for } (s_1,s_2) \in \mathbb{Z}^2
\end{align*}
where it is noted that $\mathcal{X}_\ell(s_1,s_2) = \emptyset$ if $\min(s_1,s_2) < 0$. 

We have $h_\ell(z,a,b,c) = +\infty$ if any of the three conditions hold: \\
(i) $a = 0$, $\ell \not\in \mathcal{M}(\boldsymbol{\mu})$ and $b=0$ \\
(ii) $a = 0$, $\ell \not\in \mathcal{M}(\boldsymbol{\mu})$, and $z \ne \mu^\star$ \\
(iii) $a \in \{1,2\}$ and $\ell \in \mathcal{M}(\boldsymbol{\mu})$ and $c = 0$. \\
Indeed, if $a=0$ and $\ell \not\in \mathcal{M}(\mu)$ we must have $\ell = k^\star(\boldsymbol{\lambda}^\star)$, so that in turn $b=1$, and $\lambda_\ell = \mu^\star$. Also, if $a \in \{1,2\}$ and $\ell \in \mathcal{M}(\mu)$ then we must have $\ell = k'$, which imposes $c=1$.

Otherwise, the value of $h_\ell(z,a,b,c)$ is given by the following recursive equations, where by convention, minimization over an empty set yields $+\infty$:
\begin{align*}
	h_\ell(z,0,b,c) &=
	\eta_\ell d_{\ell}(\mu_\ell,z) + \min_{ ({\bf b},{\bf c})  \in \mathcal{X}_\ell(b - \mathbbm{1}\{\ell\not\in \mathcal{M}(\boldsymbol{\mu})\}  ,c)} \Big\{  \sum_{v \in \mathcal{C}(\ell)} h^{<}_v(z,b_v,c_v)  \Big\} \\
	h_\ell(z,1,b,c)
	&= \eta_\ell d_{\ell}(\mu_\ell,z) + \min_{ ({\bf b},{\bf c})  \in \mathcal{X}_\ell(b ,c- \mathbbm{1}\{\ell\in \mathcal{M}(\boldsymbol{\mu})\}  )} \min_{w \in \mathcal{C}(\ell)}  \Big\{ h^{\ge}_{w}(z,b_w,c_w)  +  \sum_{v \in \mathcal{C}(\ell), v \ne w} h_{v}^\star(z,b_v,c_v) \Big\}  \\
&= \eta_\ell d_{\ell}(\mu_\ell,z) + \min_{ ({\bf b},{\bf c})  \in \mathcal{X}_\ell(b ,c- \mathbbm{1}\{\ell\in \mathcal{M}(\boldsymbol{\mu})\}  )} \min_{w \in W_\ell(z)}  \Big\{ h^{\ge}_{w}(z,b_w,c_w)  +  \sum_{v \in \mathcal{C}(\ell), v \ne w} h_{v}^\star(z,b_v,c_v) \Big\}  \\
	h_\ell(z,2,b,c) 
	&= \eta_\ell d_{\ell}(\mu_\ell,z) + \min_{ ({\bf b},{\bf c})  \in \mathcal{X}_\ell(b ,c- \mathbbm{1}\{\ell\in \mathcal{M}(\boldsymbol{\mu})\}  )} \Big\{ \sum_{v \in \mathcal{C}(\ell)} h_{v}^\star(z,b_v,c_v) \Big\}
\end{align*} 
with 
\begin{align*}
W_{\ell}(z) = \cup_{(b,c) \in \{0,1\}^2} \Big\{ \underset{w \in \mathcal{C}(\ell)}{\arg \min} \Big[ h^{\ge}_{w}(z,b,c) - h_{w}^\star(z,b,c)  \Big] \Big\}
\end{align*}
so that $|W_{\ell}(z)| \le 4$ and where we used the fact that 
\begin{align*}
\underset{w \in \mathcal{C}(\ell)}{\arg \min} &  \Big\{ h^{\ge}_{w}(z,b_w,c_w)  +  \sum_{v \in \mathcal{C}(\ell), v \ne w} h_{v}^\star(z,b_v,c_v) \Big\}   \\
&=  \underset{w \in \mathcal{C}(\ell)}{\arg \min} \Big\{ h^{\ge}_{w}(z,b_w,c_w) - h_{w}^\star(z,b_w,c_w)  + \sum_{v \in \mathcal{C}(\ell)} h_{v}^\star(z,b_v,c_v)  \Big\}  \\
&=  \underset{{w \in \mathcal{C}(\ell)} }{\arg \min}\Big\{ h^{\ge}_{w}(z,b_w,c_w) - h_{w}^\star(z,b_w,c_w)   \Big\} \in W_{\ell}(z) \\
\end{align*}

\subsection{Fast evaluation of recursive equations}\label{sec:Fast evaluation}

We now propose an efficient strategy to compute the minimization problems in the recursive equations. For any function $\phi$, we can minimize $\sum_{v \in \mathcal{C}(\ell)} \phi_v(b_v,c_v)$ over $({\bf b},{\bf c}) \in \mathcal{X}_\ell(s_1,s_2)$ in time and memory $O( |\mathcal{C}(\ell)|)$ for any $(s_1,s_2) \in \{0,1\}^2$ using the following strategy. If $(s_1,s_2) = (0,0)$, then trivially
\begin{align*}
	\min_{ ({\bf b},{\bf c}) \in \mathcal{X}_\ell(0,0)} \Big\{  \sum_{v \in \mathcal{C}(\ell)} \phi_v(b_v,c_v) \Big\}
  =\sum_{v \in \mathcal{C}(\ell)} \phi_v(0,0)
\end{align*}
If $(s_1,s_2) = (1,0)$, 
\begin{align*}
	\min_{ ({\bf b},{\bf c}) \in \mathcal{X}_\ell(1,0)} \Big\{  \sum_{v \in \mathcal{C}(\ell)} \phi_v(b_v,c_v)  \Big\}
	&= \min_{ {\bf b} \in \{0,1\}^{\mathcal{C}(\ell)}: \sum_{v \in \mathcal{C}(\ell)}b_v=1} \Big\{ \sum_{v \in \mathcal{C}(\ell)} \phi_v(b_v,0)  \Big\} \\ 
	&= \min_{ w \in \mathcal{C}(\ell) } \Big\{  \phi_w(1,0) - \phi_w(0,0) \Big\}  + \sum_{v \in \mathcal{C}(\ell)} \phi_v(0,0) 
\end{align*}
and by symmetry, if $(s_1,s_2) = (0,1)$,
\begin{align*}
	\min_{ ({\bf b},{\bf c}) \in \mathcal{X}_\ell(0,1)} \Big\{  \sum_{v \in \mathcal{C}(\ell)} \phi_v(b_v,c_v)  \Big\}
	&= \min_{ w \in \mathcal{C}(\ell) } \Big\{  \phi_w(0,1) - \phi_w(0,0) \Big\}  + \sum_{v \in \mathcal{C}(\ell)} \phi_v(0,0)
\end{align*}
and finally, if $(s_1,s_2) = (1,1)$,
\begin{align*}
	\min_{ ({\bf b},{\bf c}) \in \mathcal{X}_\ell(1,1)} \Big\{  \sum_{v \in \mathcal{C}(\ell)} \phi_v(b_v,c_v)  \Big\}
	&= \min( \Delta,\Delta') + \sum_{v \in \mathcal{C}(\ell)} \phi_v(0,0)
\end{align*}
with 
\begin{align*}
	\Delta &= \min_{ w \in \mathcal{C}(\ell) } \Big\{  \phi_w(1,1) - \phi_w(0,0) \Big\} \\
	\Delta' &= \min_{ w_1,w_2 \in \mathcal{C}(\ell)^2, w_1 \ne w_2 } \Big\{  \phi_{w_1}(1,0) - \phi_{w_1}(0,0) + \phi_{w_2}(0,1) - \phi_{w_2}(0,0)  \Big\}  
\end{align*}
In all cases, one can compute the minimization in time $O( |\mathcal{C}(\ell)|)$.  In particular $\Delta'$ can be computed by realizing that the only pairs $(w_1,w_2)$ that minimize the expression must be either the first or second smallest entry of $\phi_v(1,0)-\phi_v(0,0)$ and $\phi_v(0,1)-\phi_v(0,0)$. We recall that, finding the first and second smallest entry of a vector can be done in time proportional to the vector size, by inspecting each entry at most twice.

\subsection{Retrieving the optimal solution}\label{sec:Retrieving the optimal solution}

Once the value of $h_\ell(z,a,b,c)$ has been determined, we can retrieve the optimal solution $\boldsymbol{\lambda}^\star(k^\star(\boldsymbol{\mu}), \mu^\star,0,1,1)$ by retrieving the value of the flags $(z_{\ell}^\star, a_{\ell}^\star, b_{\ell}^\star, c_{\ell}^\star)$ for all $\ell$. Consider a node $\ell \ne k^\star(\boldsymbol{\mu})$, once its flags $(z_{\ell}^\star, a_{\ell}^\star, b_{\ell}^\star, c_{\ell}^\star)$ have been computed, we compute the flags of its children as follows. \\ 
(i) If $a_{\ell}^\star = 0$, then
\begin{align*}
	(b_v^\star,c_v^\star)_{v \in \mathcal{C}(\ell)} \in & \underset{ ({\bf b},{\bf c}) \in \mathcal{X}_\ell(b - \mathbbm{1}\{\ell\not\in \mathcal{M}(\boldsymbol{\mu}) \},c)}{\arg \min}  \Big\{  \sum_{v \in \mathcal{C}(\ell)} h^{<}_v(z_\ell^\star,b_v,c_v)  \Big\} \quad  \text{ and }  \\
    z_v^\star &\in \arg\min_{z < z^\star_\ell} h_v(z,2,b_v^\star,c_v^\star).
\end{align*} 
(ii) If $a_{\ell}^\star = 1$, then
\begin{align*}
    (b_v^\star&,c_v^\star)_{v \in \mathcal{C}(\ell)} \in \underset{ ({\bf b},{\bf c})  \in \mathcal{X}_\ell(b ,c- \mathbbm{1}\{\ell\in \mathcal{M}(\boldsymbol{\mu})\}  )}{\arg \min}  \min_{w \in W_\ell(z^\star_{\ell})}  \Big\{ h^{\ge}_{w}(z^\star_{\ell},b_w,c_w)  +  \sum_{v \in \mathcal{C}(\ell), v \ne w} h_{v}^\star(z^\star_{\ell},b_v,c_v) \Big\},  \\
	z_v^\star &=  
	\begin{cases} 
		\arg\min_{z > z^\star_\ell} \min_{a \in \{0,1\}} h_v(z,a,b_v^\star,c_v^\star)  & \text{ if  } v = w_\ell^\star \text{ and  }  h^{>}_v(z_\ell^\star,b_v^\star,  c_v^\star)  =  h^{\ge}_v(z_\ell^\star,b_v^\star,  c_v^\star) \\ 
		z^\star_{\ell}   & \text{ if  } v = w_\ell^\star \text{ and  }   h_v(z_\ell^\star,2,b_v^\star,  c_v^\star) =  h^{\ge}_v(z_\ell^\star,b_v^\star, c_v^\star) \\ 
		\arg\min_{z > z^\star_\ell} \min_{a \in \{0,1\}}  h_v(z,a,b_v^\star,c_v^\star) & \text{ if  } v \ne w_\ell^\star \text{ and } h^{>}_v(z_\ell^\star,b_v^\star,  c_v^\star) = h^{\star}_v(z_\ell^\star,b_v^\star,c_v^\star) \\
		z^\star_{\ell} & \text{ if  } v \ne w_\ell^\star \text{ and } h_v(z_\ell^\star,2,b_v^\star,  c_v^\star) = h^{\star}_v(z_\ell^\star,b_v^\star,c_v^\star) \\
		\arg\min_{z < z^\star_\ell} h_v(z,2,b_v^\star,c_v^\star) & \text{ if  } v \ne w_\ell^\star \text{ and } h^{<}_v(z_\ell^\star,b_v^\star, c_v^\star) = h^{\star}_v(z_\ell^\star,b_v^\star,c_v^\star)
		\end{cases} 
\end{align*}
where 
\begin{align*}
	w_\ell^\star \in \underset{w \in W_{\ell}(z^\star_{\ell})}{\arg\min} \Big\{ h^{\ge}_{w}(z^\star_\ell,b_w^\star,c_w^\star) - h_{w}^\star(z^\star_\ell,b_w^\star,c_w^\star) \Big\} 
\end{align*}
(iii) If $a_{\ell}^\star = 2$, then
\begin{align*}
	(b_v^\star&,c_v^\star)_{v \in \mathcal{C}(\ell)} \in \underset{ ( {\bf b},{\bf c}) \in \mathcal{X}_\ell(b,c- 1(\ell \in \mathcal{M}(\boldsymbol{\mu}) ) ) }{\arg\min} \Big\{  \sum_{v \in \mathcal{C}(\ell)} h_{v}^\star(z_\ell^\star,b_v,c_v) \Big\} \quad \text{ and } \\
	z_v^\star &=  	
	\begin{cases} 
		\arg\min_{z > z^\star_\ell} \min_{a \in \{0,1\}}  h_v(z,a,b_v^\star,c_v^\star) & \text{ if  } h^{>}_v(z_\ell^\star,b_v^\star,  c_v^\star) = h^{\star}_v(z_\ell^\star,b_v^\star,c_v^\star) \\
		z^\star_{\ell} & \text{ if  } h_v(z_\ell^\star,2,b_v^\star,  c_v^\star) = h^{\star}_v(z_\ell^\star,b_v^\star,c_v^\star) \\
		\arg\min_{z < z^\star_\ell} h_v(z,2,b_v^\star,c_v^\star) & \text{ if  } h^{<}_v(z_\ell^\star,b_v^\star, c_v^\star) = h^{\star}_v(z_\ell^\star,b_v^\star,c_v^\star)
		\end{cases} 
\end{align*}
Finally,  $a^\star_v = 2 $ if $ z^\star_\ell \ge z^\star_v $ and $a^\star_v \in \arg\min_{a \in \{0,1\} } h_v(z_v^\star,a,b_v^\star, c_v^\star)$ otherwise. 
In practice, these argmins can be stored during the forward pass (where we compute each $h_{\ell}(z,a,b,c)$) and do not need to be recomputed. 

\subsection{Computational complexity}\label{ssec:Computational complexity}
Recall that $z \in D(\boldsymbol{\mu},n)$, which is a grid of size $n$.
If $h_v(z,a,b,c)$ has been computed for all $(z,a,b,c)$ and all $v \in \mathcal{C}(\ell)$, then one can compute  $h^>_v(z,b,c)$, $h^=_v(z,b,c)$, $h^<_v(z,b,c)$, $h^\ge_v(z,b,c)$, $h^\star_v(z,b,c)$ for all $(z,a,b,c)$ and all $v \in \mathcal{C}(\ell)$ in time $O(n |\mathcal{C}(\ell)| )$. 
In turn, using the fast evaluation strategy explained above, one can compute $h_\ell(z,a,b,c)$ for all $(z,a,b,c)$ in time $O(n |\mathcal{C}(\ell)|)$. Therefore, the total time and memory required to solve \ref{eq:PGsquare} with this strategy is $O(n \sum_{\ell}  |\mathcal{C}(\ell)|) = O(n K)$ since $G$ is a tree.

\subsection{Runtime comparison in practice}\label{sec:runtime_DP}

The improved dynamic program (DP) runs in time $O(Kn)$, compared to $O(K^2mn)$ for the procedure described in Section \ref{subsec:dp_computing}, which we will refer to as the original DP in what follows. In practice, however, the improved DP may run slower than the original DP on tree instances with moderate values of $K$. To clarify when one should use each procedure, we report their average runtime over $50$ trials on specific tree instances, with $\boldsymbol{\eta}$ uniformly sampled at random in $[0,1]^K$, $\boldsymbol{\mu}$ generated as in Section \ref{sec:num_exp} with $m=\vert \mathcal{M}(\boldsymbol{\mu}) \vert=3$, $\sigma=2$, and a discretization parameter of $n=100$. We pick a Gaussian divergence: $d_k(\lambda_k,\mu_k)=(\lambda_k-\mu_k)^2/2$ for each $k \in [K].$ We perform two experiments:
\begin{enumerate}[label=(\roman*)]
\item We measure runtime on random trees as the number of nodes $K$ increases, with $K \in \{100, 400, 700, 1000, 1300, 1600,1900\}$.
\item We measure runtime on balanced $d$-ary trees (i.e., each node has $d$ children) of a fixed height $h=3$. We vary the branching factor $d \in \{2, 4, 6, 8, 10, 12\}$, which implicitly varies the number of nodes $K$ from $15$ to $1885$. 
\end{enumerate} 

The results are reported in Figure \ref{fig:runtime_DPs}, along with $95 \%$ confidence intervals using bootstrap. The left panel shows that for random trees, the original DP is faster on average up to $K = 1000$, after which the improved DP generally runs faster. The difference is more pronounced in the right panel, with the average runtime of the original DP increasing much faster with the branching factor $d$ of the balanced tree.

Notably, the original DP exhibits higher runtime variance. This may be explained by an implementation trick we applied to reduce its runtime: specifically, before running the complete dynamic programming subroutine for each $k \notin \mathcal{N}(\mu)$, we check whether $\eta_kd_k(\mu_k,\mu^\star) \ge \min_{\ell \in \mathcal{N}({\mu})} \eta_{\ell}d_\ell(\mu_\ell,\mu^\star)$. If this holds, the subroutine will not find a parameter with a smaller value than the trivial solution of Proposition \ref{prop:graves-lai-trivial-case}, hence it is skipped. The number of calls to the subroutine is therefore highly instance-dependent.
\begin{figure}[H]
    \centering
    \includegraphics[scale=0.4]{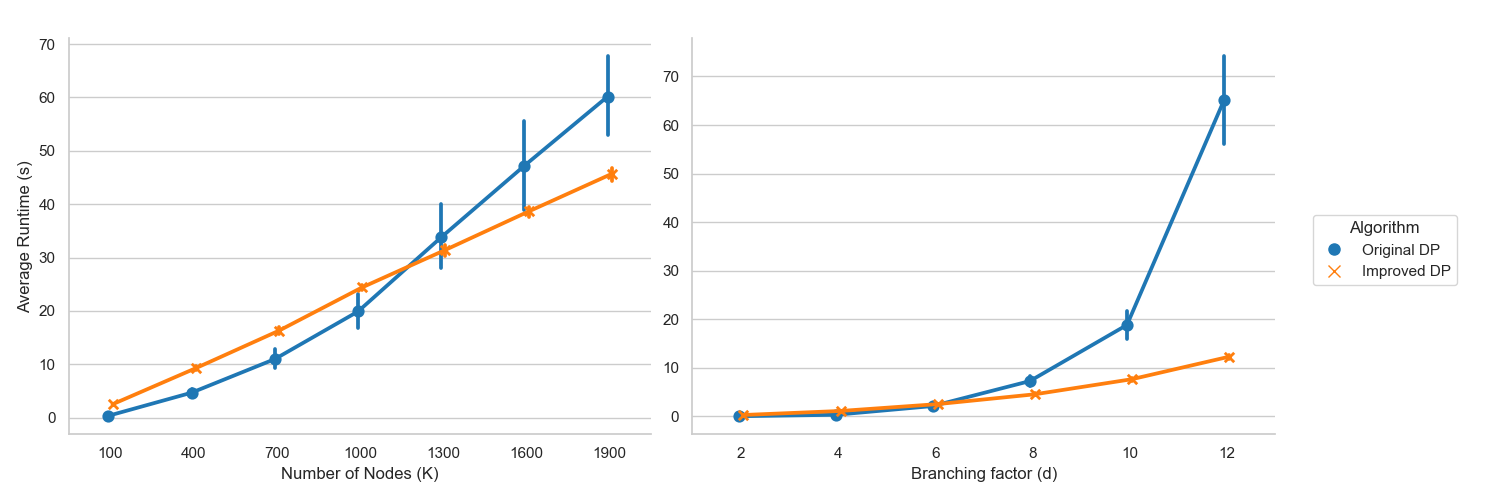}
    \caption{Average runtime of each dynamic program with respect to the number of nodes $K$ or the branching factor $d$.}
    \label{fig:runtime_DPs}
 \end{figure}

\end{document}